\DeclareMathOperator*{\argmax}{arg\,max}
\DeclareMathOperator*{\argmin}{arg\,min}
\DeclareMathAlphabet{\mathbbold}{U}{bbold}{m}{n}
\let\Algorithm\algorithm
\renewcommand\algorithm[1][]{\Algorithm[#1]\setstretch{1.2}}
\newtheorem{assumption}{Assumption}
\newtheorem{theorem}{Theorem}
\newtheorem{example}{Example}
\newtheorem{remark}{Remark}
\newtheorem{proposition}{Proposition}
\newtheorem{definition}{Definition}
\newtheorem{lemma}{Lemma}
\newtheorem{setting}{Setting}
\renewcommand\thmcontinues[1]{Continued}
\newcommand{\isd}{ISD}
\begin{document}
\title{Invariant Subspace Decomposition}
    \author{Margherita Lazzaretto\textsuperscript{1}\thanks{E-mail: \texttt{mala@math.ku.dk}}%\footnotemark
, Jonas Peters\textsuperscript{2} \and Niklas Pfister\textsuperscript{1}\thanks{NP is now at Lakera in Z\"urich but most of this work was done while NP was at the University of Copenhagen.} \\
       \addr \textsuperscript{1}University of Copenhagen, \textsuperscript{2}ETH Z\"{u}rich}

\maketitle

\begin{abstract}%
We consider the task of predicting a response $Y$ from a set of covariates $X$ in settings where the conditional distribution of $Y$ given $X$ changes over time. For this to be feasible, assumptions on how the conditional distribution changes over time are required. Existing approaches assume, for example, that changes occur smoothly over time so that short-term prediction using only the recent past becomes feasible. To additionally exploit observations further in the past, we propose a novel invariance-based framework for linear conditionals, called Invariant Subspace Decomposition (\isd), that splits the conditional distribution into a time-invariant and a residual time-dependent component. 
    As we show, this decomposition can be utilized both for zero-shot and time-adaptation prediction tasks, that is, settings where either no or a small amount of training data is available at the time points we want to predict $Y$ at, respectively.
   We propose a practical estimation procedure, which automatically infers the decomposition using tools from approximate joint matrix diagonalization. Furthermore, we provide finite sample guarantees for the proposed estimator and demonstrate empirically that it indeed improves on approaches that do not use the additional invariant structure.
\end{abstract}
\begin{keywords}
    invariance, distribution shift, time adaptation, zero-shot learning, joint block diagonalization.
\end{keywords}

\section{Introduction} \label{sec:intro}

Many commonly studied systems evolve with time, giving rise to heterogeneous data. Indeed, changes over time in the data generating process lead to shifts in the observed data distribution, and make it in general impossible to find a fixed model that consistently describes the system over time.

In this work, we analyze the problem of estimating the relation between a response $Y_t$ and a set of covariates (or predictors) $X_t$, $t\in\mathbb{N}$, both of which are observed over some period of time, with the goal of predicting an unobserved response, when new observations of the covariates become available.
Two types of distribution shifts across time that can arise are (i) variations in the mechanism relating the response to the covariates and (ii) variations in the covariate distribution. Under distribution shifts, successfully predicting an unobserved response from covariates requires assumptions on how the underlying data generating model changes over time.

In regression settings, varying-coefficients models \citep[e.g.,][]{hastie1993varying, fan2008statistical} are a common approach to deal with dynamic system behaviours related to changes in the functional relationship between a response and some covariates. In these works, the model coefficients are usually assumed to change smoothly, and smoothing methods are used in their estimation.
Another approach for online estimation of smoothly changing parameters uses state-space models and filtering solutions \citep[see, for example,][]{durbin2012time}.
In general, smooth changes or other kinds of structured variations such as, for example, step-wise changes, allow us to learn a regression function using previous time points from the recent past. 
To exploit information further in the past, one can look for invariant patterns that persist through time. In this spirit, \citet{meinshausen2015maximin} define the maximin effect as a worst case optimal model that maintains good predictive properties at all observed times. 

Distribution shifts in time-series models are conceptually similar to distribution shifts across domains. In both cases a key problem is to find invariant parts of the observed data distribution and transfer it to the new domain or time-point. 
Unsupervised domain adaptation methods aim to predict an unobserved response in the target domain: for the problem to be tractable, they rely, for example, on the invariance of the conditional distribution of the response given the covariates \citep{sun2017correlation, zhao2019learning} or on independently changing factors of the joint distribution of the covariates and the response \citep{zhang2015multi, stojanov2021domain}. Solutions to the unsupervised domain adaptation problem are often based on aligning the source and target domains by minimizing the discrepancy either between the covariates' distributions \citep{zhang2015multi} or between the covariates' second order moments \citep{sun2017correlation}, or by finding low-dimensional invariant transformations of the variables whose distributions match in the domains of interest \citep{zhao2019learning, stojanov2021domain}.
Other unsupervised domain adaptation approaches such as the one proposed by \citet{bousmalis2016domain} rely on learning disentangled representations where disentangled mean that changes in one representation do not affect the remaining ones. %\rmn{The idea of learning disentangled representations}
This approach is also explored in non-stationary settings, when changes in the covariate distribution happen over time instead of across different domains, for example in the works by \citet{yao2022temporally, li2024and}. The goal of these works is not prediction, but rather the identification of generalizable latent states that could be used for arbitrary downstream tasks.

The field of causality widely explores the concept of invariant and independently changing mechanisms, too. In causal models, changes in the covariate distribution are modeled as interventions, and different interventional settings represent different environments. In this context, \citet{peters2016causal} propose to look for a set of stable causal predictors, that is, a set of covariates for which the conditional distribution of the response given such covariates remains invariant across different environments. 
The same idea is extended by \citet{pfister2019invariant} to a sequential setting in which the different environments are implicitly generated by changes of the covariate distribution over time.
Such invariances can then be used for prediction in previously unseen environments, too \citep[e.g.,][]{Rojas2016, Magliacane2018, Pfister2019stab}.
Other works on causal discovery from heterogeneous or nonstationary data, such as the ones by \citet{huang2020causal, gunther2023causal}, study settings in which the skeleton of the causal graph remains invariant through different contexts or time points, but the causal mechanisms are also allowed to change. Their goal is then to identify the causal graph on observed covariates. To detect whether a mechanism changes, the context or time is included as an additional variable in the graph. As the context or time can be assumed to be exogenous, this often leads to additional identification of causal edges in the graphs.
\citet{feng2022factored} develop and apply similar ideas in a reinforcement learning setting. Their interest is in learning the causal graph and improving the efficiency of non-stationarity adaptation by disentangling the system variations as latent parameters.

In this work, we build on invariance ideas both from the maximin framework proposed by \citet{meinshausen2015maximin} and from the covariates shifts literature, and exploit them for time adaptation. We propose an invariance-based framework, which we call invariant subspace decomposition (\isd), to estimate time-varying regression models in which both the covariate distribution and their relationship with a response can change. In particular, we consider a sequence of independent random vectors $(X_t, Y_t)_{t\in\mathbb{N}}\subseteq \mathbb{R}^p\times\mathbb{R}$ satisfying for all time points $t\in\mathbb{N}$ a linear model of the form
\begin{equation}
    Y_t = X_t^{\top}\gamma_{0,t} + \epsilon_t
    \label{eq:model_def}
\end{equation}
with $\mathbb{E}[\epsilon_t\mid X_t] =0$, where $\gamma_{0,t}$ is the (unknown) \emph{true time-varying parameter}. We allow the covariance matrices $\operatorname{Var}(X_t)$ to change over time, assuming that they are approximately constant in small time windows. Regarding changes in $\gamma_{0,t}$, we only need to assume smoothness during test time (if there is more than one test point)---we provide more details later in Remark~\ref{rem:time_changes}.
Having observed the $X_t$ and $Y_t$ up until some time point $n$, our goal is to learn $\gamma_{0,t^*}$ for some $t^* > n$, which we  then use to predict $Y_{t^*}$ from $X_{t^*}$. 
We propose to do so by considering the explained variance, which, for all $t$, is given for some function $f$ by $\operatorname{\Delta Var}_t(f)\coloneqq \operatorname{Var}(Y_t) - \operatorname{Var}(Y_t-f(X_t))$. Using the explained variance as the objective function provides an intuitive evaluation of the predictive quality of a function $f$: negative explained variance indicates in particular that using $f$ for prediction is harmful, in that it is worse than the best constant function. Under model~\eqref{eq:model_def}, $f$ is a linear function fully characterized by a parameter $\beta\in\mathbb{R}^p$, i.e., $f(X_t)=X_t^{\top}\beta$, and the true time-varying parameter can always be expressed as 
\begin{equation}
\label{eq:gamma0t}
\gamma_{0,t}=\argmax_{\beta\in\mathbb{R}^p}\operatorname{\Delta Var}_t(\beta).
\end{equation}
Key to \isd{} is the decomposition of the explained variance maximization into a time-invariant and a time-dependent part: we show in Theorem~\ref{thm:subspace_partition} that, under some assumptions, 
for all $t\in\mathbb{N}$, 
$\gamma_{0,t}$ can be expressed as
\begin{equation}
    \label{eq:lin_subspace_separation}
    \gamma_{0,t}=
    \underbrace{\argmax_{\beta\in\mathcal{S}^{\operatorname{inv}}}\overline{\operatorname{\Delta Var}}(\beta)}_{=:\beta^{\operatorname{inv}}} + 
    \underbrace{\argmax_{\beta\in\mathcal{S}^{\operatorname{res}}}\operatorname{\Delta Var}_t(\beta)}_{=:\delta^{\operatorname{res}}_t},
\end{equation}
where $\overline{\operatorname{\Delta Var}}(\beta)\coloneqq \frac{1}{n}\sum_{t=1}^n\operatorname{\Delta Var}_t(\beta)$, and $\mathcal{S}^{\operatorname{inv}},\mathcal{S}^{\operatorname{res}}\subseteq\mathbb{R}^p$ are two orthogonal linear subspaces, with $
\mathcal{S}^{\operatorname{inv}} \oplus
\mathcal{S}^{\operatorname{res}}= \mathbb{R}^p$,
assumed to be uniquely determined by the distribution of $(X_1, Y_1),\ldots,(X_n,Y_n)$. 
The decomposition of $\mathbb{R}^p$ into $\mathcal{S}^{\operatorname{inv}}$ and $\mathcal{S}^{\operatorname{res}}$ from observed data is achieved by exploiting ideas from independent subspace analysis \citep[e.g.,][]{gutch2012uniqueness}. 
Unlike works on latent representations, we do not reduce the overall dimensionality of the problem but partition the observed space into two lower dimensional subspaces, $\mathcal{S}^{\operatorname{inv}}$ and $\mathcal{S}^{\operatorname{res}}$. This in particular implies that the two sub-problems in \eqref{eq:lin_subspace_separation} each have less degrees of freedom than the original one. The first sub-problem can be solved using all available heterogeneous observations, which we call \emph{historical data}: its solution $\beta^{\operatorname{inv}}$ is a time-invariant linear parameter that partially describes the dependence of $Y$ on $X$ at all times.
The second sub-problem is a time adaptation problem that tunes the invariant component to a time point of interest: its solution $\delta^{\operatorname{res}}_t$ explains the residuals $Y_t - X_t^{\top}\beta^{\operatorname{inv}}$; the sum $\beta^{\operatorname{inv}}+\delta_t^{\operatorname{res}}$ gives an estimator for the time-varying linear parameter of interest $\gamma_{0,t}$.
In order to estimate the residual component $\delta_t^{\operatorname{res}}$, we assume the model is approximately stationary in a small time window preceding
$t$  and use this local subset of the data, which we call \emph{adaption data}, for estimation.
We distinguish two tasks: (i) the \emph{zero-shot task}, where adaption data is not available and we approximate $\gamma_{0,t}$ by $\beta^{\operatorname{inv}}$ and (ii) the \emph{time-adaption task}, where adaption data is available and we solve both sub-problems to approximate $\gamma_{0,t}$ by $\beta^{\operatorname{inv}}+\delta_t^{\operatorname{res}}$.
A two-dimensional example of $\gamma_{0,t}$ and its \isd{} estimates is shown in Figure~\ref{fig:2d_ex}. The same example is presented again in more detail in Example~\ref{ex:running_ex_2d} below and used as a running example throughout the paper.
 
 \begin{figure}[t]
    \centering
    \includegraphics[width=\textwidth]{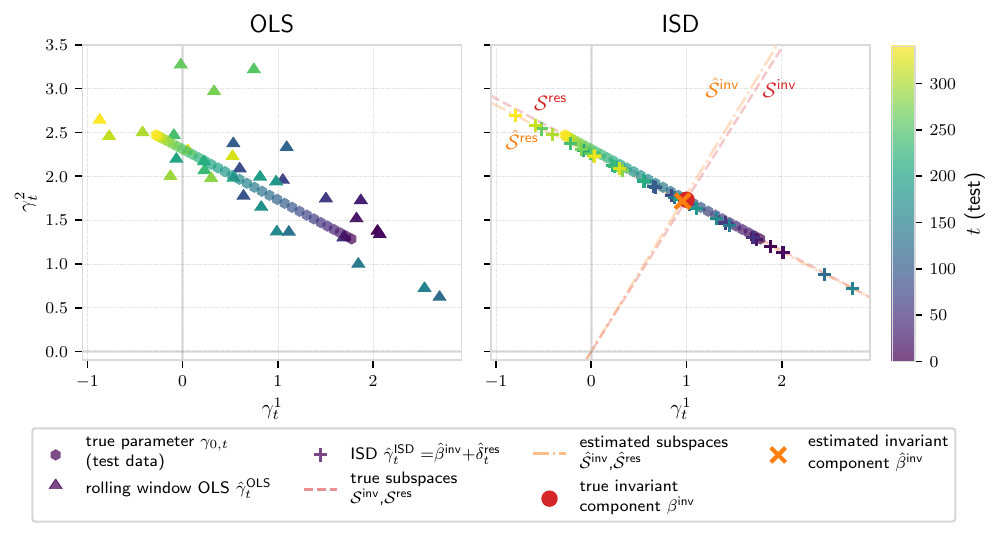}
    \caption{Example of two-dimensional true parameter $\gamma_{0,t}$ varying on a one-dimensional subspace of $\mathbb{R}^2$, and its estimates using \isd{} (right) compared to rolling window OLS (left).
Time is visually encoded using a color map. (Left) True parameter $\gamma_{0,t}$ (hexagons) on $350$ test points and OLS estimates $\hat{\gamma}^{\operatorname{OLS}}_t$ based on rolling windows of size $16$. (Right) Same test data and true parameters $\gamma_{0,t}$, but now we additionally use $1000$ prior time-points as historical data (not shown) to estimate the decomposition of $\mathbb{R}^2$ into the orthogonal subspaces $\mathcal{S}^{\operatorname{inv}}$ and $\mathcal{S}^{\operatorname{res}}$ (dashed lines). Next, we estimate $\beta^{\operatorname{inv}}$ using the historical data and $\hat{\mathcal{S}}^{\operatorname{inv}}$. Then, using the same rolling windows as in the left plot as adaption data, we estimate $\delta^{\operatorname{res}}_t$ using $\hat{\mathcal{S}}^{\operatorname{res}}$. The \isd{} estimates are then given by $\hat{\gamma}_t^{\isd}=\hat{\beta}^{\operatorname{inv}}+\hat{\delta}^{\operatorname{res}}_t$.
All details on the generative model are provided in Example~\ref{ex:ex:running_ex_2d}. 
The subspaces $\mathcal{S}^{\operatorname{inv}}$ and $\mathcal{S}^{\operatorname{res}}$ do not need to be axis aligned, so \isd{} is applicable even in cases where the conditional of $Y_t$ given $X_t$ and all conditionals of $Y_t$ given subsets of $X_t$ vary over time.}
\label{fig:2d_ex} 
\end{figure}

The fundamental assumption we make to apply the \isd{} framework to new data (Assumption~\ref{ass:generalization}) is that the decomposition inferred from available observations generalizes to unseen time points. This guarantees that the estimated invariant component can be meaningfully used for prediction at all new time points, either directly or as part of a two-step estimation that is fine-tuned by solving the time adaptation sub-problem.

The ISD framework allows us to improve on naive methods that directly maximize the explained variance on $\mathbb{R}^p$ using only the most recent available observations and on methods focusing on invariance across environments or time points such as the maximin by \citet{meinshausen2015maximin}, which do not account for time-adaptation. We show in particular in Theorem~\ref{thm:empirical_explained_variance} that isolating an invariant component and reducing the dimensionality of the adaptation problem guarantees lower prediction error or, equivalently, higher explained variance compared to naive methods. 
An example is provided for a simulated setting in Figure~\ref{fig:xplvar_prediction_comparison}, which shows on the left the average explained variance by the invariant component computed at training time on historical data, and on the right the cumulative explained variance by the invariant (zero-shot task) and the adapted invariant component (time-adaptation task) at testing time when new observations become available. For the zero-shot task, we compare the \isd{} invariant component with the standard OLS solution, computed on all training observations, and with the maximin effect ($\hat{\beta}^{\operatorname{mm}}$) by~\citet{meinshausen2015maximin}, which maximizes the worst case explained variance over the available observations. For the time-adaptation task, we compare the ISD  with the standard OLS solution computed on a rolling window over the latest new observations. For both tasks, the \isd{} estimates achieve higher cumulative explained variance at new time-points.

\begin{figure}[t]
    \centering
    \includegraphics[width=\textwidth]{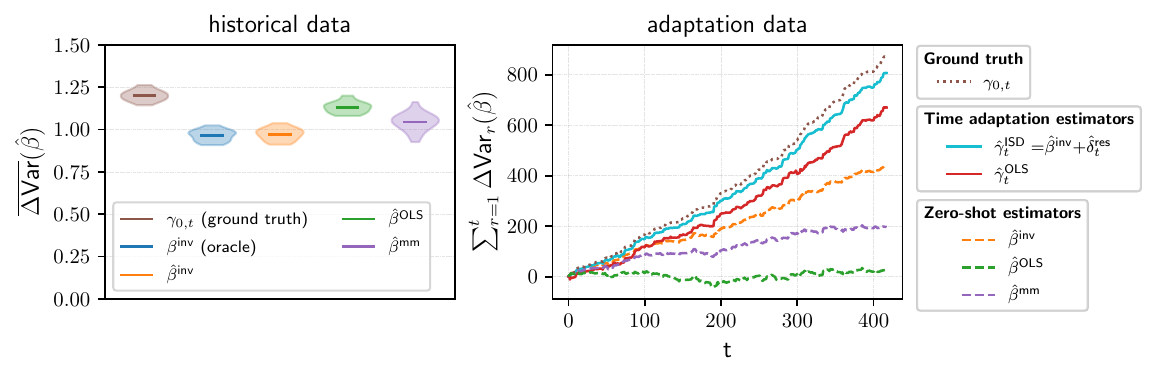} 
    \caption{
    For the data-generating model in Section~\ref{sec:simulations_time_adaptation}, we plot (left) the
    average explained variance (distribution over 20 runs) obtained at training time (historical data) and (right) the cumulative explained variance obtained testing time (adaptation data) (in one of the 20 runs); in this example, the time-varying components in the historical and adaptation data have disjoint support.
    The example considers $p=10$-dimensional predictors and an invariant component of dimension $7$. As baselines, we use (i) the true time-varying parameter $\gamma_{0,t}$, which maximizes the explained variance at all observed time points $t$, and (ii) the oracle invariant component $\beta^{\operatorname{inv}}$.
$6000$ 
historical observations are used to estimate: 
    (iii) the invariant component $\hat{\beta}^{\operatorname{inv}}$ of the \isd{} framework, (iv) the OLS solution $\hat{\beta}^{\operatorname{OLS}}$, (v) the maximin effect $\hat{\beta}^{\operatorname{mm}}$. Starting from $t=0$ after the observed history, windows of  length $3p$ are used to estimate: (vi) the adaptation parameter $\hat{\delta}^{\operatorname{res}}_t$ for $\hat{\beta}^{\operatorname{inv}}$ to obtain the \isd{} estimate $\hat{\gamma}^{\isd}$ and (vii) the rolling window OLS solution $\hat{\gamma}^{\operatorname{OLS}}_t$. While at training time on historical data the \isd{} invariant component $\hat{\beta}^{\operatorname{inv}}$ is the most conservative, with the lowest average explained variance, after a distribution shift (adaptation data) the same component can explain higher variance than other methods based on historical data only ($\hat{\beta}^{\operatorname{OLS}}, \hat{\beta}^{\operatorname{mm}}$), and can be tuned to new time points to improve on estimators based on adaptation data only ($\hat{\gamma}^{\operatorname{OLS}}_t$).
    }
    \label{fig:xplvar_prediction_comparison}
\end{figure}

The remainder of the paper is structured as follows. In Section~\ref{sec:time_inv_sep} we introduce the \isd{} framework, starting from the identification of the invariant and residual subspaces for orthogonal covariates transformation and showing the explained variance separation \eqref{eq:lin_subspace_separation} in Theorem~\ref{thm:subspace_partition}. In Sections~\ref{sec:time_invariant_subspace_effect} and \ref{sec:time_adaptation} we define population estimators for the invariant and residual components, solutions to the two sub-problems in \eqref{eq:lin_subspace_separation}.
We then describe, in Section~\ref{sec:task_description}, the two tasks that \isd{} solves, namely zero-shot prediction and time-adaptation, and provide a characterization of the invariant component as a worst-case optimal parameter.
 In Section~\ref{sec:estimation} we propose an estimation method for \isd, and provide finite sample generalization guarantees for the proposed estimator. 
 Finally, in Section~\ref{sec:simulations}, we illustrate \isd{} based on numerical experiments, both on simulated and on real world data, and validate the theoretical results presented in the paper.

\section{Invariant subspace decomposition}
\label{sec:time_inv_sep}
We formalize the setup described in the introduction in the following setting.
\begin{setting}
    Let $(X_t, Y_t)_{t\in\mathbb{N}}\subseteq \mathbb{R}^p\times\mathbb{R}^1$ be a sequence of independent random vectors satisfying for all $t\in\mathbb{N}$ a linear model as in \eqref{eq:model_def}, with $\gamma_{0,t}\in\mathbb{R}^p$ and  $\mathbb{E}[\epsilon_t\mid X_t]=0$. Assume that for all $t\in\mathbb{N}$ the covariance matrix of the predictors $\Sigma_t\coloneqq \operatorname{Var}(X_t)$ is strictly positive definite. Moreover, for all $n\in\mathbb{N}$, let $[n]\coloneqq\{1,\dots,n\}$ and assume we observe $n$ observations $(X_t, Y_t)_{t\in [n]}$ from model~\eqref{eq:model_def}, which we call \emph{historical data}. Additionally, let $\mathcal{I}^{\operatorname{ad}}\subseteq\mathbb{N}$ be an interval of consecutive time points with $m:=|\mathcal{I}^{\operatorname{ad}}|>p$ and $\min_{t\in\mathcal{I}^{\operatorname{ad}}}t > n$. Assume we observe a second set of observations $(X_t, Y_t)_{t\in \mathcal{I}^{\operatorname{ad}}}$ from the same model but succeeding the historical data, which we call \emph{adaptation data}.  Finally, denote by $t^*>\max_{t\in\mathcal{I}^{\operatorname{ad}}}$ a time point of interest that occurs after the adaptation data and assume that, for all $t\in\mathcal{I}^{\operatorname{ad}}\cup\{t^*\}$,  the quantities $\gamma_{0,t}$, $\Sigma_t$and $\operatorname{Var}(\epsilon_t)$ in model~\eqref{eq:model_def} are constant (in practice, being approximately constant is sufficient).
    \label{set:definition}
\end{setting}
Our goal is to predict $Y_{t^*}$ from $X_{t^*}$. A naive solution is to only consider the adaptation data in $\mathcal{I}^{\operatorname{ad}}$ (see also the analysis in Section~\ref{sec:task_description}): we aim to improve on this approach by additionally exploiting historical data in $[n]$. In particular, depending on whether we only have access to the historical data or we additionally have access to the data in $\mathcal{I}^{\operatorname{ad}}$, we can solve (i) the zero-shot task or (ii) the adaptation task, respectively.
The full setup including the different tasks is visualized in Figure~\ref{fig:datasets}. 
\begin{figure}[b]
\centering
\begin{tikzpicture}[%
  every node/.style={
    font=\scriptsize,
    text height=1ex,
    text depth=.25ex,
  }]
  
  % draw horizontal line
  \draw (0,0) -- (1.5,0);
  \node at (2,0) {$\ldots$};
  \draw (2.5,0) -- (5.5,0);
  \node at (6,0) {$\ldots$};
  \draw (6.5,0) -- (7.5,0);
  \node at (8,0) {$\ldots$};
  \draw[-latex] (8.5,0) -- (12.5,0);
  
  % draw vertical lines
  \foreach \x in {0,1}{
    \draw (\x cm,3pt) -- (\x cm,0pt);
  }
  \foreach \x in {3,4,5}{
    \draw (\x cm,3pt) -- (\x cm,0pt);
  }
  \foreach \x in {7}{
    \draw (\x cm,3pt) -- (\x cm,0pt);
  }
  \foreach \x in {9,10,11}{
    \draw (\x cm,3pt) -- (\x cm,0pt);
  }

  % numbers on axis
  \node[anchor=north] at (0,0) {\tiny $1$};
  \node[anchor=north] at (1,0) {\tiny $2$};
  \node[anchor=north] at (3,0) {\tiny $n-1$};
  \node[anchor=north] at (4,0) {\tiny $n$};
  \node[anchor=north] at (5,0) {\tiny $n+1$};
  \node[anchor=north] at (7,0) {\tiny $t^*-m$};
  \node[anchor=north] at (9,0) {\tiny $t^*-2$};
  \node[anchor=north] at (10,0) {\tiny $t^*-1$};
  \node[anchor=north] at (11,0) {\tiny $t^*$};
  
  \node[anchor=north] at (12,0) {time};
  
  % historical data
  \fill[red!20] (0,0.25) rectangle (4,2);
  \node at (2, 1.5) {historical data};
  \node at (2, 0.75) {$(X_t, Y_t)_{t\in[n]}$};
  
  % adaptation data
  \fill[green!20] (7,0.25) rectangle (10,2);
  \node at (8.5, 1.5) {adaptation data};
  \node at (8.5, 0.75) {$(X^{\operatorname{ad}}_t, Y^{\operatorname{ad}}_t)_{t\in\mathcal{I}^{\operatorname{ad}}}$};

  % prediction point
  \draw[dashed, blue] (11, 0) -- (11, 0.75);
  \node at (11, 1.1) {$X_{t^*}$};

  % tasks
  \node[anchor=west] at (0, 3) {Zero-shot task:};
  \node[anchor=west] at (2.5, 3) {observe $(X_t, Y_t)_{t\in[n]}$ and $X_{t^*}$};
  \node[anchor=west] at (9, 3) {predict $Y_{t^*}$};
  \node[anchor=west] at (0, 2.5) {Adaptation task:};
  \node[anchor=west] at (2.5, 2.5) {observe $(X_t, Y_t)_{t\in[n]}$, $(X^{\operatorname{ad}}_t, Y^{\operatorname{ad}}_t)_{t\in\mathcal{I}^{\operatorname{ad}}}$ and $X_{t^*}$};
  \node[anchor=west] at (9, 2.5) {predict $Y_{t^*}$};
\end{tikzpicture}
    \caption{
    Illustration of the historical and adaptation data and the zero-shot and adaptation tasks.
    }
    \label{fig:datasets}
\end{figure}
The two tasks correspond to the two sub-problems in \eqref{eq:lin_subspace_separation}:  the solution to (i) is a time-invariant parameter $\beta^{\operatorname{inv}}\in\mathbb{R}^p$, defined formally below in Section~\ref{sec:time_invariant_subspace_effect}; the solution to (ii) is an adaptation parameter $\delta^{\operatorname{res}}_t$ formally defined in Section~\ref{sec:time_adaptation} that satisfies $\beta^{\operatorname{inv}}+\delta^{\operatorname{res}}_t=\gamma_{0,t}$.

We start by defining what a time-invariant parameter is. To do so, we first consider the explained variance for a parameter $\beta\in\mathbb{R}^p$ at time $t\in\mathbb{N}$, defined by
\begin{align*}
    \operatorname{\Delta Var}_t(\beta) \coloneqq \operatorname{Var}(Y_t) - \operatorname{Var}(Y_t - X_t^{\top}\beta).
\end{align*}
Under model \eqref{eq:model_def}, the explained variance can be equivalently expressed as
\begin{align}
    \operatorname{\Delta Var}_t(\beta) & =  2\operatorname{Cov}(Y_t, X_t^{\top}\beta) -  \operatorname{Var}(X_t^{\top}\beta) \nonumber\\
    & =  2\operatorname{Cov}(Y_t-X_t^{\top}\beta, X_t^{\top}\beta) + \operatorname{Var}(X_t^{\top}\beta).
   \label{eq:explained_var}
\end{align}
A desirable property for a non-varying parameter $\beta\in\mathbb{R}^p$ is to guarantee that its explained variance remains non-negative at all time points. We call parameters $\beta\in\mathbb{R}^p$ \emph{never harmful} if, for all $t\in\mathbb{N}$, $\operatorname{\Delta Var}_t(\beta)\ge0$. Intuitively, parameters with this property at least partially explain $Y_t$ at all time points $t\in\mathbb{N}$, and are therefore meaningful to use for prediction.\footnote{Under an assumption similar to Assumption~\ref{ass:generalization} below, the maximin framework by \citet{meinshausen2015maximin} allows us to estimate never harmful parameters: we provide a more detailed comparison between our approach and the maximin in Remark~\ref{rem:maximin}.} 
In this paper, we consider the subset of never harmful parameters $\beta\in\mathbb{R}^p$ that satisfy $\operatorname{\Delta Var}_t(\beta) = \operatorname{Var}(X_t^{\top}\beta)$ or equivalently, using \eqref{eq:explained_var}, $\operatorname{Cov}(Y_t-X_t^{\top}\beta, X_t^{\top}\beta)=0$; 
thus, the explained variance of these parameters reflects changes of $\operatorname{Var}(X_t)$ (over $t$)
but is independent of changes in $\gamma_{0,t}$.
\begin{definition}[time-invariance]
\label{def:time_invariance}
    We call a parameter $\beta\in\mathbb{R}^p$ \emph{time-invariant (over $[n]$)} if for all $t\in[n]$,
    \begin{equation}
    \label{eq:invariant_constraint}
         \operatorname{Cov}(Y_t-X_t^{\top}\beta, X_t^{\top}\beta)=0.
    \end{equation}
\end{definition}
We use this definition, in Section~\ref{sec:time_inv_sub}, to define the subspaces $\mathcal{S}^{\operatorname{inv}}$ and $\mathcal{S}^{\operatorname{res}}$ that allow us to obtain the separation of $\gamma_{0,t}$ as in \eqref{eq:lin_subspace_separation}. In particular, as we show in Proposition~\ref{prop:invariance}, the definition of the invariant subspace $\mathcal{S}^{\operatorname{inv}}$ guarantees that the invariant component $\beta^{\operatorname{inv}}$ maximizing the pooled explained variance over $\mathcal{S}^{\operatorname{inv}}$ is always a time-invariant parameter according to Definition~\ref{def:time_invariance}.

While we consider the explained variance here, one can equivalently consider the mean squared error (MSPE).

\begin{remark}[Exchanging explained variance with MSPE]
\label{rem:expl_var_and_mse}
Under model~\eqref{eq:model_def} maximizing the explained variance at time $t\in[n]$ is equivalent to minimizing the MSPE at $t$, $\operatorname{MSPE}_t(\beta)\coloneqq\mathbb{E}[(Y_t - X_t^{\top}\beta)^2]$.  
More precisely, assuming that all variables have mean zero,
\begin{equation*}
   \argmax_{\beta\in\mathbb{R}^p}\operatorname{\Delta Var}_t(\beta)= \argmin_{\beta\in\mathbb{R}^p} \operatorname{Var}(Y_t - X_t^{\top}\beta) = \operatorname{Var}(X_t)^{-1}\operatorname{Cov}(X_t, Y_t)=\argmin_{\beta\in\mathbb{R}^p} \operatorname{MSPE}_t(\beta)
   .
\end{equation*}
\end{remark}

The remaining part of this section is organized as follows. 
Section~\ref{sec:time_inv_sub} explicitly constructs
the spaces $\mathcal{S}^{\operatorname{inv}}$ and $\mathcal{S}^{\operatorname{res}}$ and shows that they can be characterized by joint block diagonalization. In Section~\ref{sec:time_invariant_subspace_effect} we then analyze the time-invariant part of \eqref{eq:lin_subspace_separation}, leading to the optimal time-invariant parameter $\beta^{\operatorname{inv}}$, and show in Proposition~\ref{prop:invariance}~$(iii)$ 
that 
 its solution corresponds to the maximally predictive time-invariant parameter and is an interesting target of inference in its own right.
In Section~\ref{sec:time_adaptation} we analyze the residual part of \eqref{eq:lin_subspace_separation}, leading to the optimal parameter $\delta_t^{\operatorname{res}}$.

\subsection{Invariant and residual subspaces} 
\label{sec:time_inv_sub}

We now construct the two linear subspaces 
$\mathcal{S}^{\operatorname{inv}}$
and
$\mathcal{S}^{\operatorname{res}}$
that allow us to express the true time-varying parameter as the solution to two separate optimizations as in \eqref{eq:lin_subspace_separation}. For a linear subspace $\mathcal{S}\subseteq\mathbb{R}^p$, denote by $\Pi_{\mathcal{S}}$$\in\mathbb{R}^{p\times p}$ the orthogonal projection matrix from $\mathbb{R}^p$ onto $\mathcal{S}$, that is, a symmetric matrix such that $\Pi_{\mathcal{S}}^2=\Pi_{\mathcal{S}}$ and, for all vectors $v\in\mathbb{R}^p$, $\Pi_{\mathcal{S}}v\in\mathcal{S}$ and $(v-\Pi_{\mathcal{S}}v)^{\top}\Pi_{\mathcal{S}}v=0$. 
We call a collection of pairwise orthogonal linear subspaces $\mathcal{S}_1,\ldots,\mathcal{S}_q\subseteq\mathbb{R}^p$ with $\bigoplus_{j=1}^q \mathcal{S}_j=\mathbb{R}^p$ 
an \emph{orthogonal and $(X_t)_{t\in[n]}$-decorrelating partition (of cardinality $q$)}
if for all $i,j\in\{1,\dots,q\}$ with $i\neq j$, and for all $t\in[n]$ it holds that
\begin{equation}
\label{eq:orthogonal_partition}
    \operatorname{Cov}(\Pi_{\mathcal{S}_i}X_t, \Pi_{\mathcal{S}_j}X_t)=0.
\end{equation}
Moreover, we define an orthogonal and $(X_t)_{t\in[n]}$-decorrelating partition as \emph{irreducible} if there is no orthogonal and $(X_t)_{t\in[n]}$-decorrelating partition with strictly larger cardinality.
We will construct $\mathcal{S}^{\operatorname{inv}}$ and $\mathcal{S}^{\operatorname{res}}$ from an irreducible orthogonal and $(X_t)_{t\in[n]}$-decorrelating partition.

Given a (not necessarily irreducible) orthogonal and $(X_t)_{t\in [n]}$-decorrelating partition $\{\mathcal{S}_j\}_{j=1}^q$ the true time-varying parameter can be expressed as the sum of $q$ orthogonal components each lying in one subspace of the partition. These components can be expressed in terms of the covariates and of the response at time $t$ as shown by the following lemma.
\begin{lemma}
    \label{lem:time_var_parameter_separation}
    Let $\{\mathcal{S}_j\}_{j=1}^q$ be an orthogonal and $(X_t)_{t\in[n]}$-decorrelating partition. Then it holds for all $t\in[n]$ that $\gamma_{0,t}=\sum_{j=1}^q \Pi_{\mathcal{S}_j}\gamma_{0,t}$ and for all $j\in\{1,\dots, q\}$ that
    \begin{equation}
    \label{eq:separation_orth_partition1}
    \Pi_{\mathcal{S}_j}\gamma_{0,t}  = \operatorname{Var}\left(\Pi_{\mathcal{S}_j}X_t\right)^{\dagger}\operatorname{Cov}\left(\Pi_{\mathcal{S}_j}X_t,Y_t\right)
\end{equation}
where $(\cdot)^\dagger$ denotes the Moore-Penrose pseudoinverse.
\end{lemma}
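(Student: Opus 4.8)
The plan is to prove the first identity directly from the fact that the $\mathcal{S}_j$ form an orthogonal direct-sum decomposition of $\mathbb{R}^p$, and then to establish the second identity by a covariance computation that exploits the decorrelating property together with the strict positive definiteness of $\Sigma_t$. For the first claim, since $\bigoplus_{j=1}^q \mathcal{S}_j=\mathbb{R}^p$ with the $\mathcal{S}_j$ pairwise orthogonal, the associated orthogonal projections satisfy $\sum_{j=1}^q \Pi_{\mathcal{S}_j}=I_p$; applying this identity to $\gamma_{0,t}$ immediately gives $\gamma_{0,t}=\sum_{j=1}^q \Pi_{\mathcal{S}_j}\gamma_{0,t}$.

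For the second claim, write $P_j\coloneqq\Pi_{\mathcal{S}_j}$ and first compute $\operatorname{Cov}(P_j X_t, Y_t)$. Since $\mathbb{E}[\epsilon_t\mid X_t]=0$ yields $\operatorname{Cov}(X_t,\epsilon_t)=0$, the noise term drops out and it suffices to analyze $\operatorname{Cov}(P_j X_t, X_t^{\top}\gamma_{0,t})$. Substituting $X_t=\sum_{i=1}^q P_i X_t$ and using that each $P_i$ is symmetric and idempotent, I would rewrite $X_t^{\top}\gamma_{0,t}=\sum_{i=1}^q (P_i X_t)^{\top}(P_i\gamma_{0,t})$, so that by bilinearity
\[
\operatorname{Cov}(P_j X_t, Y_t)=\sum_{i=1}^q \operatorname{Cov}(P_j X_t, P_i X_t)\,P_i\gamma_{0,t}.
\]
The decorrelating property~\eqref{eq:orthogonal_partition} annihilates every cross term $i\neq j$, leaving the single surviving term $\operatorname{Cov}(P_j X_t, Y_t)=\operatorname{Var}(P_j X_t)\,P_j\gamma_{0,t}$.

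It then remains to multiply on the left by $\operatorname{Var}(P_j X_t)^{\dagger}$ and verify that $\operatorname{Var}(P_j X_t)^{\dagger}\operatorname{Var}(P_j X_t)\,P_j\gamma_{0,t}=P_j\gamma_{0,t}$. For a symmetric positive semidefinite matrix $A$, the product $A^{\dagger}A$ is the orthogonal projection onto $\operatorname{range}(A)$, so this reduces to showing $P_j\gamma_{0,t}\in\operatorname{range}(\operatorname{Var}(P_j X_t))$. Here $\operatorname{Var}(P_j X_t)=P_j\Sigma_t P_j$, whose range is clearly contained in $\mathcal{S}_j$; the crux is the reverse inclusion. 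I would establish it by noting that for any $v\in\mathcal{S}_j\setminus\{0\}$ one has $v^{\top}P_j\Sigma_t P_j v=v^{\top}\Sigma_t v>0$, since $P_j v=v$ and $\Sigma_t\succ 0$, so $P_j\Sigma_t P_j$ is positive definite as a linear map on $\mathcal{S}_j$ and hence $\operatorname{range}(P_j\Sigma_t P_j)=\mathcal{S}_j$. Consequently $\operatorname{Var}(P_j X_t)^{\dagger}\operatorname{Var}(P_j X_t)=P_j$, and since $P_j\gamma_{0,t}\in\mathcal{S}_j$ this projection leaves it unchanged, giving the claimed formula.

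The step I expect to be the main obstacle is precisely this range identification $\operatorname{range}(\operatorname{Var}(P_j X_t))=\mathcal{S}_j$: without it the pseudoinverse would only recover the projection of $P_j\gamma_{0,t}$ onto a possibly strict subspace of $\mathcal{S}_j$, and it is exactly the strict positive definiteness of $\Sigma_t$ assumed in Setting~\ref{set:definition} that rules this out. Everything else is routine bilinearity of the covariance and the algebra of orthogonal projections.
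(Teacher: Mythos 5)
Your proof is correct, but it takes a genuinely different route from the paper's. The paper starts from the closed-form expression $\gamma_{0,t}=\Sigma_t^{-1}\operatorname{Cov}(X_t,Y_t)$, passes to a joint block diagonalizer $U$ of $(\Sigma_t)_{t\in[n]}$ (its Lemma~\ref{lem:orthogonal_partition_diagonalizer}), expands $\Sigma_t^{-1}=U\tilde{\Sigma}_t^{-1}U^{\top}$ blockwise, and then invokes the matrix identity $\operatorname{Var}(\Pi_{\mathcal{S}_j}X_t)^{\dagger}=\Pi_{\mathcal{S}_j}\Sigma_t^{-1}\Pi_{\mathcal{S}_j}$ (Lemma~\ref{lem:pseudo_inv_projection}, built on a general pseudo-inverse lemma for block-diagonalized matrices) to identify each block with $\operatorname{Var}(\Pi_{\mathcal{S}_j}X_t)^{\dagger}\operatorname{Cov}(\Pi_{\mathcal{S}_j}X_t,Y_t)$. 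You instead never introduce the diagonalizer: you get the first identity immediately from $\sum_j\Pi_{\mathcal{S}_j}=I_p$, compute $\operatorname{Cov}(\Pi_{\mathcal{S}_j}X_t,Y_t)=\operatorname{Var}(\Pi_{\mathcal{S}_j}X_t)\,\Pi_{\mathcal{S}_j}\gamma_{0,t}$ directly from the model equation plus bilinearity and the decorrelating property, and then resolve the pseudoinverse by showing $\operatorname{range}(\Pi_{\mathcal{S}_j}\Sigma_t\Pi_{\mathcal{S}_j})=\mathcal{S}_j$ from $\Sigma_t\succ 0$, so that $\operatorname{Var}(\Pi_{\mathcal{S}_j}X_t)^{\dagger}\operatorname{Var}(\Pi_{\mathcal{S}_j}X_t)=\Pi_{\mathcal{S}_j}$ fixes $\Pi_{\mathcal{S}_j}\gamma_{0,t}$. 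Your range argument is sound (the kernel of $\Pi_{\mathcal{S}_j}\Sigma_t\Pi_{\mathcal{S}_j}$ is exactly $\mathcal{S}_j^{\perp}$ by positive definiteness of $\Sigma_t$), and you correctly flag it as the step where strict positive definiteness enters. What your route buys is a shorter, self-contained argument that works purely with projections; what the paper's route buys is reusable machinery — the block-diagonalizer representation $\Pi_{\mathcal{S}_j}=U^{S_j}(U^{S_j})^{\top}$ and the pseudo-inverse identity are used again in Lemmas~\ref{lem:subspace_maximization} and \ref{lem:opt_invariant_projection}, in Proposition~\ref{prop:irreducible_orthogonal_partition}, and in the estimation formulas, so the paper's overhead amortizes across the rest of the development.
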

Each component is in particular the projection of $\gamma_{0,t}$ on the corresponding subspace $\mathcal{S}_j$, as well as a maximizer of the explained variance in $\mathcal{S}_j$, as shown by Lemma~\ref{lem:subspace_maximization}.
\begin{lemma}
\label{lem:subspace_maximization}
    Let $\mathcal{N}\subseteq\mathbb{N}$ and
    let $\{\mathcal{S}_j\}_{j=1}^q$ be an orthogonal and $(X_t)_{t\in \mathcal{N}}$-decorrelating partition.\footnote{
    This is defined analogously to 
    an orthogonal and $(X_t)_{t\in[n]}$-decorrelating partition.
    }
    Then it holds for all $j\in\{1,\dots, q\}$ and for all $t\in\mathcal{N}$ that
    \begin{equation}
    \label{eq:separation_orth_partition2}
    \argmax_{\beta\in\mathcal{S}_j}\operatorname{\Delta Var}_t(\beta)
    =\operatorname{Var}(\Pi_{\mathcal{S}_j}X_t)^{\dagger}\operatorname{Cov}(\Pi_{\mathcal{S}_j}X_t, Y_t).
\end{equation}
\end{lemma}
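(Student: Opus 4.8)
The plan is to reduce the constrained maximization of $\operatorname{\Delta Var}_t$ over the subspace $\mathcal{S}_j$ to an unconstrained, strictly concave quadratic program, solve it explicitly, and then match its solution to the pseudoinverse expression in \eqref{eq:separation_orth_partition2}. Fix $j$ and $t\in\mathcal{N}$ and abbreviate $M\coloneqq\operatorname{Var}(\Pi_{\mathcal{S}_j}X_t)$ and $c\coloneqq\operatorname{Cov}(\Pi_{\mathcal{S}_j}X_t,Y_t)$. Since $\Pi_{\mathcal{S}_j}$ is symmetric and $\Pi_{\mathcal{S}_j}\beta=\beta$ for $\beta\in\mathcal{S}_j$, I would first rewrite $X_t^\top\beta=(\Pi_{\mathcal{S}_j}X_t)^\top\beta$ and substitute this into \eqref{eq:explained_var} to obtain $\operatorname{\Delta Var}_t(\beta)=2\beta^\top c-\beta^\top M\beta$ for every $\beta\in\mathcal{S}_j$. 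Note that the decorrelating property \eqref{eq:orthogonal_partition} is not actually needed for this lemma, which only concerns the maximization over a single subspace.

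Before solving, I would record the relevant range facts. Writing $M=\Pi_{\mathcal{S}_j}\Sigma_t\Pi_{\mathcal{S}_j}$, one sees that $c\in\mathcal{S}_j$ (it is an expectation of centered vectors lying in the linear subspace $\mathcal{S}_j$) and, using that $\Sigma_t$ is strictly positive definite by Setting~\ref{set:definition}, that $v^\top M v=v^\top\Sigma_t v>0$ for every nonzero $v\in\mathcal{S}_j$. Hence $M$ is positive definite on $\mathcal{S}_j$, so $\operatorname{range}(M)=\mathcal{S}_j$ and $M^\dagger c\in\mathcal{S}_j$. Next, picking $U\in\mathbb{R}^{p\times d_j}$ whose columns form an orthonormal basis of $\mathcal{S}_j$ (so that $\Pi_{\mathcal{S}_j}=UU^\top$ and $U^\top U=I_{d_j}$, with $d_j\coloneqq\dim\mathcal{S}_j$), the substitution $\beta=U\alpha$ turns the problem into maximizing $\alpha\mapsto 2\alpha^\top U^\top c-\alpha^\top(U^\top\Sigma_t U)\alpha$ over $\alpha\in\mathbb{R}^{d_j}$, whose Hessian $-2U^\top\Sigma_t U$ is negative definite. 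This yields the unique maximizer $\beta^\star=U(U^\top\Sigma_t U)^{-1}U^\top c$.

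The one step requiring genuine care is the final identification $\beta^\star=M^\dagger c$. Because $M=U(U^\top\Sigma_t U)U^\top$ with $U$ having orthonormal columns and $U^\top\Sigma_t U$ invertible, I would verify directly that $M^\dagger=U(U^\top\Sigma_t U)^{-1}U^\top$ by checking the four Moore--Penrose conditions, all of which collapse to $U^\top U=I_{d_j}$ (for instance $MM^\dagger=M^\dagger M=UU^\top$ is symmetric and idempotent). Substituting then gives $\beta^\star=M^\dagger c$, matching the right-hand side of \eqref{eq:separation_orth_partition2} and completing the argument. An alternative that avoids the basis $U$ altogether is to differentiate $2\beta^\top c-\beta^\top M\beta$ along directions in $\mathcal{S}_j$, giving the stationarity condition $M\beta=c$; since $c\in\operatorname{range}(M)$ and $M$ is invertible on $\mathcal{S}_j$, its unique solution in $\mathcal{S}_j$ is $M^\dagger c$, with strict concavity on $\mathcal{S}_j$ ensuring that this stationary point is indeed the maximizer.
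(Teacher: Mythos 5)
Your proof is correct, and at the decisive step it takes a genuinely different route from the paper's. Writing $M\coloneqq\operatorname{Var}(\Pi_{\mathcal{S}_j}X_t)$ and $c\coloneqq\operatorname{Cov}(\Pi_{\mathcal{S}_j}X_t,Y_t)$, both arguments reduce the claim to maximizing the strictly concave quadratic $\beta\mapsto 2\beta^{\top}c-\beta^{\top}M\beta$ over $\mathcal{S}_j$, but they diverge in how they identify the maximizer with $M^{\dagger}c$. The paper computes the gradient in $\mathbb{R}^p$, writes the full solution set of $M\beta=c$ as $M^{\dagger}c+(I_p-M^{\dagger}M)w$, and then shows $M^{\dagger}M=\Pi_{\mathcal{S}_j}$ via its pseudoinverse formula $M^{\dagger}=\Pi_{\mathcal{S}_j}\Sigma_t^{-1}\Pi_{\mathcal{S}_j}$ (Lemma~\ref{lem:pseudo_inv_projection}), which rests on the joint block diagonalization of $(\Sigma_t)_{t\in\mathcal{N}}$ and hence on the decorrelating property \eqref{eq:orthogonal_partition}. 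You instead parametrize $\mathcal{S}_j$ by an orthonormal basis $U$, solve the resulting unconstrained concave program, and verify the Moore--Penrose conditions for $M^{\dagger}=U(U^{\top}\Sigma_tU)^{-1}U^{\top}$ directly; this needs only that $U$ has orthonormal columns and that $\Sigma_t\succ0$. As a consequence, your side remark is right: the lemma is really a statement about a single linear subspace, and the decorrelating property plays no role in your argument (by contrast, the paper's identity $M^{\dagger}=\Pi_{\mathcal{S}_j}\Sigma_t^{-1}\Pi_{\mathcal{S}_j}$ does fail without it, so the two pseudoinverse formulas coincide only under \eqref{eq:orthogonal_partition}). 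What the paper's route buys is modularity: Lemma~\ref{lem:pseudo_inv_projection} and the identity $M^{\dagger}M=\Pi_{\mathcal{S}_j}$ are reused in the proof of Lemma~\ref{lem:time_var_parameter_separation} and in later results, so deriving them once and citing them keeps the block-diagonalization machinery in one place. What your route buys is self-containedness and a mildly stronger statement (validity for an arbitrary subspace), which makes transparent that the partition structure only becomes essential when the per-subspace maximizers are summed to recover $\gamma_{0,t}$, as in \eqref{eq:subspace_opt_sum}.
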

Combining Lemmas~\ref{lem:time_var_parameter_separation} and \ref{lem:subspace_maximization} implies that
\begin{equation}
\label{eq:subspace_opt_sum}
    \gamma_{0,t}=\sum_{j=1}^q \argmax_{\beta\in\mathcal{S}_j}\operatorname{\Delta Var}_t(\beta).
\end{equation}
Therefore, any orthogonal and $(X_t)_{t\in[n]}$-decorrelating partition allows us to split the optimization~\eqref{eq:gamma0t}
of the explained variance into separate optimizations over the individual subspaces.
In order to leverage all available observations in at least some of the optimizations by pooling the explained variance as in \eqref{eq:lin_subspace_separation},  we need the optimizer to remain constant over time. More formally, we call a linear subspace $\mathcal{S}\subseteq\mathbb{R}^p$ \emph{opt-invariant (optimum invariant) on $[n]$} if for all $t,s\in[n]$ it holds that
\begin{equation}
   \argmax_{\beta\in\mathcal{S}}\operatorname{\Delta Var}_t(\beta) = \argmax_{\beta\in\mathcal{S}}\operatorname{\Delta Var}_s(\beta).
    \label{eq:time_invariant_irr_subspaces}
\end{equation}
For all terms in \eqref{eq:subspace_opt_sum} corresponding to an opt-invariant subspace, we can---by definition of opt-invariance---use all time-points in the optimization.
For an irreducible orthogonal and $(X_t)_{t\in[n]}$-decorrelating partition $\{\mathcal{S}_j\}_{j=1}^q$, we therefore define the \emph{invariant subspace} $\mathcal{S}^{\operatorname{inv}}$ and the \emph{residual subspace} $\mathcal{S}^{\operatorname{res}}$ by
\begin{equation}
\label{eq:invariant_residual_spaces}
    \mathcal{S}^{\operatorname{inv}} \coloneqq\textstyle\bigoplus_{\substack{j\in\{1,\dots,q\}:\\ \mathcal{S}_j \text{ opt-invariant on }[n]}}\mathcal{S}_j
    \quad\text{and}\quad
    \mathcal{S}^{\operatorname{res}}\coloneqq \textstyle\bigoplus_{\substack{j\in\{1,\dots,q\}:\\ \mathcal{S}_j\, \text{not opt-invariant on }[n]}}\mathcal{S}_j, 
\end{equation}
respectively.
The invariant subspace $\mathcal{S}^{\operatorname{inv}}$ is opt-invariant (see Lemma~\ref{lem:invariant_subspace}). Moreover, by definition, it holds that $\mathcal{S}^{\operatorname{res}}=\left(\mathcal{S}^{\operatorname{inv}}\right)^{\perp}$, where $(\cdot)^{\perp}$ denotes the orthogonal complement in $\mathbb{R}^p$, and that $\{\mathcal{S}^{\operatorname{inv}}, \mathcal{S}^{\operatorname{res}}\}$ is an orthogonal and $(X_t)_{t\in[n]}$-decorrelating partition (see Lemma~\ref{lem:inv_res_partition}). 
To ensure that these two spaces do not depend on the chosen irreducible orthogonal partition, we introduce the following assumption.
\begin{assumption}[uniqueness of the subspace decomposition]
    \label{ass:uniqueness_time_inv_sp}
     Let $\{\mathcal{S}_j\}_{j=1}^q$ and $\{\bar{\mathcal{S}}_j\}_{j=1}^{\bar{q}}$ be two irreducible orthogonal and $(X_t)_{t\in[n]}$-decorrelating partitions. Then, it holds that
    \begin{equation*}
         \textstyle\bigoplus_{\substack{j\in\{1,\dots,q\}:\\ \mathcal{S}_j \text{ opt-invariant on }[n]}}\mathcal{S}_j
         =\textstyle\bigoplus_{\substack{j\in\{1,\dots,\bar{q}\}:\\ \bar{\mathcal{S}}_j\text{ opt-invariant on }[n]}}\bar{\mathcal{S}}_j.
    \end{equation*}
\end{assumption}
Assumption~\ref{ass:uniqueness_time_inv_sp} is for example satisfied if an irreducible orthogonal and $(X_t)_{t\in[n]}$-decor\-relating partition is unique. As we show in Appendix ~\ref{ex:non_unique_orthogonal_partition}, the uniqueness does not always hold (e.g., if for all $t\in[n]$ the multiplicity of some eigenvalues of $\Sigma_t$ is larger than one and shared across all such matrices). 
Assumption~\ref{ass:uniqueness_time_inv_sp} is, however, a mild assumption; for example, the uniqueness of an irreducible orthogonal and $(X_t)_{t\in[n]}$-decorrelating partition is satisfied if there exists at least one $t\in[n]$ such that all eigenvalues of $\Sigma_t$, which we have assumed to be non-zero, are distinct (see Lemma~\ref{lem:jbd_uniqueness} and Proposition~\ref{prop:irreducible_orthogonal_partition} below).
Whenever Assumption~\ref{ass:uniqueness_time_inv_sp} holds, the invariant and residual spaces $\mathcal{S}^{\operatorname{inv}}$ and $\mathcal{S}^{\operatorname{res}}$ do not depend on which irreducible orthogonal partition is used in their construction.

\begin{example}[label=ex:running_ex_2d]
\label{ex:ex:running_ex_2d}
This example describes the setting used to generate Figure~\ref{fig:2d_ex}. Consider a $2$-dimensional covariate $X_t\in\mathbb{R}^2$, and assume that model \eqref{eq:model_def} is defined as follows. We take, for all $t\in[n]$
\begin{align*}
    \gamma_{0,t} = \begin{bmatrix}
        1.5\sqrt{3}+1-\sqrt{3}t/n \\
        t/n - 1.5 + \sqrt{3} 
    \end{bmatrix}
    \quad\text{and}\quad
    \Sigma_t = \frac{1}{4}\begin{bmatrix}
        3\sigma_{1,t}+\sigma_{2,t} & \sqrt{3}(\sigma_{2,t}-\sigma_{1,t})\\
        \sqrt{3}(\sigma_{2,t}-\sigma_{1,t}) & \sigma_{1,t}+3\sigma_{2,t}
    \end{bmatrix},
\end{align*}
where $\sigma_{1, t}$ and $\sigma_{2,t}$ are two fixed sequences sampled as two independent i.i.d.\ samples from a uniform distribution on $[0, 1]$. In this example, we have that an irreducible orthogonal and $(X_t)_{t\in[n]}$-decorrelating partition is given by the two spaces
\begin{equation*}
    \mathcal{S}_1 = \langle\begin{bmatrix}
        0.5\sqrt{3} \\ -0.5
    \end{bmatrix}\rangle \quad \mathcal{S}_2 = \langle\begin{bmatrix}
        0.5 \\ 0.5\sqrt{3}
    \end{bmatrix}\rangle.
\end{equation*}
Indeed, it holds for all $t\in[n]$ that $\operatorname{Cov}(\Pi_{\mathcal{S}_1}X_t, \Pi_{\mathcal{S}_2}X_t) = \Pi_{\mathcal{S}_1}\Sigma_t\Pi_{\mathcal{S}_2} = \mathbf{0}_{4\times 4}$. Moreover, since $\sigma_{1,t}$ and $\sigma_{2,t}$ are the eigenvalues of $\Sigma_t$, the irreducible orthogonal partition defined by $\mathcal{S}_1$ and $\mathcal{S}_2$ is unique (i.e., Assumption~\ref{ass:uniqueness_time_inv_sp} is satisfied by Lemma~\ref{lem:jbd_uniqueness}) if $\sigma_{1,t}\neq \sigma_{2,t}$ for some $t\in[n]$.
It also holds that 
\begin{equation*}
    \Pi_{\mathcal{S}_2} \gamma_{0,t} = \begin{bmatrix}
        1 \\ \sqrt{3}
    \end{bmatrix},
\end{equation*}
which does not depend on $t$ (it can be verified that the same does not hold for $\Pi_{\mathcal{S}_1} \gamma_{0,t}$). As shown in Lemmas~\ref{lem:time_var_parameter_separation} and \ref{lem:subspace_maximization}, it holds that
\begin{align*}
     \argmax_{\beta\in\mathcal{S}_j}\operatorname{\Delta Var}_t(\beta) = \argmax_{\beta\in\mathcal{S}_j} \operatorname{Var}(Y_t)-\operatorname{Var}(Y_t-X_t^{\top}\Pi_{\mathcal{S}_j}\beta) =  \Pi_{\mathcal{S}_j} \gamma_{0,t}.
 \end{align*}
It follows that the subspace $\mathcal{S}_2$ is opt-invariant, whereas $\mathcal{S}_1$ is not, and therefore $\mathcal{S}^{\operatorname{inv}} = \mathcal{S}_2$ and $\mathcal{S}^{\operatorname{res}} =  \mathcal{S}_1$. The two spaces $\mathcal{S}_1$ and $\mathcal{S}_2$ also appear in Figure~\ref{fig:2d_ex}: the true time-varying parameter $\gamma_{0,t}$ does not vary with $t$ in the direction of the vector $[0.5, 0.5\sqrt{3}]^{\top}$ generating $\mathcal{S}_2$; $\mathcal{S}_1$ can be visualized when connecting the circles.
\end{example}

In order for $\mathcal{S}^{\operatorname{inv}}$ and $\mathcal{S}^{\operatorname{res}}$ to be 
useful for prediction on future observations, we assume that the subspace separations we consider remain fixed over time.
 
\begin{assumption}[generalization]
    \label{ass:generalization}
    For all irreducible orthogonal and $(X)_{t\in[n]}$-decorrelating partitions,
     $\mathcal{S}^{\operatorname{inv}}$ and $\mathcal{S}^{\operatorname{res}}$ defined in \eqref{eq:invariant_residual_spaces} satisfy the following two conditions: (i) for all $t\in\mathbb{N}$ it holds that $ \operatorname{Cov}(\Pi_{\mathcal{S}^{\operatorname{inv}}}X_t, \Pi_{\mathcal{S}^{\operatorname{res}}}X_t)=0$ and (ii) $\mathcal{S}^{\operatorname{inv}}$ is opt-invariant on $\mathbb{N}$.
\end{assumption}
 
As shown in the following theorem, Assumption~\ref{ass:generalization} ensures that the two sets satisfy 
a separation of the form \eqref{eq:lin_subspace_separation} for all observed and unobserved time points $t\in\mathbb{N}$, as desired. 
For unobserved time points $t\in\mathbb{N}\setminus[n]$, Assumption~\ref{ass:generalization} does not require
\eqref{eq:orthogonal_partition} to hold for all $i,j\in\{1,\dots,q\}$, but only for all $i,j\in\{1,\dots,q\}$ such that $\mathcal{S}_i\subseteq\mathcal{S}^{\operatorname{inv}}$ and $\mathcal{S}_j\subseteq\mathcal{S}^{\operatorname{res}}$.
\begin{theorem}
\label{thm:subspace_partition}
    Assume Assumption~\ref{ass:generalization} 
    is satisfied. Let $\mathcal{S}^{\operatorname{inv}}$ and $\mathcal{S}^{\operatorname{res}}$ be defined as in \eqref{eq:invariant_residual_spaces} for an arbitrary irreducible orthogonal and $(X_t)_{t\in[n]}$-decorrelating partition. Then, it holds for all $t\in\mathbb{N}$ that
    \begin{equation}
    \label{eq:time_varying_parameter_separation}
        \gamma_{0,t}=\argmax_{\beta\in\mathcal{S}^{\operatorname{inv}}}\overline{\operatorname{\Delta Var}}(\beta) + \argmax_{\beta\in\mathcal{S}^{\operatorname{res}}}\operatorname{\Delta Var}_t(\beta),
    \end{equation}
    where $\overline{\operatorname{\Delta Var}}(\beta)\coloneqq \frac{1}{n}\sum_{t=1}^n\operatorname{\Delta Var}_t(\beta)$. Moreover, if Assumption~\ref{ass:uniqueness_time_inv_sp} is satisfied, the separation in \eqref{eq:time_varying_parameter_separation} is independent of the considered irreducible orthogonal and $(X_t)_{t\in[n]}$-decorrelating partition. 
\end{theorem}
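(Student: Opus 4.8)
The plan is to reduce the full separation to the coarse two-element partition $\{\mathcal{S}^{\operatorname{inv}},\mathcal{S}^{\operatorname{res}}\}$ and to extend the subspace-maximization identities from $[n]$ to all of $\mathbb{N}$. The key observation is that, although the underlying irreducible partition $\{\mathcal{S}_j\}_{j=1}^q$ is only guaranteed to be $(X_t)_{t\in[n]}$-decorrelating, Assumption~\ref{ass:generalization}(i) ensures that the coarser partition $\{\mathcal{S}^{\operatorname{inv}},\mathcal{S}^{\operatorname{res}}\}$ is $(X_t)_{t\in\mathbb{N}}$-decorrelating. Since $\mathcal{S}^{\operatorname{res}}=(\mathcal{S}^{\operatorname{inv}})^{\perp}$ and $\mathcal{S}^{\operatorname{inv}}\oplus\mathcal{S}^{\operatorname{res}}=\mathbb{R}^p$, this two-element collection is an orthogonal and $(X_t)_{t\in\mathbb{N}}$-decorrelating partition of cardinality two. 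I would then apply Lemma~\ref{lem:subspace_maximization} with $\mathcal{N}=\mathbb{N}$ together with the computation underlying Lemma~\ref{lem:time_var_parameter_separation} --- which uses only the decorrelation at the relevant time $t$ and the model assumption $\mathbb{E}[\epsilon_t\mid X_t]=0$ --- to obtain, for every $t\in\mathbb{N}$, that $\Pi_{\mathcal{S}^{\operatorname{inv}}}\gamma_{0,t}=\argmax_{\beta\in\mathcal{S}^{\operatorname{inv}}}\operatorname{\Delta Var}_t(\beta)$ and $\Pi_{\mathcal{S}^{\operatorname{res}}}\gamma_{0,t}=\argmax_{\beta\in\mathcal{S}^{\operatorname{res}}}\operatorname{\Delta Var}_t(\beta)$. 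Because $\gamma_{0,t}=\Pi_{\mathcal{S}^{\operatorname{inv}}}\gamma_{0,t}+\Pi_{\mathcal{S}^{\operatorname{res}}}\gamma_{0,t}$ for an orthogonal direct sum, this yields the analogue of \eqref{eq:subspace_opt_sum} for the two-element partition and all $t\in\mathbb{N}$.

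It then remains to replace the instantaneous invariant maximizer $\argmax_{\beta\in\mathcal{S}^{\operatorname{inv}}}\operatorname{\Delta Var}_t(\beta)$ by the pooled maximizer $\argmax_{\beta\in\mathcal{S}^{\operatorname{inv}}}\overline{\operatorname{\Delta Var}}(\beta)$. This is where Assumption~\ref{ass:generalization}(ii) enters: opt-invariance of $\mathcal{S}^{\operatorname{inv}}$ on $\mathbb{N}$ means that $\argmax_{\beta\in\mathcal{S}^{\operatorname{inv}}}\operatorname{\Delta Var}_s(\beta)$ is a single vector $\beta^{\operatorname{inv}}$ independent of $s$. To conclude that the average has the same maximizer, I would fix an orthonormal basis $B$ of $\mathcal{S}^{\operatorname{inv}}$ and write $\beta=B\alpha$; since $\Sigma_s\succ 0$, the map $\alpha\mapsto\operatorname{\Delta Var}_s(B\alpha)$ is strictly concave with unique first-order condition $B^{\top}\Sigma_sB\alpha=B^{\top}\operatorname{Cov}(X_s,Y_s)$. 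Opt-invariance forces the common solution $\alpha^{\operatorname{inv}}$ (with $B\alpha^{\operatorname{inv}}=\beta^{\operatorname{inv}}$) to satisfy this equation for every $s\in[n]$, and averaging over $s\in[n]$ gives $B^{\top}\overline{\Sigma}B\alpha^{\operatorname{inv}}=B^{\top}\overline{\operatorname{Cov}(X,Y)}$, which is exactly the first-order condition for $\overline{\operatorname{\Delta Var}}(B\,\cdot)$. Hence $\argmax_{\beta\in\mathcal{S}^{\operatorname{inv}}}\overline{\operatorname{\Delta Var}}(\beta)=\beta^{\operatorname{inv}}=\argmax_{\beta\in\mathcal{S}^{\operatorname{inv}}}\operatorname{\Delta Var}_t(\beta)$ for all $t\in\mathbb{N}$, and substituting into the previous display yields \eqref{eq:time_varying_parameter_separation}.

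For the final claim, the right-hand side of \eqref{eq:time_varying_parameter_separation} depends on the chosen irreducible partition only through the two subspaces $\mathcal{S}^{\operatorname{inv}}$ and $\mathcal{S}^{\operatorname{res}}=(\mathcal{S}^{\operatorname{inv}})^{\perp}$. Under Assumption~\ref{ass:uniqueness_time_inv_sp}, $\mathcal{S}^{\operatorname{inv}}$ --- and hence its orthogonal complement --- coincides for every irreducible orthogonal and $(X_t)_{t\in[n]}$-decorrelating partition, so the whole separation is independent of the partition used in its construction.

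The main obstacle I anticipate is conceptual rather than computational: one must resist working with the full irreducible partition at times $t\notin[n]$, where it need not decorrelate, and instead push everything through the coarse partition that Assumption~\ref{ass:generalization}(i) keeps decorrelating on all of $\mathbb{N}$. Once this reduction is in place, the only genuine computation is the averaging step, whose crux is that the shared first-order condition of a family of strictly concave quadratics (restricted to $\mathcal{S}^{\operatorname{inv}}$) is preserved under averaging.
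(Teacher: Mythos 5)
Your proof is correct and takes essentially the same route as the paper's: both pass to the coarse partition $\{\mathcal{S}^{\operatorname{inv}},\mathcal{S}^{\operatorname{res}}\}$, which Assumption~\ref{ass:generalization} makes orthogonal and $(X_t)_{t\in\mathbb{N}}$-decorrelating with $\mathcal{S}^{\operatorname{inv}}$ opt-invariant on $\mathbb{N}$, then apply Lemmas~\ref{lem:time_var_parameter_separation} and~\ref{lem:subspace_maximization} and invoke Assumption~\ref{ass:uniqueness_time_inv_sp} for independence of the chosen partition. The only difference is that you make explicit, via the averaged first-order conditions, the exchange of the instantaneous maximizer for the pooled one, a step the paper leaves implicit.
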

The proof of Theorem~\ref{thm:subspace_partition} can be found in Appendix~\ref{pf:subspace_partition} and relies on the fact that, under Assumption~\ref{ass:generalization}, the invariant and residual subspace form an orthogonal and $(X_t)_{t\in\mathbb{N}}$-decorrelating partition.

\subsubsection{Identifying invariant and residual subspaces using joint block diagonalization}
\label{sec:orthogonal_joint_block_diagonalization}

We can characterize an irreducible orthogonal partition using
joint block diagonalization of the set of covariance matrices $(\Sigma_t)_{t\in[n]}$. Joint block diagonalization of $(\Sigma_t)_{t\in[n]}$ consists of finding an orthogonal matrix $U\in\mathbb{R}^{p\times p}$ such that the matrices $\tilde{\Sigma}_t\coloneqq U^{\top}\Sigma_t U$, $t\in[n]$, are block diagonal and we can choose $q^U$ blocks such that the indices of the corresponding submatrices do not change with $t$ (the entries of the blocks may change with $t$ though). 
Let $q_{\max}^U$ be the largest number of such blocks and let $(\tilde{\Sigma}_{t,1})_{t\in[n]},  \ldots, (\tilde{\Sigma}_{t,q_{\max}^U})_{t \in [n]}$ denote the corresponding \emph{common blocks} with dimensions $p_1, \ldots, p_{q_{\max}^U}$  that are  independent of $t$. 
We call $U$ a \emph{joint block diagonalizer} of $(\Sigma_t)_{t\in[n]}$. Moreover, we call $U$ an \emph{irreducible} joint block diagonalizer if, in addition, for all other joint block diagonalizers $U'\in\mathbb{R}^{p\times p}$ the resulting number of common blocks is at most $q_{\max}^U$.
Joint block diagonalization has been considered extensively in the literature \citep[see, for example,][]{murota2010numerical, nion2011tensor, tichavsky2012algorithms} and various computationally feasible algorithms have been proposed (see Section~\ref{rem:ajbd} for further details).

In our setting, joint block diagonalization can be used to identify the invariant and residual subspaces, since an irreducible joint block diagonalizer $U$ of the covariance matrices $(\Sigma_t)_{t\in[n]}$ corresponds to an irreducible orthogonal partition, as the following proposition shows. 
\begin{proposition}
    \label{prop:irreducible_orthogonal_partition}
    \begin{enumerate}[(i)]
        \item Let $U\in\mathbb{R}^{p\times p}$ be a joint block diagonalizer of $(\Sigma_t)_{t\in[n]}$. 
    For all $j\in\{1,\dots,q^U_{\max}\}$, let $S_j\subseteq\{1,\dots,p\}$ denote the subset of indices corresponding to the $j$-th common block $\tilde{\Sigma}_{t,j}$. Moreover, let $u^k$ denote the $k$-th column of $U$ and, for all $j\in\{1,\dots, q^U_{\max}\}$, define
    \begin{equation*}
        \mathcal{S}_j \coloneqq \operatorname{span}\{u^k \mid k\in S_j\}.
    \end{equation*}
    Then, $\{\mathcal{S}_j\}_{j=1}^{q^U_{\max}}$ is an orthogonal and $(X_t)_{t\in[n]}$-decorrelating partition.
    Moreover, if the joint block diagonalizer is irreducible, then the corresponding orthogonal partition is irreducible.
    \item  The converse is also true. Let $\{\mathcal{S}_j\}_{j=1}^q$ be an orthogonal and $(X_t)_{t\in[n]}$-decorrelating partition.
    Then, there exists a joint block diagonalizer $U\in\mathbb{R}^{p\times p}$ of $(\Sigma_t)_{t\in [n]}$ such that for all $t\in[n]$ the matrix 
    $\tilde{\Sigma}_t\coloneqq U^{\top}\Sigma_t U$ is block diagonal with $q$ diagonal blocks $\widetilde{\Sigma}_{t,j}=(U^{S_j})^{\top}\Sigma_t U^{S_j}$, $j\in\{1,\dots,q\}$ of dimension $|S_j|=\operatorname{dim}(\mathcal{S}_j)$, where $S_j\subseteq \{1,\dots,p\}$ indexes a subset of the columns of $U$. If the orthogonal partition is irreducible, then $U$ is an irreducible joint block diagonalizer. Moreover, $\Pi_{\mathcal{S}_j}=U^{S_j}(U^{S_j})^{\top}$.
    \end{enumerate}
\end{proposition}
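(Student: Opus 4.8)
The whole proposition rests on a single translation dictionary: since $U$ is orthogonal, its columns $u^1,\dots,u^p$ form an orthonormal basis of $\mathbb{R}^p$, so for any index set $S_j$ the matrix $U^{S_j}$ (columns of $U$ indexed by $S_j$) satisfies $(U^{S_j})^\top U^{S_j}=I_{|S_j|}$, and the orthogonal projection onto $\mathcal{S}_j=\operatorname{span}\{u^k\mid k\in S_j\}$ is exactly $\Pi_{\mathcal{S}_j}=U^{S_j}(U^{S_j})^\top$. The plan is to use this identity to turn the decorrelating condition \eqref{eq:orthogonal_partition} into the vanishing of the off-diagonal blocks of $\tilde\Sigma_t=U^\top\Sigma_t U$, and vice versa. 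Concretely, for $i\neq j$ one computes $\operatorname{Cov}(\Pi_{\mathcal{S}_i}X_t,\Pi_{\mathcal{S}_j}X_t)=\Pi_{\mathcal{S}_i}\Sigma_t\Pi_{\mathcal{S}_j}=U^{S_i}(U^{S_i})^\top\Sigma_t U^{S_j}(U^{S_j})^\top$, whose inner factor $(U^{S_i})^\top\Sigma_t U^{S_j}$ is precisely the $(S_i,S_j)$ off-diagonal block of $\tilde\Sigma_t$.

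\textbf{Part (i).} First I would check the two structural properties of the partition. Pairwise orthogonality of the $\mathcal{S}_j$ is immediate because they are spanned by disjoint subsets of the orthonormal family $\{u^k\}$; and $\bigoplus_j\mathcal{S}_j=\mathbb{R}^p$ holds because the common blocks partition $\{1,\dots,p\}$, so the $u^k$ span all of $\mathbb{R}^p$. For the decorrelating property, since $\tilde\Sigma_t$ is block diagonal with respect to the common block structure for \emph{every} $t\in[n]$, its off-diagonal block $(U^{S_i})^\top\Sigma_t U^{S_j}$ is zero whenever $i\neq j$; substituting this into the displayed expression for $\operatorname{Cov}(\Pi_{\mathcal{S}_i}X_t,\Pi_{\mathcal{S}_j}X_t)$ yields \eqref{eq:orthogonal_partition}. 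Hence $\{\mathcal{S}_j\}_{j=1}^{q^U_{\max}}$ is an orthogonal and $(X_t)_{t\in[n]}$-decorrelating partition of cardinality $q^U_{\max}$.

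\textbf{Part (ii).} For the converse, given an orthogonal decorrelating partition $\{\mathcal{S}_j\}_{j=1}^q$ I would pick an orthonormal basis of each $\mathcal{S}_j$ and concatenate them; because the subspaces are pairwise orthogonal and sum to $\mathbb{R}^p$, this yields an orthonormal basis of $\mathbb{R}^p$, i.e.\ an orthogonal $U$, and letting $S_j$ index the columns coming from $\mathcal{S}_j$ gives $|S_j|=\dim(\mathcal{S}_j)$ and $\Pi_{\mathcal{S}_j}=U^{S_j}(U^{S_j})^\top$. To see $\tilde\Sigma_t$ is block diagonal, start from the decorrelating identity $U^{S_i}(U^{S_i})^\top\Sigma_t U^{S_j}(U^{S_j})^\top=0$ for $i\neq j$ and all $t$; multiplying on the left by $(U^{S_i})^\top$ and on the right by $U^{S_j}$ and using $(U^{S_i})^\top U^{S_i}=I$ and $(U^{S_j})^\top U^{S_j}=I$ collapses it to $(U^{S_i})^\top\Sigma_t U^{S_j}=0$, so every off-diagonal block vanishes. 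Since the index sets $S_j$ are fixed across $t$, $U$ is a joint block diagonalizer with $q$ common blocks of the claimed dimensions.

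\textbf{The main obstacle: the irreducibility claims.} The bookkeeping above is routine; the delicate part is matching ``largest number of common blocks'' with ``largest cardinality partition.'' The key observation is that parts (i) and (ii) set up a cardinality-preserving correspondence: a block diagonalizer with $k$ common blocks produces a decorrelating partition of cardinality $k$, and conversely. For part (i), if $U$ is irreducible but the induced partition were reducible, there would be a decorrelating partition of cardinality $k>q^U_{\max}$; by part (ii) this yields a joint block diagonalizer with $k$ common blocks, contradicting the maximality of $q^U_{\max}$ over all diagonalizers. For part (ii), if the partition is irreducible I must show the constructed $U$ satisfies $q^U_{\max}=q$ and is maximal: any further common sub-blocking of $\tilde\Sigma_t$ (so $q^U_{\max}>q$) would, by part (i) applied to $U$, refine $\{\mathcal{S}_j\}$ into a strictly larger decorrelating partition, contradicting its irreducibility, so $q^U_{\max}=q$; and any competing diagonalizer $U'$ with $q^{U'}_{\max}>q$ common blocks would, again by part (i), produce a decorrelating partition of cardinality exceeding $q$, the same contradiction. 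I would take care here to argue over all $t\in[n]$ simultaneously (the common block structure must hold jointly) and to verify that $q^U_{\max}$ is an intrinsic finest-common-blocking quantity of $U$, so that the two reductions are genuinely inverse to one another.
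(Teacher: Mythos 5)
Your proof is correct and follows essentially the same route as the paper's: identify $\Pi_{\mathcal{S}_j}=U^{S_j}(U^{S_j})^{\top}$, translate the decorrelating condition $\Pi_{\mathcal{S}_i}\Sigma_t\Pi_{\mathcal{S}_j}=0$ into the vanishing of the off-diagonal blocks $(U^{S_i})^{\top}\Sigma_t U^{S_j}$ of $\tilde\Sigma_t$ (and conversely, by multiplying on the left by $(U^{S_i})^{\top}$ and on the right by $U^{S_j}$), and for part (ii) build $U$ by concatenating orthonormal bases of the $\mathcal{S}_j$, exactly as in the paper's auxiliary Lemma~\ref{lem:orthogonal_partition_diagonalizer}. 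In fact, your cardinality-preserving correspondence argument for the two irreducibility claims is more explicit than the paper's, which disposes of each in a single sentence; your care in checking that the mutual use of parts (i) and (ii) is non-circular (the structural halves are proved first, the irreducibility halves only invoke those) is a genuine improvement in rigor.
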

If Assumption~\ref{ass:uniqueness_time_inv_sp} is satisfied, any irreducible joint block diagonalizer $U$, via its corresponding irreducible orthogonal and $(X_t)_{t\in[n]}$-decorrelating partition constructed in Proposition~\ref{prop:irreducible_orthogonal_partition}(i), leads to the same (unique) invariant and residual subspaces defined in \eqref{eq:invariant_residual_spaces}.

It is clear that Assumption~\ref{ass:uniqueness_time_inv_sp} is automatically satisfied whenever the joint block diagonalization is unique up to trivial indeterminacies, that is, if for all orthogonal matrices $U, U'\in\mathbb{R}^{p\times p}$ that jointly block diagonalize the set $(\Sigma_t)_{t\in[n]}$ into $q^U_{\max}$ 
irreducible common 
blocks, it holds that $U$ is equal to $U'$ up to block permutations and block-wise isometric transformations. Explicit conditions under which uniqueness of joint block diagonalization is satisfied can be found, for example, in the works by~\citet{de2008decompositions,murota2010numerical}. Intuitively, these conditions are satisfied whenever there is sufficient variability across time in the covariance matrices $(\Sigma_t)_{t\in[n]}$.

Given an irreducible joint block diagonalizer $U$, the invariant and residual subspaces can be identified using Proposition~\ref{prop:irreducible_orthogonal_partition} and Lemma~\ref{lem:time_var_parameter_separation}, by checking for all $j\in\{1,\dots, q^U_{\max}\}$ whether $\Pi_{\mathcal{S}_j}\gamma_{0,t}=U^{S_j}(U^{S_j})^{\top}\gamma_{0,t}$ remains constant ($\mathcal{S}_j\subseteq\mathcal{S}^{\operatorname{inv}}$) or not ($\mathcal{S}_j\subseteq\mathcal{S}^{\operatorname{res}}$) on $[n]$. We denote by $U^{\operatorname{inv}}$ and $U^{\operatorname{res}}$ the submatrices of $U$ formed by the columns that span the invariant and the residual subspace respectively.
\begin{example}[continues=ex:running_ex_2d]
    In Example~\ref{ex:ex:running_ex_2d} so far, we expressed 
    $\mathcal{S}_1$ and $\mathcal{S}_2$ in terms of their generating vectors. These can be retrieved using Proposition~\ref{prop:irreducible_orthogonal_partition} by joint block diagonalizing the matrices $(\Sigma_t)_{t\in[n]}$. In this specific example, an irreducible joint block diagonalizer is given by 
    \begin{equation*}
        U = \begin{bmatrix}
            0.5\sqrt{3} & 0.5 \\
            -0.5 & 0.5\sqrt{3}
        \end{bmatrix},
    \end{equation*}
    which is a (clockwise) rotation matrix of $30$ degrees (see Figure~\ref{fig:2d_ex} and use $S_1 = \{1\}$ and $S_2 = \{2\}$). In particular, we have that $\tilde{\Sigma}_t = U^{\top}\Sigma_t U = \operatorname{diag}(\sigma_{1,t}, \sigma_{2,t})$: therefore, $q^U_{\max}=2$ and each block has dimension $1$.
    Moreover, it holds for all $t\in[n]$ that
    \begin{equation*}
         \Pi_{\mathcal{S}_1}\gamma_{0,t} = U^{S_1}(U^{S_1})^{\top}\gamma_{0,t} = \begin{bmatrix}
            1.5\sqrt{3}-\sqrt{3}t/n \\ t/n -1.5
        \end{bmatrix} \quad \text{and} \quad \Pi_{\mathcal{S}_2} \gamma_{0,t} =  U^{S_2}(U^{S_2})^{\top}\gamma_{0,t}= \begin{bmatrix}
        1 \\ \sqrt{3}
    \end{bmatrix},
    \end{equation*}
    and therefore $\mathcal{S}^{\operatorname{inv}}=\mathcal{S}_2$ and $\mathcal{S}^{\operatorname{res}}=\mathcal{S}_1$.
\end{example}

\subsection{Invariant component}
\label{sec:time_invariant_subspace_effect}

In Theorem~\ref{thm:subspace_partition} we have shown that the true time-varying parameter $\gamma_{0,t}$ can be expressed as the result of two separate optimization problems over the two orthogonal spaces $\mathcal{S}^{\operatorname{inv}}$ and $\mathcal{S}^{\operatorname{res}}$. 
In this section we analyze the result of the first optimization step over the invariant subspace $\mathcal{S}^{\operatorname{inv}}$. To ensure that the space $\mathcal{S}^{\operatorname{inv}}$ is unique, we assume that Assumption~\ref{ass:uniqueness_time_inv_sp} is satisfied throughout Section~\ref{sec:time_invariant_subspace_effect}.
\begin{definition}[Invariant component]
\label{def:time_invariant_subspace_effect}
    We denote the parameter that maximizes the explained variance over the invariant subspace by 
    \begin{equation}
        \beta^{\operatorname{inv}}\coloneqq \argmax_{\beta\in\mathcal{S}^{\operatorname{inv}}}\overline{\operatorname{\Delta Var}}(\beta).
    \label{eq:timeinvariantsubspaceeffect}
    \end{equation}
\end{definition}
The parameter $\beta^{\operatorname{inv}}$ corresponds to the pooled OLS solution obtained by regressing $Y_t$ on the projected
predictors $\Pi_{\mathcal{S}^{\operatorname{inv}}}X_t$, and can be computed using all observations in $[n]$. 
The whole procedure to find $\beta^{\operatorname{inv}}$, by first identifying $\mathcal{S}^{\operatorname{inv}}$ via joint block diagonalization and then using Proposition~\ref{prop:invariance}~$(i)$ below, is summarized in Algorithm~\ref{alg:pop_estimator} (see Section~\ref{sec:pop_isd_algorithm}).
Proposition~\ref{prop:invariance} summarizes some of the properties of $\beta^{\operatorname{inv}}$.
\begin{proposition}[Properties of $\beta^{\operatorname{inv}}$]
    \label{prop:invariance} 
    Under Assumption~\ref{ass:uniqueness_time_inv_sp}, $\beta^{\operatorname{inv}}$ satisfies the following properties:
    \begin{itemize}
        \item[(i)] For all $t\in[n]$, $\beta^{\operatorname{inv}} = \Pi_{\mathcal{S}^{\operatorname{inv}}} \gamma_{0,t} = \Pi_{\mathcal{S}^{\operatorname{inv}}} \overline{\gamma}_{0}$, where $\bar{\gamma}_{0}\coloneqq\frac{1}{n}\sum_{t=1}^n\gamma_{0,t}$.
    \item[(ii)] $\beta^{\operatorname{inv}}$ is time-invariant over $[n]$, see Definition~\ref{def:time_invariance}.
    \item[(iii)] 
    If, in addition, it holds for all $\beta\in\mathbb{R}^p$ time-invariant over $[n]$ that $\beta \in \mathcal{S}^{\operatorname{inv}}$ then
         $\beta^{\operatorname{inv}} = \argmax_{\beta\in\mathbb{R}^p\operatorname{ time-invariant}}\overline{\operatorname{\Delta Var}}(\beta).$
    \end{itemize}
\end{proposition}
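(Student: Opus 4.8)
The plan is to prove the three parts in order, with part (i) carrying essentially all of the content and parts (ii)--(iii) following quickly from it. Throughout I work with the two-block partition $\{\mathcal{S}^{\operatorname{inv}}, \mathcal{S}^{\operatorname{res}}\}$, which is an orthogonal and $(X_t)_{t\in[n]}$-decorrelating partition by Lemma~\ref{lem:inv_res_partition}, and I use that $\mathcal{S}^{\operatorname{inv}}$ is opt-invariant on $[n]$ by Lemma~\ref{lem:invariant_subspace}.

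For part (i), I would first apply Lemma~\ref{lem:subspace_maximization} to this two-block partition to get, for every $t\in[n]$, that $\argmax_{\beta\in\mathcal{S}^{\operatorname{inv}}}\operatorname{\Delta Var}_t(\beta)=\operatorname{Var}(\Pi_{\mathcal{S}^{\operatorname{inv}}}X_t)^{\dagger}\operatorname{Cov}(\Pi_{\mathcal{S}^{\operatorname{inv}}}X_t, Y_t)$, which by Lemma~\ref{lem:time_var_parameter_separation} equals $\Pi_{\mathcal{S}^{\operatorname{inv}}}\gamma_{0,t}$. By opt-invariance this common per-time maximizer does not depend on $t$; call it $b^{*}$. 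The key step is to upgrade these per-time maximizations to the pooled one: writing, for $\beta\in\mathcal{S}^{\operatorname{inv}}$, $\operatorname{\Delta Var}_t(\beta)=2\beta^{\top}\operatorname{Cov}(X_t, Y_t)-\beta^{\top}\Sigma_t\beta$, each summand is strictly concave on $\mathcal{S}^{\operatorname{inv}}$, since $\Sigma_t\succ 0$ restricts to a positive definite form on the subspace, and each has the same unique maximizer $b^{*}$. Hence the average $\overline{\operatorname{\Delta Var}}$ is again strictly concave on $\mathcal{S}^{\operatorname{inv}}$ and is maximized at $b^{*}$ as well, so $\beta^{\operatorname{inv}}=b^{*}=\Pi_{\mathcal{S}^{\operatorname{inv}}}\gamma_{0,t}$ for all $t\in[n]$. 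Finally, linearity of the projection gives $\Pi_{\mathcal{S}^{\operatorname{inv}}}\overline{\gamma}_{0}=\frac{1}{n}\sum_{t=1}^{n}\Pi_{\mathcal{S}^{\operatorname{inv}}}\gamma_{0,t}=\beta^{\operatorname{inv}}$.

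For part (ii), I would invoke the first-order optimality condition of the maximization defining $\beta^{\operatorname{inv}}$. Since by part (i) the parameter $\beta^{\operatorname{inv}}$ maximizes the strictly concave quadratic $\operatorname{\Delta Var}_t$ over $\mathcal{S}^{\operatorname{inv}}$, the projected gradient vanishes, i.e.\ $\Pi_{\mathcal{S}^{\operatorname{inv}}}\bigl(\operatorname{Cov}(X_t, Y_t)-\Sigma_t\beta^{\operatorname{inv}}\bigr)=0$. Expanding the target quantity as $\operatorname{Cov}(Y_t-X_t^{\top}\beta^{\operatorname{inv}}, X_t^{\top}\beta^{\operatorname{inv}})=(\beta^{\operatorname{inv}})^{\top}\bigl(\operatorname{Cov}(X_t, Y_t)-\Sigma_t\beta^{\operatorname{inv}}\bigr)$ and using $\beta^{\operatorname{inv}}=\Pi_{\mathcal{S}^{\operatorname{inv}}}\beta^{\operatorname{inv}}$ together with the symmetry of $\Pi_{\mathcal{S}^{\operatorname{inv}}}$ rewrites this as $(\beta^{\operatorname{inv}})^{\top}\Pi_{\mathcal{S}^{\operatorname{inv}}}\bigl(\operatorname{Cov}(X_t, Y_t)-\Sigma_t\beta^{\operatorname{inv}}\bigr)=0$, which is exactly the constraint of Definition~\ref{def:time_invariance}. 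For part (iii), the extra hypothesis states that the set $\mathcal{T}$ of all parameters that are time-invariant over $[n]$ satisfies $\mathcal{T}\subseteq\mathcal{S}^{\operatorname{inv}}$. By part (ii) we have $\beta^{\operatorname{inv}}\in\mathcal{T}$, and by part (i) together with strict concavity, $\beta^{\operatorname{inv}}$ is the unique maximizer of $\overline{\operatorname{\Delta Var}}$ over all of $\mathcal{S}^{\operatorname{inv}}$. Since $\mathcal{T}\subseteq\mathcal{S}^{\operatorname{inv}}$ and $\beta^{\operatorname{inv}}\in\mathcal{T}$, the value $\overline{\operatorname{\Delta Var}}(\beta^{\operatorname{inv}})$ dominates $\overline{\operatorname{\Delta Var}}(\beta)$ for every $\beta\in\mathcal{T}$, and any competing maximizer over $\mathcal{T}$ would also lie in $\mathcal{S}^{\operatorname{inv}}$ and thus contradict uniqueness there; hence $\beta^{\operatorname{inv}}=\argmax_{\beta\in\mathbb{R}^p\operatorname{ time-invariant}}\overline{\operatorname{\Delta Var}}(\beta)$.

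I expect the main obstacle to be the pooling step in part (i): one must verify that the individual maximizers coincide across $t$, which is precisely what opt-invariance of $\mathcal{S}^{\operatorname{inv}}$ delivers, and that strict concavity (from $\Sigma_t\succ 0$ restricted to $\mathcal{S}^{\operatorname{inv}}$) makes this common point the unique maximizer of the average rather than merely a maximizer. Everything else reduces to bookkeeping with orthogonal projections and first-order conditions.
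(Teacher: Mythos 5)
Your proof is correct, and its overall skeleton---establish (i) via the common per-time maximizers, then deduce (ii) and (iii)---matches the paper's; the difference lies in how two steps are executed. For (i), the paper simply cites its Lemma~\ref{lem:opt_invariant_projection}, which already contains the pooling step (for an opt-invariant subspace the pooled argmax equals the common per-time argmax, hence equals $\Pi_{\mathcal{S}^{\operatorname{inv}}}\gamma_{0,t}=\Pi_{\mathcal{S}^{\operatorname{inv}}}\bar{\gamma}_{0}$); you re-derive exactly this fact from strict concavity of each $\operatorname{\Delta Var}_t$ restricted to $\mathcal{S}^{\operatorname{inv}}$ (valid since $\Sigma_t\succ 0$), which is a self-contained and equally rigorous substitute. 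For (ii), the routes genuinely diverge: the paper substitutes the linear model $Y_t=X_t^{\top}\gamma_{0,t}+\epsilon_t$, splits $\gamma_{0,t}=\Pi_{\mathcal{S}^{\operatorname{inv}}}\gamma_{0,t}+\Pi_{\mathcal{S}^{\operatorname{res}}}\gamma_{0,t}$, and kills the cross term using the decorrelation $\operatorname{Cov}(\Pi_{\mathcal{S}^{\operatorname{res}}}X_t,\Pi_{\mathcal{S}^{\operatorname{inv}}}X_t)=0$ from Lemma~\ref{lem:inv_res_partition}; you instead use the projected first-order condition $\Pi_{\mathcal{S}^{\operatorname{inv}}}\bigl(\operatorname{Cov}(X_t,Y_t)-\Sigma_t\beta^{\operatorname{inv}}\bigr)=0$, which holds at the per-time maximizer delivered by your part (i), together with the identity $\operatorname{Cov}(Y_t-X_t^{\top}\beta^{\operatorname{inv}},X_t^{\top}\beta^{\operatorname{inv}})=(\beta^{\operatorname{inv}})^{\top}\Pi_{\mathcal{S}^{\operatorname{inv}}}\bigl(\operatorname{Cov}(X_t,Y_t)-\Sigma_t\beta^{\operatorname{inv}}\bigr)$. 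Your version of (ii) is slightly more economical, needing neither the model equation nor the decorrelation property at that stage (both are absorbed into part (i)), whereas the paper's computation makes the mechanism explicit: invariance holds because the residual variation of $\gamma_{0,t}$ lives in directions decorrelated from $\mathcal{S}^{\operatorname{inv}}$. Part (iii) is identical in both proofs, with your added---and correct---observation that strict concavity makes the maximizer over $\mathcal{S}^{\operatorname{inv}}$ unique, so the argmax over the set of time-invariant parameters is well defined.
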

Definition~\ref{def:time_invariance} guarantees, for all $t\in[n]$, that the explained variance for all $\beta\in\mathbb{R}^p$ time-invariant over $[n]$ is  $\operatorname{\Delta Var}_t(\beta)=\operatorname{Var}(X^{\top}\beta)=\beta^{\top}\Sigma_t\beta$.
Point $(ii)$ of Proposition~\ref{prop:invariance} therefore implies that, for all $t\in[n]$, $\operatorname{\Delta Var}_t(\beta^{\operatorname{inv}})=(\beta^{\operatorname{inv}})^{\top}\Sigma_t\beta^{\operatorname{inv}}$. Under Assumption~\ref{ass:generalization}, we have that the same holds for all $t\in\mathbb{N}$: $\beta^{\operatorname{inv}}$ is therefore a never harmful parameter and 
for all $t\in\mathbb{N}$ it
is a solution to $\argmax_{\beta\in\mathcal{S}^{\operatorname{inv}}}\operatorname{\Delta Var}_t(\beta)$.
Moreover, Proposition~\ref{prop:invariance}~$(iii)$ implies that, under an additional assumption, the parameter $\beta^{\operatorname{inv}}$ is optimal, i.e., maximizes the explained variance, among all time-invariant parameters. 
In particular, $\beta^{\operatorname{inv}}$ represents an interesting target of inference: it can be used for zero-shot prediction, if no adaptation data is available at time $t$ to solve the second part of the optimization in \eqref{eq:time_varying_parameter_separation} over $\mathcal{S}^{\operatorname{res}}$. 
Using $U^{\operatorname{inv}}$ as defined in Section~\ref{sec:orthogonal_joint_block_diagonalization}, we can express $\beta^{\operatorname{inv}}$ as follows.
\begin{equation}
\label{eq:invariant_component_pop_estimator}
    \beta^{\operatorname{inv}} = U^{\operatorname{inv}} ((U^{\operatorname{inv}})^{\top}\overline{\operatorname{Var}}(X)U^{\operatorname{inv}})^{-1}(U^{\operatorname{inv}})^{\top}\overline{\operatorname{Cov}}(X, Y)
\end{equation}
where $\overline{\operatorname{Var}}(X)\coloneqq\frac{1}{n}\sum_{t=1}^n\operatorname{Var}(X_t)$ and $\overline{\operatorname{Cov}}(X, Y)\coloneqq\frac{1}{n}\sum_{t=1}^n\operatorname{Cov}(X_t, Y_t)$. We derive this expression in Lemma~\ref{lem:invariant_component_pop_estimator}, and we later use it for estimation in Section~\ref{sec:residual_effect_estimation}.

\begin{example}[continues=ex:running_ex_2d]
Considering again Example~\ref{ex:running_ex_2d}, we have that $\mathcal{S}^{\operatorname{inv}} = \mathcal{S}_2$, and therefore the invariant component is given by
\begin{equation*}
    \beta^{\operatorname{inv}}=  \argmax_{\beta\in\mathcal{S}_2}\overline{\operatorname{\Delta Var}}(\beta) = \Pi_{\mathcal{S}_2} \overline{\gamma}_{0} =  \begin{bmatrix}
        1 \\ \sqrt{3}
    \end{bmatrix}.
\end{equation*}
Moreover, we can express $\beta^{\operatorname{inv}}$ under the basis for the irreducible subspaces using the irreducible joint block diagonalizer $U$, obtaining $U^{\top}\beta^{\operatorname{inv}}= \begin{bmatrix} 0,&2\end{bmatrix}^{\top}$: the only non-zero component is indeed the one corresponding to the invariant subspace $\mathcal{S}_2$.
\end{example}

\subsection{Residual component and time adaptation }
\label{sec:time_adaptation}
Under the generalization assumption (Assumption~\ref{ass:generalization}), Theorem~\ref{thm:subspace_partition} implies that we can partially explain the variance of the response $Y_t$ at all---observed and unobserved---time points $t\in\mathbb{N}$ using $\beta^{\operatorname{inv}}$. It also implies that for all $t\in\mathbb{N}$ we can reconstruct the true time-varying parameter by adding
to $\beta^{\operatorname{inv}}$ a residual parameter that maximizes the explained variance at time $t$ over the residual subspace $\mathcal{S}^{\operatorname{res}}$, i.e.,
\begin{equation*}
    \gamma_{0,t} = \beta^{\operatorname{inv}} + \argmax_{\beta\in\mathcal{S}^{\operatorname{res}}} \operatorname{\Delta Var}_t(\beta).
\end{equation*}
In this section we focus on the second optimization step over $\mathcal{S}^{\operatorname{res}}$.
We assume that Assumptions~\ref{ass:uniqueness_time_inv_sp} and \ref{ass:generalization} hold, and for all $t\in\mathbb{N}$ we define the residual component
$
\delta_t^{\operatorname{res}} \coloneqq \argmax_{\beta\in\mathcal{S}^{\operatorname{res}}} \operatorname{\Delta Var}_t(\beta).
$
Using \eqref{eq:separation_orth_partition1}, \eqref{eq:separation_orth_partition2} and \eqref{eq:invariant_residual_spaces}, we can express $\delta_t^{\operatorname{res}}$ as
\begin{align}
\label{eq:adaptation_parameter_2}
    \delta_t^{\operatorname{res}} & = \operatorname{Var}(\Pi_{\mathcal{S}^{\operatorname{res}}}X_t)^{\dagger}\operatorname{Cov}(\Pi_{\mathcal{S}^{\operatorname{res}}}X_t, Y_t) \\
    & = \operatorname{Var}(\Pi_{\mathcal{S}^{\operatorname{res}}}X_t)^{\dagger}\operatorname{Cov}(\Pi_{\mathcal{S}^{\operatorname{res}}}X_t, Y_t-X_t^{\top}\beta^{\operatorname{inv}})\notag.
\end{align}
We can now reduce the number of parameters that need to be estimated by expressing $\delta_t^{\operatorname{res}}$ as an OLS solution with $\operatorname{dim}(\mathcal{S}^{\operatorname{res}})$ parameters.
To see this, we use that, under Assumption~\ref{ass:uniqueness_time_inv_sp},  Proposition~\ref{prop:irreducible_orthogonal_partition} allows us to express the space $\mathcal{S}^{\operatorname{res}}$ in terms of an irreducible joint block diagonalizer $U$ corresponding to the irreducible orthogonal partition $\{\mathcal{S}_j\}_{j=1}^{q^U_{\max}}$ as
\begin{equation*}
    \mathcal{S}^{\operatorname{res}} = \operatorname{span}\{u^k\mid \exists j\in \{1,\dots,q^U_{\max}\}:\, \mathcal{S}_j \text{ not opt-invariant} \text{ and }k\in S_j\}.
\end{equation*}
Moreover, the orthogonal projection matrix onto $\mathcal{S}^{\operatorname{res}}$ is given by $\Pi_{\mathcal{S}^{\operatorname{res}}}=U^{\operatorname{res}}(U^{\operatorname{res}})^{\top}$. Hence, using Lemma~\ref{lem:pseudo_inv_projection} %\rmm{\ref{lem:time_var_parameter_separation}}\niklas{+1}
we get that
\begin{align}
    \delta_t^{\operatorname{res}} 
    & = U^{\operatorname{res}}\operatorname{Var}((U^{\operatorname{res}})^{\top}X_t)^{-1} \operatorname{Cov}((U^{\operatorname{res}})^{\top}X_t, Y_t-X_{t}^{\top}\beta^{\operatorname{inv}}), \label{eq:adaptation_parameter_estimator}
\end{align}
where $\operatorname{Var}((U^{\operatorname{res}})^{\top}X_t)^{-1} \operatorname{Cov}((U^{\operatorname{res}})^{\top}X_t, Y_t-X_{t}^{\top}\beta^{\operatorname{inv}})$ is the population ordinary least squares solution obtained by regressing $Y_t-X_{t}^{\top}\beta^{\operatorname{inv}}$ on $(U^{\operatorname{res}})^{\top}X_t\in\mathbb{R}^{\operatorname{dim}(\mathcal{\mathcal{S}^{\operatorname{res}}})}$.

\begin{example}[continues=ex:running_ex_2d]
    In Example~\ref{ex:ex:running_ex_2d}, we have that for all $t\in[n]$ the residual component $\delta^{\operatorname{res}}_t$ is given by 
    \begin{equation*}
        \delta^{\operatorname{res}}_t = \argmax_{\beta\in\mathcal{S}_1}\operatorname{\Delta Var}_t(\beta) = \Pi_{\mathcal{S}_1}\gamma_{0,t} = \begin{bmatrix}
            1.5\sqrt{3}-\sqrt{3}t/n \\ t/n -1.5
        \end{bmatrix}.
    \end{equation*}
    Moreover, we can express $\delta^{\operatorname{res}}_t$ under the irreducible orthogonal partition basis (given by the two vectors generating $\mathcal{S}_1$ and $\mathcal{S}_2$), obtaining $U^{\top}\delta^{\operatorname{res}}_t = \begin{bmatrix}
        3-2t/n, & 0
    \end{bmatrix}^{\top}$, which indeed only has one degree of freedom in the first component. This component takes the following values:  for all $t\in[n]$ we have $3-2t/n\in[1, 3]$.  
\end{example}

\subsection{Population \isd{} algorithm}
\label{sec:pop_isd_algorithm}
We call the procedure to identify $\beta^{\operatorname{inv}}$ and $\delta^{\operatorname{res}}_t$ \emph{invariant subspace decomposition} (\isd). By construction, the result of \isd{} in its population version is equal to the true time-varying parameter at time $t\in\mathbb{N}$, i.e.,
\begin{equation}
    \gamma_{0,t} =\beta^{\operatorname{inv}}+\delta_t^{\operatorname{res}}. \label{eq:adapted_iss_effect}
\end{equation}
The full \isd{} procedure is summarized in Algorithm~\ref{alg:pop_estimator}. 
In the algorithm, the invariant and residual subspaces are identified through joint block diagoanlization as described at the end of Section~\ref{sec:orthogonal_joint_block_diagonalization}. The number of subspaces $q$ in the irreducible orthogonal and $(X_t)_{t\in[n]}$-decorrelating partition is inferred by the joint block diagonalization algorithm.
If Assumption~\ref{ass:uniqueness_time_inv_sp} is not satisfied, then the decomposition in \eqref{eq:adapted_iss_effect} and therefore the output of Algorithm~\ref{alg:pop_estimator} depends on the irreducible orthogonal and $(X_t)_{t\in[n]}$-decorrelating partition used. 

In Appendix~\ref{sec:non_orthogonal_subspaces} we show that a decomposition of the true time-varying parameter similar to \eqref{eq:adapted_iss_effect} is also obtained when considering $(X_t)_{t\in[n]}$-decorrelating partitions that are not orthogonal, i.e., such that \eqref{eq:orthogonal_partition} holds but the subspaces in the partition are not necessarily pairwise orthogonal. In this case, in particular, we can still find an invariant and residual parameter as in \eqref{eq:invariant_component_pop_estimator} and \eqref{eq:adaptation_parameter_estimator}, respectively, where the matrix $U$ is now a non-orthogonal irreducible joint block diagonalizer.
\begin{algorithm}[h]
    \caption{Population \isd}\label{alg:pop_estimator}
    \begin{algorithmic}[1]
    \Require distributions of $(X_t, Y_t)_{t\in[n]}$
    \Ensure $\beta^{\operatorname{inv}}$, $\delta^{\operatorname{res}}_n$
    \State $(\Sigma_t)_{t\in[n]} \gets \{\operatorname{Var(X_t)}\}_{t\in[n]}$ 
    \State $\{\gamma_{0,t}\}_{t\in[n]}\gets \{\operatorname{Var(X_t)}^{-1}\operatorname{Cov}(X_t, Y_t)\}_{t\in[n]}$
    \State $U,\{S_j\}_{j=1}^q \gets \operatorname{irreducibleJointBlockDiagonalizer}((\Sigma_t)_{t\in[n]})$ \Comment{{\color{gray}Prop.~\ref{prop:irreducible_orthogonal_partition}}}
    \State $\rhd$ {\color{gray} Find the opt-invariant subspaces to identify $\mathcal{S}^{\operatorname{inv}}, \mathcal{S}^{\operatorname{res}}$}
    \State $S^{\operatorname{inv}}, S^{\operatorname{res}} \gets \emptyset $
    \For{$j=1,\dots,q$} 
        \State $\Pi_{\mathcal{S}_j} \gets U^{S_j}(U^{S_j})^{\top}$
        \If{$\Pi_{\mathcal{S}_j}\gamma_{0,t}$ constant in $[n]$ } 
        $S^{\operatorname{inv}}\gets S^{\operatorname{inv}}\cup S_j$ \Comment{{\color{gray}see Prop.~\ref{prop:invariance}}}
        \Else  $\quad S^{\operatorname{res}}\gets S^{\operatorname{res}}\cup S_j$
        \EndIf
    \EndFor
    \State $\rhd$ {\color{gray}Define estimators for the invariant and residual component of $\gamma_{0,t}$:}
    \State $\Pi_{\mathcal{S}^{\operatorname{inv}}} \gets U^{\operatorname{inv}}(U^{\operatorname{inv}})^{\top}$
    \State $\beta^{\operatorname{inv}}\gets \frac{1}{n}\sum_{t=1}^n\operatorname{Var}(\Pi_{\mathcal{S}^{\operatorname{inv}}}X_t)^{\dagger}\operatorname{Cov}(\Pi_{\mathcal{S}^{\operatorname{inv}}}X_t, Y_t )$ \Comment{{\color{gray}see Prop.~\ref{prop:invariance}, Lemmas~\ref{lem:time_var_parameter_separation},\ref{lem:subspace_maximization}}}
    \State $\delta_{n}^{\operatorname{res}}\gets U^{\operatorname{res}}\operatorname{Var}((U^{\operatorname{res}})^{\top}X_{n})^{-1} \operatorname{Cov}((U^{\operatorname{res}})^{\top}X_{n}, Y_{n}-X_{n}^{\top}\beta^{\operatorname{inv}})$  \Comment{{\color{gray}see Eq.~\eqref{eq:adaptation_parameter_estimator}}}
    \end{algorithmic}
\end{algorithm}

\begin{remark}[Changes of $\gamma_{0,t}$ and $\Sigma_t$ over time]
\label{rem:time_changes}
For the ISD procedure to work in practice, we assume that the covariance matrices $\Sigma_t$ are approximately constant in small time windows (e.g., smoothly changing or piece-wise constant). This is required to obtain accurate estimates of $\Sigma_t$, which we then (approximately) joint block diagonalize (see Section~\ref{sec:approx_jbd_estimation}). Similarly, we assume that $\gamma_{0,t}$ is approximately constant in the adaptation window, in order to perform the adaptation step (see Setting~\ref{set:definition}). In the historical data, however, $\gamma_{0,t}$ can vary arbitrarily fast in the residual subspace ($\beta^{\operatorname{inv}}$ remains constant, of course). 
 Intuitively, even if $\gamma_{0,t}$ does not change in a structured way, its projection $\Pi_{\mathcal{S}^{\operatorname{inv}}}\gamma_{0,t}$ on the invariant subspace remains constant across time points. We provide an example in which $\gamma_{0,t}$ is  
quickly varying in Appendix~\ref{sec:fast_gamma_changes}.
\end{remark}

\section{Analysis of the two ISD tasks: zero-shot generalization and time adaptation}
\label{sec:task_description}
We now analyze the two prediction tasks that can be solved with the \isd{} framework introduced in the previous sections, namely (i) the \emph{zero-shot task} and (ii) the \emph{adaptation task}.
We consider Setting~\ref{set:definition} and assume for (i) that only the historical data is available, while for (ii) we assume to also have access to the adaptation data.
Throughout the remainder of this section we assume that the invariant and residual subspaces are known (they can be computed from the joint distributions), which simplifies the theoretical analysis.

\subsection{Zero-shot task}
\label{sec:zero_shot_learning}
We start by analyzing the zero-shot task, in which no adaptation data are observed, but only historical data and $X_{t^*}$.
Under the generalization assumption (Assumption~\ref{ass:generalization}), $\mathcal{S}^{\operatorname{inv}}$ is opt-invariant on $\mathbb{N}$ and we can characterize all possible models defined as in \eqref{eq:model_def} by the possible variations of the true time-varying parameter in the residual space $\mathcal{S}^{\operatorname{res}}$, or, equivalently, by all possible values of $\gamma_{0,t}-\beta^{\operatorname{inv}}\in\mathcal{S}^{\operatorname{res}}$. We then obtain that $\beta^{\operatorname{inv}}$ is worst case optimal in the following sense. 
\begin{theorem} 
\label{thm:inv_eff_maximin_optimal}
Under Assumptions~\ref{ass:uniqueness_time_inv_sp} and \ref{ass:generalization} it holds for all $t\in\mathbb{N}$ that
    \begin{equation*}
    \beta^{\operatorname{inv}} = \argmax_{\beta\in\mathbb{R}^p} \inf_{\substack{\gamma_{0,t}\in\mathbb{R}^p:\\ \gamma_{0,t}-\beta^{\operatorname{inv}}\in\mathcal{S}^{\operatorname{res}}}} \operatorname{\Delta Var}_t(\beta).
\end{equation*}
\end{theorem}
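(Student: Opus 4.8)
The plan is to reduce the minimax expression to an elementary strictly-concave quadratic maximization. The first step is to write $\operatorname{\Delta Var}_t(\beta)$ explicitly in terms of $\gamma_{0,t}$ and $\Sigma_t$: since the model satisfies $\mathbb{E}[\epsilon_t\mid X_t]=0$, the noise $\epsilon_t$ is uncorrelated with every linear functional of $X_t$, so \eqref{eq:explained_var} yields $\operatorname{\Delta Var}_t(\beta)=2\gamma_{0,t}^{\top}\Sigma_t\beta-\beta^{\top}\Sigma_t\beta$, with $\Sigma_t$ held fixed as $\gamma_{0,t}$ ranges. The crucial conceptual point to pin down here is that $\beta^{\operatorname{inv}}$ is a \emph{fixed reference quantity} shared by every model in the feasible family: by Assumption~\ref{ass:generalization}(ii) the subspace $\mathcal{S}^{\operatorname{inv}}$ is opt-invariant on $\mathbb{N}$, and Proposition~\ref{prop:invariance}(i) gives $\beta^{\operatorname{inv}}=\Pi_{\mathcal{S}^{\operatorname{inv}}}\gamma_{0,t}$, so the invariant projection coincides across all admissible $\gamma_{0,t}$ and the subspaces $\mathcal{S}^{\operatorname{inv}},\mathcal{S}^{\operatorname{res}}$ are common to the whole family. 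Using the constraint $\gamma_{0,t}-\beta^{\operatorname{inv}}\in\mathcal{S}^{\operatorname{res}}$ I substitute $\gamma_{0,t}=\beta^{\operatorname{inv}}+\delta$ with $\delta\in\mathcal{S}^{\operatorname{res}}$ ranging freely, giving
\begin{equation*}
\operatorname{\Delta Var}_t(\beta)=2(\beta^{\operatorname{inv}})^{\top}\Sigma_t\beta-\beta^{\top}\Sigma_t\beta+2\delta^{\top}\Sigma_t\beta.
\end{equation*}

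The next step is to evaluate the inner infimum over $\delta\in\mathcal{S}^{\operatorname{res}}$. Only the last term depends on $\delta$, and it is linear in $\delta$ over the \emph{linear} subspace $\mathcal{S}^{\operatorname{res}}$. Hence the infimum equals $-\infty$ unless the functional $\delta\mapsto\delta^{\top}\Sigma_t\beta$ vanishes identically on $\mathcal{S}^{\operatorname{res}}$, that is, unless $\Pi_{\mathcal{S}^{\operatorname{res}}}\Sigma_t\beta=0$; in that case the infimum reduces to the constant $2(\beta^{\operatorname{inv}})^{\top}\Sigma_t\beta-\beta^{\top}\Sigma_t\beta$. This is the heart of the worst-case statement: an adversary that loads the true parameter arbitrarily far into the residual directions drives the guaranteed explained variance to $-\infty$ for every $\beta$ whose $\Sigma_t$-image has a residual component, so the outer maximization is effectively restricted to the feasible set $C_t\coloneqq\{\beta\in\mathbb{R}^p:\Pi_{\mathcal{S}^{\operatorname{res}}}\Sigma_t\beta=0\}$.

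It then remains to maximize $g(\beta)\coloneqq 2(\beta^{\operatorname{inv}})^{\top}\Sigma_t\beta-\beta^{\top}\Sigma_t\beta$ over $C_t$. Because $\Sigma_t$ is strictly positive definite, $g$ is strictly concave on all of $\mathbb{R}^p$, and $\nabla g(\beta)=2\Sigma_t(\beta^{\operatorname{inv}}-\beta)=0$ identifies $\beta^{\operatorname{inv}}$ as its unique unconstrained maximizer. I would then check feasibility: $\beta^{\operatorname{inv}}\in\mathcal{S}^{\operatorname{inv}}$ by Proposition~\ref{prop:invariance}(i), while Assumption~\ref{ass:generalization}(i) gives $\Pi_{\mathcal{S}^{\operatorname{inv}}}\Sigma_t\Pi_{\mathcal{S}^{\operatorname{res}}}=0$ for all $t\in\mathbb{N}$, hence by symmetry of $\Sigma_t$ and of the projections also $\Pi_{\mathcal{S}^{\operatorname{res}}}\Sigma_t\Pi_{\mathcal{S}^{\operatorname{inv}}}=0$, so that $\Pi_{\mathcal{S}^{\operatorname{res}}}\Sigma_t\beta^{\operatorname{inv}}=\Pi_{\mathcal{S}^{\operatorname{res}}}\Sigma_t\Pi_{\mathcal{S}^{\operatorname{inv}}}\beta^{\operatorname{inv}}=0$, i.e. $\beta^{\operatorname{inv}}\in C_t$. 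Since the unique global maximizer of the strictly concave $g$ already lies in $C_t$, it is automatically the unique maximizer over $C_t$, which identifies the $\argmax$ as $\beta^{\operatorname{inv}}$ and completes the argument.

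I expect the only genuinely delicate point to be the bookkeeping in the first paragraph, namely justifying that $\beta^{\operatorname{inv}}$ and the decomposition $\mathcal{S}^{\operatorname{inv}}\oplus\mathcal{S}^{\operatorname{res}}$ do not move with the adversarial $\gamma_{0,t}$ and that $\Sigma_t$ may be held fixed throughout the inner infimum; this is exactly what opt-invariance on $\mathbb{N}$ (Assumption~\ref{ass:generalization}(ii)) together with Assumption~\ref{ass:uniqueness_time_inv_sp} supplies. Once that is settled, the computation of the linear infimum and the strict-concavity argument are routine, and no Lagrangian or KKT analysis is needed because the unconstrained optimum happens to be feasible.
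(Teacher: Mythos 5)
Your proof is correct, and it shares the paper's central idea: the adversary can exploit any residual alignment of $\beta$ to drive the inner infimum to $-\infty$, so the outer maximization collapses onto a restricted set. Where you genuinely diverge is in how the restricted problem is closed out. The paper phrases the restriction directly as $\beta\in\mathcal{S}^{\operatorname{inv}}$ (using the projection decomposition of $\operatorname{\Delta Var}_t$), notes that on $\mathcal{S}^{\operatorname{inv}}$ the infimum is just $\operatorname{\Delta Var}_t(\beta)$, and then invokes opt-invariance of $\mathcal{S}^{\operatorname{inv}}$ on $\mathbb{N}$ (Assumption~\ref{ass:generalization}(ii)) to replace $\argmax_{\beta\in\mathcal{S}^{\operatorname{inv}}}\operatorname{\Delta Var}_t(\beta)$ by $\argmax_{\beta\in\mathcal{S}^{\operatorname{inv}}}\overline{\operatorname{\Delta Var}}(\beta)$, which is $\beta^{\operatorname{inv}}$ by Definition~\ref{def:time_invariant_subspace_effect}. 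You instead describe the feasible set as $C_t=\{\beta:\Pi_{\mathcal{S}^{\operatorname{res}}}\Sigma_t\beta=0\}$ (which coincides with $\mathcal{S}^{\operatorname{inv}}$ under Assumption~\ref{ass:generalization}(i), though you never need this identification) and finish by solving the quadratic program: the worst-case objective $g(\beta)=2(\beta^{\operatorname{inv}})^{\top}\Sigma_t\beta-\beta^{\top}\Sigma_t\beta$ is strictly concave with unconstrained maximizer $\beta^{\operatorname{inv}}$, which is feasible because $\beta^{\operatorname{inv}}\in\mathcal{S}^{\operatorname{inv}}$ and $\Pi_{\mathcal{S}^{\operatorname{res}}}\Sigma_t\Pi_{\mathcal{S}^{\operatorname{inv}}}=0$. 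Your route buys an elementary, self-contained ending with uniqueness of the maximizer for free, and it isolates precisely which ingredients each step needs: the optimization itself uses only the decorrelation at time $t$ and $\beta^{\operatorname{inv}}\in\mathcal{S}^{\operatorname{inv}}$, while opt-invariance on $\mathbb{N}$ (plus Assumption~\ref{ass:uniqueness_time_inv_sp}) serves only to guarantee that the fixed vector appearing in the constraint set is the same $\beta^{\operatorname{inv}}$ of Definition~\ref{def:time_invariant_subspace_effect} at every $t$ --- the bookkeeping you correctly flag. The paper's route, by contrast, keeps the connection to the pooled objective $\overline{\operatorname{\Delta Var}}$ explicit, which is the characterization the rest of the framework is built on.
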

We further obtain from  Proposition~\ref{prop:invariance}~$(iii)$ that, under an additional condition, $\beta^{\operatorname{inv}}$ is worst case optimal among all time-invariant parameters. This characterization of $\beta^{\operatorname{inv}}$ allows for a direct comparison with the maximin effect by \citet{meinshausen2015maximin}, which we report in more detail in Remark~\ref{rem:maximin}. 
In absence of further information on $\delta_{t^*}^{\operatorname{res}}$, Theorem~\ref{thm:inv_eff_maximin_optimal} suggests to use an estimate of $\hat{\beta}^{\operatorname{inv}}$ of $\beta^{\operatorname{inv}}$ to predict $Y_{t^*}$, i.e., $\hat{Y}_{t^*} = X_{t^*}^{\top}\hat{\beta}^{\operatorname{inv}}$.
\begin{remark}[Relation to maximin]
\label{rem:maximin}
    The maximin framework introduced by \citet{meinshausen2015maximin} considers a linear regression model with varying coefficients, where the variations do not necessarily happen in a structured way, e.g., in time. 
    Translated to our notation and restricting to time-based changes, their model can be expressed for $t\in[n]$ as
    \begin{equation*}
        Y_t = X_t^{\top}\gamma_{0,t} + \epsilon_t,
    \end{equation*}
    where $\mathbb{E}[\epsilon_t\mid X_t]=0$ and the covariance matrix 
    $\Sigma \coloneqq\operatorname{Var}(X_t)$
    does not vary with $t$. 
    The maximin estimator maximizes the explained variance in the worst (most adversarial) scenario that is covered in the training data;  it is defined as
    \begin{equation*}
        \beta^{\operatorname{mm}}\coloneqq \argmax_{\beta\in\mathbb{R}^p}\min_{t\in[n]}\operatorname{\Delta Var}_t(\beta).
    \end{equation*}
    The maximin estimator guarantees that, for all $t\in[n]$, $\operatorname{Cov}(Y_t-X_t^{\top}\beta^{\operatorname{mm}}, X_t^{\top}\beta^{\operatorname{mm}})\ge 0$ and therefore $\operatorname{\Delta Var}_t(\beta^{\operatorname{mm}})\ge 0$ \citep[see][Equation (8)]{meinshausen2015maximin}. This means that the maximin relies on a weaker notion of invariance that only requires the left-hand side of \eqref{eq:invariant_constraint} to be non-negative instead of zero. This implies that $\beta^{\operatorname{inv}}$ is in general more conservative than $\beta^{\operatorname{mm}}$ in the sense that, for all $t\in[n]$, $\operatorname{\Delta Var}_t(\beta^{\operatorname{mm}})\ge\operatorname{\Delta Var}_t(\beta^{\operatorname{inv}}).$
By finding the invariant and residual subspaces, we determine the domain in which $\gamma_{0,t}$ varies and assume (Assumption~\ref{ass:generalization}) that this does not change even at unobserved time points. The parameter $\beta^{\operatorname{inv}}$ is then worst case optimal over this domain, and guarantees that the explained variance remains positive even for scenarios that are more adversarial than the ones observed in $[n]$, i.e., such that $\operatorname{\Delta Var}_s(\beta^{\operatorname{mm}})<\min_{t\in[n]}\operatorname{\Delta Var}_t(\beta^{\operatorname{mm}})$ for some  $s\in\{n+1, n+2, \dots\}$ (see, for example, the results of the experiment described in Section~\ref{sec:invariant_exp} and shown in Figure~\ref{fig:xplvar_prediction_comparison}). 
Furthermore, the decomposition allows for time adaptation (see Section~\ref{sec:time_adaptation}), which would not be possible starting from the maximin effect. 
\end{remark}
 
\subsection{Adaptation task}
\label{sec:time_adaptation_est}
We now consider the adaptation task in which, additionally to the historical data, adaptation data are available. 
Adaptation data can be used to define an estimator for the residual component $\delta_{t^*}^{\operatorname{res}}$, which we denote by $\hat{\delta}_{t^*}^{\operatorname{res}}$. This, together with an estimator $\hat{\beta}^{\operatorname{inv}}$ for $\beta^{\operatorname{inv}}$ fitted on the historical data gives us an estimator $\hat{\gamma}^{\operatorname{ISD}}_{t^*}\coloneqq \hat{\beta}^{\operatorname{inv}} + \hat{\delta}_{t^*}^{\operatorname{res}}$ for the true time-varying parameter. We compare this estimator with a generic estimator $\hat{\gamma}_{t^*}$ for $\gamma_{0,t^*}$ which uses only the adaptation data.

To do so, we consider the minimax lower bound provided by \citet{mourtada2022exact}[Theorem 1] for the expected squared prediction error of an estimator $\hat{\gamma}$ computed using $n$ i.i.d.\ observations of a random covariate vector $X\in\mathbb{R}^p$ and of the corresponding response $Y$. It is given by
\begin{equation}
\label{eq:minimax_lower_bound}
    \inf_{\hat{\gamma}}\sup_{P\in\mathcal{P}}  \mathbb{E}_P[(X^{\top}(\hat{\gamma}-\gamma))^2] \geq \sigma^2\tfrac{p}{n},
\end{equation}
where $\mathcal{P}$ is the class of distributions over $(X, Y)$ such that $Y=X^{\top}\gamma+\epsilon$, $\mathbb{E}[\epsilon\mid X]=0$ and $\mathbb{E}[\epsilon^2\mid X]<\sigma^2$.
It follows from \eqref{eq:minimax_lower_bound} that, for a generic estimator $\hat{\gamma}_{t^*}$ of $\gamma_{0,t^*}$ based on the adaptation data alone, we can expect at best
to achieve a prediction error of $\sigma_{\operatorname{ad}}^2\frac{p}{m}$, where  $\sigma_{\operatorname{ad}}^2$ is the variance of $\epsilon_t$ for $t\in\mathcal{I}^{\operatorname{ad}}$ (which is assumed to be constant).
We can improve on this if we allow the estimator $\hat{\gamma}_{t^*}$ to also depend on the historical data. To see this, observe that we can always decompose $\hat{\gamma}_{t^*}=\hat{\beta}+\hat{\delta}_{t^*}$ with $\hat{\beta}\in\mathcal{S}^{\operatorname{inv}}$ and $\hat{\delta}_{t^*}\in\mathcal{S}^{\operatorname{res}}$. Moreover, under Assumption~\ref{ass:generalization} we can split the expected prediction error of $\hat{\gamma}_{t^*}$ at $t^*$ accordingly as
\begin{equation*}
    \mathbb{E}[(X_{t^*}^{\top}(\hat{\gamma}_{t^*}-\gamma_{0,t^*}))^2]=\mathbb{E}[((\Pi_{\mathcal{S}^{\operatorname{inv}}}X_{t^*})^{\top}(\hat{\beta}-\beta^{\operatorname{inv}}))^2] + \mathbb{E}[((\Pi_{\mathcal{S}^{\operatorname{res}}}X_{t^*})^{\top}(\hat{\delta}_{t^*}-\delta^{\operatorname{res}}_{t^*}))^2].
\end{equation*}
Then, $\hat{\beta}$ represents an estimator for $\beta^{\operatorname{inv}}$ and can be computed on historical data\footnote{In principle, an estimator for $\beta^{\operatorname{inv}}$ could be computed using both historical and adaptation data. However, the \isd{} procedure is motivated by scenarios in which the size of historical data is very large (and $n\gg m$): this means that it could be computationally costly to update the estimate for the invariant component every time new adaptation data are available, without a significant gain in estimation accuracy. For this reason, we only consider estimators for $\beta^{\operatorname{inv}}$ that use historical data.}, whereas $\hat{\delta}_{t^*}$ estimates $\delta^{\operatorname{res}}_t$ and is based on adaptation data: by decomposing $\hat{\gamma}_{t^*}$ in this way, the best prediction error we can hope for is of the order $\frac{\operatorname{dim}(\mathcal{S}^{\operatorname{inv}})}{n}+\frac{\operatorname{dim}(\mathcal{S}^{\operatorname{res}})}{m}$.
In Section~\ref{sec:estimation}, we prove that $\hat{\gamma}^{\operatorname{\isd}}_{t^*}$ indeed achieves this bound in Theorem~\ref{thm:empirical_explained_variance}. If the invariant subspace is non-degenerate and therefore $\operatorname{dim}(\mathcal{S}^{\operatorname{res}})<p$, and $n$ is sufficiently large, this implies that $\hat{\gamma}^{\operatorname{\isd}}_{t^*}$ has better finite sample performance than estimators based on the adaptation data alone.

\section{\isd{} estimator and its finite sample generalization guarantee}
\label{sec:estimation}
We now construct an empirical estimation procedure for the 
 \isd{} framework, based on the results of Section~\ref{sec:time_invariant_subspace_effect} and on Algorithm~\ref{alg:pop_estimator} described in Section~\ref{sec:time_adaptation}. 
Throughout this section, we assume that Assumptions~\ref{ass:uniqueness_time_inv_sp} and \ref{ass:generalization} are satisfied. 

We assume that we observe both historical and adaptation data as in the adaptation task. We use the historical data to first estimate the decomposition of $\mathbb{R}^p$ into $\mathcal{S}^{\operatorname{inv}}$ and $\mathcal{S}^{\operatorname{res}}$, employing a joint block diagonalization algorithm (Section~\ref{sec:subspace_decomposition_est}). We then use the resulting decomposition to estimate $\beta^{\operatorname{inv}}$. Finally, we use the adaptation data to construct an estimator for $\delta^{\operatorname{res}}_t$ in Section~\ref{sec:residual_effect_estimation}. In Section~\ref{sec:finite_sample_generalization_guarantees} we then show the advantage of separating the optimization as in \eqref{eq:lin_subspace_separation} to estimate $\gamma_{0,t^*}$ at the previously unobserved time point $t^*$, by providing finite sample guarantees for the \isd{} estimator.

 \subsection{Estimating the subspace decomposition}
 \label{sec:subspace_decomposition_est}
  \subsubsection{Approximate joint block diagonalization}
  \label{sec:approx_jbd_estimation}
We first need to find a good estimator for the covariance matrices $\Sigma_t$. Since only one 
observation $(X_t, Y_t)$ is available at each time step $t\in[n]$, some further assumptions are needed 
about how $\Sigma_t$ varies over time.
Here, we assume that $\Sigma_t$ varies smoothly with $t\in[n]$ and is therefore approximately constant in small time windows.
We can then consider a rolling window approach, i.e., consider $K$ windows in $[n]$ of length $w\ll n$ over which the constant approximation is deemed valid, and for the $k$-th time window, $k\in\{1,\dots,K\}$, take the sample covariance $\hat{\Sigma}_k$ as an estimator for $\Sigma_t$ in such time window.
 
 Given the set of estimated covariance matrices $\{\hat{\Sigma}_k\}_{k=1}^K$, we now need to estimate an orthogonal transformation $\hat{U}$ that approximately joint block diagonalizes them. 
  We provide an overview of joint block diagonalization methods in Section~\ref{rem:ajbd} in the appendix. In our simulated settings, we solve the approximate joint block diagonalization (AJBD) problem via approximate joint diagonalization (AJD), since we found this approach to represent a good trade off between computational complexity and accuracy. More in detail, similarly to what is proposed by~\citet{tichavsky2012algorithms}, we start from the output matrix $V\in\mathbb{R}^{p\times p}$ of the \texttt{uwedge} algorithm by~\citet{tichavsky2008fast}, which solves AJD for the set $\{\hat{\Sigma}_k\}_{k=1}^K$, i.e., it is such that for all $k\in\{1,\dots,K\}$ the matrix $V^{\top}\hat{\Sigma}_kV$ is approximately diagonal. We then use the off-diagonal elements of the approximately diagonalized covariance matrices to identify the common diagonal blocks.
As described more in detail in Section~\ref{rem:ajbd}, this is achieved by finding an appropriate permutation matrix $P\in\mathbb{R}^{p\times p}$ for the columns of $V$ such that for all $k\in\{1,\dots,K\}$ the matrix $(VP)^{\top}\hat{\Sigma}_k(VP)$ is approximately joint block diagonal.
The estimated irreducible joint block diagonalizer is then given by $\hat{U}=VP$. We denote by $q^{\hat{U}}_{\max}$ the number of estimated diagonal blocks and  by $\{\hat{\mathcal{S}_j}\}_{j=1}^{q^{\hat{U}}_{\max}}$ the estimated irreducible orthogonal and $(X_t)_{t\in[n]}$-decorrelating partition.

\subsubsection{Estimating the invariant and residual subspaces}
\label{sec:subspaces_estimation}
We now estimate the invariant and residual subspaces using the estimated irreducible joint block diagonalizer $\hat{U}$. 
To do so, we first estimate the true time-varying parameter $\gamma_{0,t}$ using similar considerations as the ones made  in Section~\ref{sec:approx_jbd_estimation}.
We assume for example that $\gamma_{0,t}$ is approximately constant in small windows (for simplicity, we consider the same $K$ windows defined in Section~\ref{sec:approx_jbd_estimation}). This assumption is helpful to define the estimation procedure, but is not strictly necessary for the \isd{} framework to work. We show in Appendix~\ref{sec:fast_gamma_changes} that the procedure can still work if this assumption is violated.
We then compute the regression coefficient $\hat{\gamma}_{k}$ of $\mathbf{Y}_k$ on $\mathbf{X}_k$, where $\mathbf{Y}_k\in\mathbb{R}^{w\times 1}$ and $\mathbf{X}_k\in\mathbb{R}^{w\times p}$ are the observations in the $k$-th time window\footnote{We have so far omitted the intercept in our linear model, but it can be included by adding a constant term to $X_t$ when estimating the linear parameters. We explicitly show how to take the intercept into account in Section~\ref{sec:isd_algorithm_pseudocode} in the appendix.}.
 We use the estimates $\hat{\gamma}_k$ to determine which of the subspaces identified by $\hat{U}$ are opt-invariant. For all $j\in\{1,\dots,q^{\hat{U}}_{\max}\}$, we take the sets of indices $S_j$ as defined in Proposition~\ref{prop:irreducible_orthogonal_partition}, and consider the estimated orthogonal projection matrices $\Pi_{\hat{\mathcal{S}}_j} = \hat{U}^{S_j}(\hat{U}^{S_j})^{\top}$ from $\mathbb{R}^p$ onto the $j$-th subspace $\hat{\mathcal{S}_j}$ of the estimated irreducible orthogonal partition. 
It follows from Lemma~\ref{lem:time_var_parameter_separation} that we can find the opt-invariant subspaces by checking for all $j\in\{1,\dots,q^{\hat{U}}_{\max}\}$ whether $\Pi_{\hat{\mathcal{S}}_j}\hat{\gamma}_{k}$ remains approximately constant for $k\in\{1,\dots, K\}$.
To do so, let $\hat{\gamma}\coloneqq (\sum_{k=1}^K\operatorname{Var}(\hat{\gamma}_k)^{-1})^{-1}\sum_{k=1}^K \operatorname{Var}(\hat{\gamma}_k)^{-1}\hat{\gamma}_{k}$ be the average of the estimated regression coefficients inversely weighted by their variance. We further use that, by Lemma~\ref{lem:opt_invariant_projection}, 
if $\hat{\mathcal{S}}_j$
is opt-invariant on $[n]$ then the weighted average of the (approximately constant) projected regression coefficient $\Pi_{\hat{\mathcal{S}}_j}\hat{\gamma}_{k}$, i.e., $\Pi_{\hat{\mathcal{S}}_j}\hat{\gamma}$, approximately satisfies the time-invariance constraint \eqref{eq:invariant_constraint}. 
This approach is motivated by Proposition~\ref{prop:invariance}~$(iii)$. If the corresponding
assumption cannot be assumed to hold, other methods can alternatively be used to determine whether $\Pi_{\hat{\mathcal{S}}_j}\hat{\gamma}_{k}$ is constant, e.g., checking its gradient or variance.
In \eqref{eq:invariant_constraint}, we can equivalently consider the correlation in place of the covariance, that is, $\operatorname{corr}(Y_t-X_t^{\top}\beta, X_t^{\top}\beta)=0$: an estimate of this correlation allows us to obtain a normalized measure of \eqref{eq:invariant_constraint} that is comparable across different experiments. Formally, we consider for all $k\in\{1, \dots, K\}$ and for all $j\in\{1,\dots,q^{\hat{U}}_{\max}\}$
\begin{equation*}
    \hat{c}_k^j \coloneqq \widehat{\operatorname{Corr}}(\mathbf{Y}_k - \mathbf{X}_k(\Pi_{\hat{\mathcal{S}}_j}\hat{\gamma}), \mathbf{X}_k(\Pi_{\hat{\mathcal{S}}_j}\hat{\gamma}))
\end{equation*}
and check, for all $j\in\{1,\dots,q^{\hat{U}}_{\max}\}$, whether
\begin{equation}
\label{eq:opt-invariant_test}
    \frac{1}{K}\sum_{k=1}^K \left| \hat{c}_k^j \right| \le \lambda
\end{equation}
 for some small threshold $\lambda\in [0, 1]$. The threshold $\lambda$ can be chosen, for example, using cross-validation (more details are provided in Section~\ref{rem:constant_threshold_selection} in the appendix). 
An estimator of the invariant and residual subspaces is then given by
\begin{equation}
\label{eq:time-invariant_subspace_effect_estimator}
    \hat{\mathcal{S}}^{\operatorname{inv}} = \bigoplus_{\substack{j\in\{1,\dots,q^{\hat{U}}_{\max}\}:\\ \eqref{eq:opt-invariant_test} \text{ is satisfied}}} \hat{\mathcal{S}}_j,  \quad \hat{\mathcal{S}}^{\operatorname{res}} = \bigoplus_{\substack{j\in\{1,\dots,q^{\hat{U}}_{\max}\}:\\ \eqref{eq:opt-invariant_test} \text{ is not satisfied}}} \hat{\mathcal{S}}_j,
\end{equation}
where we approximate opt-invariance with the inequality \eqref{eq:opt-invariant_test} being satisfied.

\subsection{Estimating the invariant and residual components}
\label{sec:residual_effect_estimation}
Let $\hat{U}^{\operatorname{inv}}$ and $\hat{U}^{\operatorname{res}}$ be the submatrices of $\hat{U}$ whose columns span $\hat{\mathcal{S}}^{\operatorname{inv}}$ and $\hat{\mathcal{S}}^{\operatorname{res}}$, respectively. We propose to estimate $\beta^{\operatorname{inv}}$ using the following plug-in estimator for~\eqref{eq:invariant_component_pop_estimator}
\begin{align}
        \hat{\beta}^{\operatorname{inv}} & \coloneqq \hat{U}^{\operatorname{inv}} ((\hat{U}^{\operatorname{inv}})^{\top}\mathbf{X}^{\top}\mathbf{X}\hat{U}^{\operatorname{inv}}) ^{-1} (\hat{U}^{\operatorname{inv}})^{\top}\mathbf{X}^{\top}\mathbf{Y}.
    \label{eq:invariant_eff_estimator}
\end{align}

We consider now the new observation of the covariates $X_{t^*}$ at time $t^*$ and the adaptation data $(X_t, Y_t)_{t\in\mathcal{I}^{\operatorname{ad}}}$ introduced in Setting~\ref{set:definition}, and denote by $\mathbf{X}^{\operatorname{ad}}\in\mathbb{R}^{m\times p}$ and $\mathbf{Y}^{\operatorname{ad}}\in\mathbb{R}^{m\times 1}$ the matrices containing this adaptation data.
Similarly to $\hat{\beta}^{\operatorname{inv}}$, using \eqref{eq:adaptation_parameter_estimator} we obtain the following plug-in estimator for $\delta^{\operatorname{res}}_{t^*}$
\begin{align}
    \hat{\delta}^{\operatorname{res}}_{t^*} 
     & \coloneqq \hat{U}^{\operatorname{res}}((\hat{U}^{\operatorname{res}})^{\top}(\mathbf{X}^{\operatorname{ad}})^{\top}\mathbf{X}^{\operatorname{ad}}\hat{U}^{\operatorname{res}})^{-1} (\hat{U}^{\operatorname{res}})^{\top}(\mathbf{X}^{\operatorname{ad}})^{\top}(\mathbf{Y}^{\operatorname{ad}}-\mathbf{X}^{\operatorname{ad}}\hat{\beta}^{\operatorname{inv}}). \label{eq:residual_effect_estimator}
\end{align}
We can now define the \isd{} estimator for the true time-varying parameter at $t^*$ as
\begin{equation}
    \hat{\gamma}^{\operatorname{\isd}}_{t^*} \coloneqq \hat{\beta}^{\operatorname{inv}} + \hat{\delta}^{\operatorname{res}}_{t^*}. \label{eq:true_parameter_empirical_estimator}
\end{equation}
A prediction of the response $Y_{t^*}$ is then given by $\hat{Y}_{t^*}=X_{t^*}^{\top} \hat{\gamma}_{t^*}^{\operatorname{\isd}}$.

We provide the pseudocode summarizing the whole \isd{} estimation procedure in the Appendix~\ref{sec:isd_algorithm_pseudocode}.

\begin{example}[continues=ex:running_ex_2d]

    Assume that in  Example~\ref{ex:ex:running_ex_2d} the true time-varying parameter at time points $t\in\{n+1, n+2, \dots\}$ is given by 
    \begin{equation*}
        \gamma_{0,t}^{\operatorname{ad}} = \begin{bmatrix}
            1+ 0.5\sqrt{3}-1.5\sqrt{3}\frac{t-n}{T-n}\sin^2 (\frac{t-n}{T-n} +1) \\
            \sqrt{3}-0.5 +1.5\frac{t-n}{T-n}\sin^2 (\frac{t-n}{T-n} +1)
        \end{bmatrix}
    \end{equation*}
    for some $T\in\mathbb{N}$.
    We can verify that $\Pi_{\mathcal{S}_2}\gamma_{0,t}^{\operatorname{ad}}=\Pi_{\mathcal{S}_2}\gamma_{0,t} = \beta^{\operatorname{inv}}$, and therefore the invariant and residual subspaces defined on $[n]$ generalize to $\{n+1, n+2, \dots \}$ and Assumption~\ref{ass:generalization} is satisfied. 
    Moreover, we obtain that for $t\in\{n+1,\dots\}$ the residual component expressed under the irreducible orthogonal partition basis is
    \begin{equation*}
        U^{\top}\delta^{\operatorname{res}}_t = \begin{bmatrix}
        1-3\frac{t-n}{T-n}\sin^2 (\frac{t-n}{T-n} +1) \\ 0
    \end{bmatrix}.
    \end{equation*}
    The first entry now takes values in $[-1.5, 1]$, which is disjoint from the range of values of the first coordinate observed in $[n]$ (which was $[1, 3]$). 
    
    We now take $T=350$ and consider an online setup in which we sequentially observe $X_t^*$ at a new time point $t^*\in\{n+1,\ldots\}$, 
    and assume that $Y_t$ is observed until $t=t^*-1$. We take as historical data the observations on $[n]$, and consider as adaptation data the observations in windows $\mathcal{I}^{\operatorname{ad}} = \{t^*-m,\ldots, t^*-1\}$ of length $m=16$. After estimating $\beta^{\operatorname{inv}}$ on historical data, we use the adaptation data to estimate $\delta^{\operatorname{res}}_{t^*}$ and the OLS solution $\gamma^{\operatorname{OLS}}_{t^*}$. We repeat this online step $350$ times (each time increasing $t^*$ by one). Figure~\ref{fig:2d_ex} shows the results of this experiment.
\end{example}
\subsection{Finite sample generalization guarantee}
\label{sec:finite_sample_generalization_guarantees}
Considering the setting described in Section~\ref{sec:time_adaptation}, we now compare the \isd{} estimator in \eqref{eq:true_parameter_empirical_estimator} with the OLS estimator computed on $\mathcal{I}^{\operatorname{ad}}$
, i.e., $\hat{\gamma}_{t^*}^{\operatorname{OLS}} \coloneqq ((\mathbf{X}^{\operatorname{ad}})^{\top}\mathbf{X}^{\operatorname{ad}})^{-1} (\mathbf{X}^{\operatorname{ad}})^{\top}\mathbf{Y}^{\operatorname{ad}}$. We assume that we are given the (oracle) subspaces $\mathcal{S}^{\operatorname{inv}}$ and $\mathcal{S}^{\operatorname{res}}$, and consider the expected explained variance at $t^*$ of $\hat{\gamma}_{t^*}^{\operatorname{\isd}}$ and $\hat{\gamma}_{t^*}^{\operatorname{OLS}}$.
More in detail, the expected explained variance at $t^*$ of an arbitrary estimator $\hat{\gamma}$ of $\gamma_{0,t^*}$ is given by 
\begin{equation}
\label{eq:expected_explained_variance}
    \mathbb{E}[\operatorname{\Delta Var}_{t^*}(\hat{\gamma})] \coloneqq \mathbb{E}[\operatorname{Var}(Y_{t^*})-\operatorname{Var}(Y_{t^*}-X_{t^*}^{\top}\hat{\gamma}\mid \hat{\gamma})].
\end{equation}
Evaluating~\eqref{eq:expected_explained_variance} allows us to obtain a measure of the prediction accuracy of $\hat{\gamma}$: the higher the expected explained variance, the more predictive the estimator is. This also becomes clear by isolating in \eqref{eq:expected_explained_variance} the term $\mathbb{E}[\operatorname{Var}(Y_{t^*}-X_{t^*}^{\top}\hat{\gamma}\mid \hat{\gamma})]$, which represents the mean square prediction error obtained by using $\hat{\gamma}$ (see Remark~\ref{rem:expl_var_and_mse}). Our goal is to show that the explained variance by $\hat{\gamma}_{t^*}^{\operatorname{\isd}}$ is always greater than that of the OLS estimator.  
\begin{theorem}
\label{thm:empirical_explained_variance}
    Assume Assumption~\ref{ass:generalization} and that, in model \eqref{eq:model_def}, $\gamma_{0,t}$ and the variances of $X_t$ and $\epsilon_t$ do not change with respect to $t\in\mathcal{I}^{\operatorname{ad}}\cup t^*$, and denote them by $\gamma_{0,t^*}$, $\Sigma_{t^*}$ and $\sigma_{\operatorname{ad}}^{2}$, respectively. 
    Moreover, let $c,\sigma_{\epsilon,\max}^{2}>0$ be constants such that for all $n\in\mathbb{N}$ it holds that $\sigma_{\epsilon,\max}^{2}\geq \max_{t\in[n]}\operatorname{Var}(\epsilon_t)$ and for all $m\in\mathbb{N}$ with $m\geq p$, $\lambda_{\min}(\frac{1}{m}(\mathbf{X}^{\operatorname{ad}})^{\top}\mathbf{X}^{\operatorname{ad}})\ge c$ almost surely, where $\lambda_{\min}(\cdot)$ denotes the smallest eigenvalue.
    Further assume that the invariant and residual subspaces $\mathcal{S}^{\operatorname{inv}}$ and $\mathcal{S}^{\operatorname{res}}$ are known. Then, there exist $C_{\operatorname{inv}}, C_{\operatorname{res}} > 0$ constants such that for all $n, m\in\mathbb{N}$ with $n,m\geq p$ it holds that
    \begin{align*}
        \operatorname{MSPE}(\hat{\gamma}_{t^*}^{\operatorname{\isd}}) & \coloneqq\mathbb{E}[(X_{t^*}(\hat{\gamma}_{t^*}^{\operatorname{\isd}}-\gamma_{0,t^*}))^2] 
        \le\sigma_{\epsilon,\max}^2\tfrac{\operatorname{dim}(\mathcal{S}^{\operatorname{inv}}) }{n} C_{\operatorname{inv}} + \sigma_{\operatorname{ad}}^2\tfrac{\operatorname{dim}(\mathcal{S}^{\operatorname{res}}) }{m} C_{\operatorname{res}}.
    \end{align*}
     Furthermore, for all $n, m\in\mathbb{N}$ with $n,m\geq p$ it holds that
    \begin{equation*}
   \operatorname{MSPE}(\hat{\gamma}_{t^*}^{\operatorname{OLS}})-\operatorname{MSPE}(\hat{\gamma}_{t^*}^{\operatorname{\isd}}) \ge \sigma_{\operatorname{ad}}^2\tfrac{\operatorname{dim}(\mathcal{S}^{\operatorname{inv}})}{m} - \sigma_{\epsilon,\max}^2\tfrac{ \operatorname{dim}(\mathcal{S}^{\operatorname{inv}}) }{n}C_{\operatorname{inv}}.
\end{equation*}
\end{theorem}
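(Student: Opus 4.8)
The plan is to exploit the orthogonal error decomposition already set up in Section~\ref{sec:time_adaptation_est} and then to control each piece as a low-dimensional regression: $\hat\beta^{\operatorname{inv}}$ as a $\operatorname{dim}(\mathcal S^{\operatorname{inv}})$-dimensional pooled regression on the $n$ independent historical points, and $\hat\delta^{\operatorname{res}}_{t^*}$ as a $\operatorname{dim}(\mathcal S^{\operatorname{res}})$-dimensional regression on the $m$ adaptation points. Since $\hat\beta^{\operatorname{inv}}-\beta^{\operatorname{inv}}\in\mathcal S^{\operatorname{inv}}$ and $\hat\delta^{\operatorname{res}}_{t^*}-\delta^{\operatorname{res}}_{t^*}\in\mathcal S^{\operatorname{res}}$, and since these estimators are built from independent samples and are independent of $X_{t^*}$, Assumption~\ref{ass:generalization}(i) ($\Pi_{\mathcal S^{\operatorname{inv}}}\Sigma_{t^*}\Pi_{\mathcal S^{\operatorname{res}}}=0$) makes the cross term vanish, giving $\operatorname{MSPE}(\hat\gamma^{\operatorname{\isd}}_{t^*})=A+B$ with $A\coloneqq\mathbb E[((\Pi_{\mathcal S^{\operatorname{inv}}}X_{t^*})^{\top}(\hat\beta^{\operatorname{inv}}-\beta^{\operatorname{inv}}))^2]$ and $B\coloneqq\mathbb E[((\Pi_{\mathcal S^{\operatorname{res}}}X_{t^*})^{\top}(\hat\delta^{\operatorname{res}}_{t^*}-\delta^{\operatorname{res}}_{t^*}))^2]$.

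To bound $A$, I would set $Z_t\coloneqq(U^{\operatorname{inv}})^{\top}X_t$ and, using \eqref{eq:invariant_eff_estimator}, write $A$ as the risk of the $\operatorname{dim}(\mathcal S^{\operatorname{inv}})$-dimensional pooled regression of $Y_t$ on $Z_t$. With the fitted residual $\eta_t=Y_t-X_t^{\top}\beta^{\operatorname{inv}}=X_t^{\top}\delta^{\operatorname{res}}_t+\epsilon_t$ and $\hat\alpha-\alpha=(\mathbf Z^{\top}\mathbf Z)^{-1}\mathbf Z^{\top}\boldsymbol\eta$, conditioning on the historical design and using the independence of $X_{t^*}$ gives $A=\mathbb E[(\hat\alpha-\alpha)^{\top}(U^{\operatorname{inv}})^{\top}\Sigma_{t^*}U^{\operatorname{inv}}(\hat\alpha-\alpha)]$. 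Assumption~\ref{ass:generalization}(i) yields $\mathbb E[Z_t\eta_t]=0$ for every $t$, so the residual-direction fluctuation $X_t^{\top}\delta^{\operatorname{res}}_t$ is uncorrelated with $Z_t$; combined with the per-$t$ independence of the observations, one computes the conditional covariance of $\hat\alpha$ and, via a trace/eigenvalue bound of the kind underlying the exact Gaussian risk of \citet{mourtada2022exact}, obtains $A\le\sigma^2_{\epsilon,\max}\tfrac{\operatorname{dim}(\mathcal S^{\operatorname{inv}})}{n}C_{\operatorname{inv}}$. The delicate point is that the effective per-observation noise here is $\operatorname{Var}(\eta_t)=\operatorname{Var}(\epsilon_t)+\operatorname{Var}(X_t^{\top}\delta^{\operatorname{res}}_t)$, so it also carries the residual-subspace variance, which is \emph{not} controlled by $\sigma^2_{\epsilon,\max}$: it must be bounded uniformly over $t\in\mathbb N$ (a boundedness of $\delta^{\operatorname{res}}_t$ and $\Sigma_t$ implicit in the statement, consistent with Remark~\ref{rem:time_changes}) and absorbed into $C_{\operatorname{inv}}$ so that $C_{\operatorname{inv}}$ stays independent of $n$.

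For $B$, I would condition on $\hat\beta^{\operatorname{inv}}$, a function of the historical data and hence independent of the adaptation data. By \eqref{eq:residual_effect_estimator}, $\hat\delta^{\operatorname{res}}_{t^*}$ is then the OLS fit of $\mathbf Y^{\operatorname{ad}}-\mathbf X^{\operatorname{ad}}\hat\beta^{\operatorname{inv}}$ on $\mathbf X^{\operatorname{ad}}U^{\operatorname{res}}$; because $\gamma_{0,t},\Sigma_t,\sigma^2_{\operatorname{ad}}$ are constant on $\mathcal I^{\operatorname{ad}}\cup\{t^*\}$ (Setting~\ref{set:definition}), the target $\delta^{\operatorname{res}}_{t^*}$ is fixed and the fit is unbiased given $\hat\beta^{\operatorname{inv}}$. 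The almost-sure bound $\lambda_{\min}(\tfrac1m(\mathbf X^{\operatorname{ad}})^{\top}\mathbf X^{\operatorname{ad}})\ge c$ controls $((\mathbf X^{\operatorname{ad}})^{\top}\mathbf X^{\operatorname{ad}})^{-1}$ and gives $B\le\sigma^2_{\operatorname{ad}}\tfrac{\operatorname{dim}(\mathcal S^{\operatorname{res}})}{m}C_{\operatorname{res}}$, where the residualization by $\hat\beta^{\operatorname{inv}}$ inflates the effective noise only by the lower-order quantity $A$ (again folded into $C_{\operatorname{res}}$). Summing the two bounds proves the first displayed inequality.

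For the comparison I would observe that $\hat\gamma^{\operatorname{OLS}}_{t^*}$ is likewise unbiased for the constant target on $\mathcal I^{\operatorname{ad}}\cup\{t^*\}$, so its MSPE is the pure variance term $\sigma^2_{\operatorname{ad}}\,\mathbb E[\operatorname{tr}(((\mathbf X^{\operatorname{ad}})^{\top}\mathbf X^{\operatorname{ad}})^{-1}\Sigma_{t^*})]$. Rotating by the irreducible joint block diagonalizer makes $\Sigma_{t^*}$ block diagonal across $\mathcal S^{\operatorname{inv}}$ and $\mathcal S^{\operatorname{res}}$, so for Gaussian design the two groups of regressors are independent and the exact OLS risk of \citet{mourtada2022exact} splits across the blocks. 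Applying the same $t^*$-decomposition as in Step~1, the residual-direction error of $\hat\gamma^{\operatorname{OLS}}_{t^*}$ is no smaller than $B$ (ISD plugs in the historical $\hat\beta^{\operatorname{inv}}$, whereas OLS must pay for the invariant part using only the $m$ adaptation points), while its invariant-direction error is the cost of estimating $\operatorname{dim}(\mathcal S^{\operatorname{inv}})$ parameters from $m$ points and is, by~\eqref{eq:minimax_lower_bound}, at least $\sigma^2_{\operatorname{ad}}\tfrac{\operatorname{dim}(\mathcal S^{\operatorname{inv}})}{m}$; hence $\operatorname{MSPE}(\hat\gamma^{\operatorname{OLS}}_{t^*})\ge B+\sigma^2_{\operatorname{ad}}\tfrac{\operatorname{dim}(\mathcal S^{\operatorname{inv}})}{m}$, and subtracting $A+B$ together with the Step-2 bound on $A$ yields the second inequality. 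I expect the main obstacle to be making this last comparison precise: the sample Gram $(\mathbf X^{\operatorname{ad}})^{\top}\mathbf X^{\operatorname{ad}}$ is not exactly block diagonal, so the full OLS does not split cleanly into the two subspace regressions, and the coupling of $B$ with the invariant estimation error through $\hat\beta^{\operatorname{inv}}$ must be tracked; the exact Gaussian risk identity of \citet{mourtada2022exact}, combined with a Frisch--Waugh--Lovell argument for the residualized blocks, is the cleanest route to the explicit constants, while the uniform control of the residual-subspace variance in Step~2 is the other point requiring care.
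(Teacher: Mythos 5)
Your overall architecture---the cross-term-free split of the MSPE into an invariant part $A$ and a residual part $B$ (using condition (i) of Assumption~\ref{ass:generalization} and independence of $X_{t^*}$ from the estimators), and the control of $B$ via conditioning on $\hat\beta^{\operatorname{inv}}$ and the almost-sure bound $\lambda_{\min}(\tfrac1m(\mathbf{X}^{\operatorname{ad}})^{\top}\mathbf{X}^{\operatorname{ad}})\ge c$---matches the paper's proof. The first genuine gap is in your bound on $A$. By treating $\eta_t = X_t^{\top}\delta^{\operatorname{res}}_t+\epsilon_t$ as the effective regression noise, your bound necessarily carries $\max_{t\in[n]}\operatorname{Var}(X_t^{\top}\delta^{\operatorname{res}}_t)$, and, as you concede, you must then postulate a uniform bound on $\delta^{\operatorname{res}}_t$ over the historical data to keep $C_{\operatorname{inv}}$ free of $n$. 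No such assumption appears in Theorem~\ref{thm:empirical_explained_variance}; on the contrary, Remark~\ref{rem:time_changes} explicitly allows $\gamma_{0,t}$ to vary arbitrarily (and unboundedly) in $\mathcal{S}^{\operatorname{res}}$ on $[n]$, so your constant can be driven to infinity and the claimed bound $\sigma_{\epsilon,\max}^2\tfrac{\operatorname{dim}(\mathcal{S}^{\operatorname{inv}})}{n}C_{\operatorname{inv}}$ is not established. The paper avoids this by conditioning on the \emph{full} design $\mathbf{X}$ rather than on $\mathbf{Z}=\mathbf{X}U^{\operatorname{inv}}$: given $\mathbf{X}$, the drift $X_t^{\top}\delta^{\operatorname{res}}_t$ is deterministic, $\hat\beta^{\operatorname{inv}}$ is treated as conditionally unbiased, and the conditional covariance $\operatorname{Var}(\hat\beta^{\operatorname{inv}}\mid\mathbf{X})$ involves only the diagonal noise covariance $\operatorname{Var}(\boldsymbol{\epsilon})$; the Loewner sandwich $\operatorname{Var}(\hat\beta^{\operatorname{inv}}\mid\mathbf{X})\preceq \sigma_{\epsilon,\max}^{2}\,\Pi_{\mathcal{S}^{\operatorname{inv}}}(\mathbf{X}^{\top}\mathbf{X})^{-1}\Pi_{\mathcal{S}^{\operatorname{inv}}}$ then gives the stated rate with $\delta^{\operatorname{res}}$ appearing nowhere.

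The second gap is in the comparison step. Lower-bounding the invariant-direction cost of $\hat\gamma^{\operatorname{OLS}}_{t^*}$ by \eqref{eq:minimax_lower_bound} is not logically valid: a minimax bound (infimum over estimators, supremum over a distribution class) says nothing about the risk of one particular estimator under the one particular distribution at hand. Your fallback---the exact Gaussian OLS risk plus a Frisch--Waugh--Lovell argument---imports a Gaussian-design assumption that the theorem does not make. The paper needs neither: it writes $\operatorname{MSPE}(\hat\gamma^{\operatorname{OLS}}_{t^*})=\tfrac{\sigma^2_{\operatorname{ad}}}{m}\operatorname{trace}\bigl(\mathbb{E}[(\tfrac1m(\mathbf{X}^{\operatorname{ad}})^{\top}\mathbf{X}^{\operatorname{ad}})^{-1}]\Sigma_{t^*}\bigr)$ and splits $\Sigma_{t^*}=\Pi_{\mathcal{S}^{\operatorname{inv}}}\Sigma_{t^*}\Pi_{\mathcal{S}^{\operatorname{inv}}}+\Pi_{\mathcal{S}^{\operatorname{res}}}\Sigma_{t^*}\Pi_{\mathcal{S}^{\operatorname{res}}}$ \emph{inside the trace}, which is exact by condition (i) of Assumption~\ref{ass:generalization}---so your worry that the sample Gram is not block diagonal is a non-issue, since the split happens on the population factor. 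The residual part of this split coincides exactly with the ISD residual term, so the two cancel in the difference, and the invariant part is lower-bounded by Jensen's inequality, $\mathbb{E}[(\tfrac1m(\mathbf{X}^{\operatorname{ad}})^{\top}\mathbf{X}^{\operatorname{ad}})^{-1}]\succeq\Sigma_{t^*}^{-1}$, combined with the projection identity $\operatorname{trace}(\Pi_{\mathcal{S}^{\operatorname{inv}}}\Sigma_{t^*}^{-1}\Pi_{\mathcal{S}^{\operatorname{inv}}}\Sigma_{t^*}\Pi_{\mathcal{S}^{\operatorname{inv}}})=\operatorname{dim}(\mathcal{S}^{\operatorname{inv}})$ (cf.\ Lemma~\ref{lem:orthogonal_inverse} and \eqref{eq:pseudoinverse_identity}), yielding exactly the term $\sigma^2_{\operatorname{ad}}\tfrac{\operatorname{dim}(\mathcal{S}^{\operatorname{inv}})}{m}$; subtracting the paper's upper bound on the invariant term of the ISD estimator finishes the proof. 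You should replace your minimax/Gaussian argument by this elementary Jensen-plus-cancellation argument, and rework your bound on $A$ with the full-design conditioning, before the proposal can be considered a proof of the stated theorem.
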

From the proof of Theorem~\ref{thm:empirical_explained_variance} it follows that $C_{\operatorname{res}}$ 
can be chosen close to $1$ if we only consider sufficiently large $m$. Moreover, because $\operatorname{MSPE}(\hat{\gamma}_{t^*}^{\operatorname{OLS}})-\operatorname{MSPE}(\hat{\gamma}_{t^*}^{\operatorname{\isd}}) =  \mathbb{E}[\operatorname{\Delta Var}_{t^*}(\hat{\gamma}^{\operatorname{\isd}}_{t^*})] - \mathbb{E}[\operatorname{\Delta Var}_{t^*}(\hat{\gamma}^{\operatorname{OLS}}_{t^*})]$, Theorem~\ref{thm:empirical_explained_variance} implies that if $\operatorname{dim}(\mathcal{S}^{\operatorname{inv}})\ge 1$ and $n$ sufficiently large then
\begin{equation*}
     \mathbb{E}[\operatorname{\Delta Var}_{t^*}(\hat{\gamma}^{\operatorname{\isd}}_{t^*})] > \mathbb{E}[\operatorname{\Delta Var}_{t^*}(\hat{\gamma}^{\operatorname{OLS}}_{t^*})].
\end{equation*}
The first term in the difference between the expected explained variances (or MSPEs) depends on the dimensions of the invariant subspace and of the time-adaptation window $\mathcal{I}^{\operatorname{ad}}$:
 there is a higher gain in using $\hat{\gamma}_{t^*}^{\operatorname{\isd}}$ instead of $\hat{\gamma}_{t^*}^{\operatorname{OLS}}$ if the dimension of $\mathcal{S}^{\operatorname{inv}}$ is large and only a small amount of time-points are available are available in the adaptation data.

\section{Experiments}
\label{sec:simulations}
To show the effectiveness of the \isd{} framework we report the results of two simulation experiments and one real data experiment. 
The first simulation experiment evaluates the estimation accuracy of the invariant component $\hat{\beta}^{\operatorname{inv}}$ 
for increasing sample size $n$ of the historical data. The second simulation experiment
compares the predictive accuracy of $\hat{\gamma}^{\operatorname{\isd}}_t$ and $\hat{\gamma}^{\operatorname{OLS}}_t$ for different sizes $m$ of the adaptation dataset, to empirically investigate the dependence of the MSPE difference on the size of the adaptation data shown in Theorem~\ref{thm:empirical_explained_variance}.
 
In both simulation experiments we let the dimension of the covariates be $p=10$, and $\operatorname{dim}(\mathcal{S}^{\operatorname{inv}})=7$, $\operatorname{dim}(\mathcal{S}^{\operatorname{res}})=3$, and generate data as follows. We sample a random orthogonal matrix $U$, and sample the covariates $X_t$ from a normal distribution with zero mean and covariance matrix $U\tilde{\Sigma}_tU^{\top}$, where $\tilde{\Sigma}_t$ is a block-diagonal matrix with four blocks of dimensions $2, 4, 3$ and $1$, and random entries that change $10$ times in the observed time horizon $n$. We take as true time-varying parameter the rotation by $U$ of the parameter with constant entries equal to $0.2$ (we set these entries all to the same value for simplicity, but they need not to be equal in general)
corresponding to the blocks of sizes $4$ and $3$, and time-varying entries---corresponding to the blocks of sizes $2$ and $1$---equal to $(1-1.5t \sin^2 (it/n+i))/n$, where $i\in \{2, ..., 8\}$ is the entry index (the values of these coefficients range between $-0.25$ and $1$). The noise terms $\epsilon_t$ are sampled 
i.i.d.\ from a normal distribution with zero mean and variance $\sigma_{\epsilon_t}^2=0.64$. 
The dimensions of the subspaces, $\mathcal{S}^{\operatorname{inv}}$ and $\mathcal{S}^{\operatorname{res}}$, and the true time-varying parameter $\gamma_{0,t}$ are chosen to ensure that, in the historical data, $X_t^{\top}\beta^{\operatorname{inv}}$ and $X_t^{\top}\delta_t^{\operatorname{res}}$ explain approximately half of the variance of $Y_t$ each. This choice allows for better visualization in the described experiments. 

In Section~\ref{sec:light_tunnel} we repeat the same experiments on real data coming from a controlled physical system (light tunnel) developed by~\citet{gamella2025causal}. The data used in this experiment will be soon available at \url{https://github.com/juangamella/causal-chamber}.
The code for the presented experiments 
is available at \url{https://github.com/mlazzaretto/Invariant-Subspace-Decomposition}. The implementation of the \texttt{uwedge} algorithm is taken from the Python package \url{https://github.com/sweichwald/coroICA-python} developed by~\citet{pfister2019robustifying}.

\subsection{Invariant decomposition and zero-shot prediction}
\label{sec:invariant_exp}
We first estimate the time invariant parameter $\beta^{\operatorname{inv}}$ for different sample sizes $n$ of the historical data. We consider $n\in\{500, 1000, 2500, 4000, 6000\}$, and repeat the experiment $20$ times for each $n$.  
To 
compute 
$\hat \beta^{\operatorname{inv}}$, we use 
$K=25$ equally distributed windows of length $n/8$ (see Section~\ref{sec:estimation}). Figure~\ref{fig:mse_inv} shows that the mean squared error (MSE) $\|\beta^{\operatorname{inv}}-\hat{\beta}^{\operatorname{inv}}\|_2^2$ converges to zero for increasing  values of $n$. 

We then consider a separate time window of $250$ observations in which the value of the time-varying coefficients (before the transformation using $U$) is set to $-1$. 
\begin{figure}[!bp]
    \centering
    \includegraphics[width=0.5\textwidth]{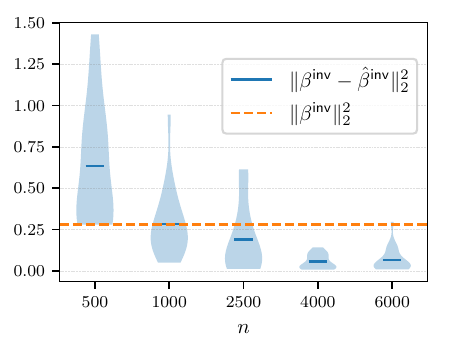}\\
    \caption{MSE of $\hat{\beta}^{\operatorname{inv}}$ for increasing size of the historical data $n$ (see Section~\ref{sec:invariant_exp}). For larger values of $n$, the estimation of the invariant subspace decomposition becomes more precise and leads to smaller errors in the estimated invariant component $\hat{\beta}^{\operatorname{inv}}$.}
    \label{fig:mse_inv}
\end{figure}
We use these observations to test the zero-shot predictive capability of the estimated invariant component, i.e., they can be seen as realizations of the variable $X_{t^*}$ introduced in Setting~\ref{set:definition} (we refer to this window as test data).
We compare the predictive performance of the parameter $\hat{\beta}^{\operatorname{inv}}$ on the historical and test data with that of the oracle invariant parameter $\beta^{\operatorname{inv}}$, the maximin effect $\hat{\beta}^{\operatorname{mm}}$ \citep[computed using the magging estimator proposed 
by~][]{buhlmann2015magging}, and the OLS solution $\hat{\beta}^{\operatorname{OLS}}$, both computed using the historical data. We show in Figure~\ref{fig:sim_inv} the results in terms of the $R^2$ coefficient, given by $R^2 = \frac{\sum_{t=1}^n (\widehat{\operatorname{Var}}(Y_t)-\widehat{\operatorname{Var}}(Y_t-X_t^{\top}\hat{\beta}))}{\sum_{t=1}^n \widehat{\operatorname{Var}}(Y_t)}$.

Figure~\ref{fig:sim_inv} shows that the $R^2$ coefficient of the oracle invariant component $\beta^{\operatorname{inv}}$ remains positive even for values of $\gamma_{0,t}$ in the test data that lie outside of the observed support in the historical data; for increasing $n$ the same holds for the estimated $\hat{\beta}^{\operatorname{inv}}$.
Using $\hat{\beta}^{\operatorname{OLS}}$ or $\hat{\beta}^{\operatorname{mm}}$ leads instead to negative explained variance in this experiment. 
\begin{figure}[t]
    \centering
    \includegraphics[width=\textwidth]{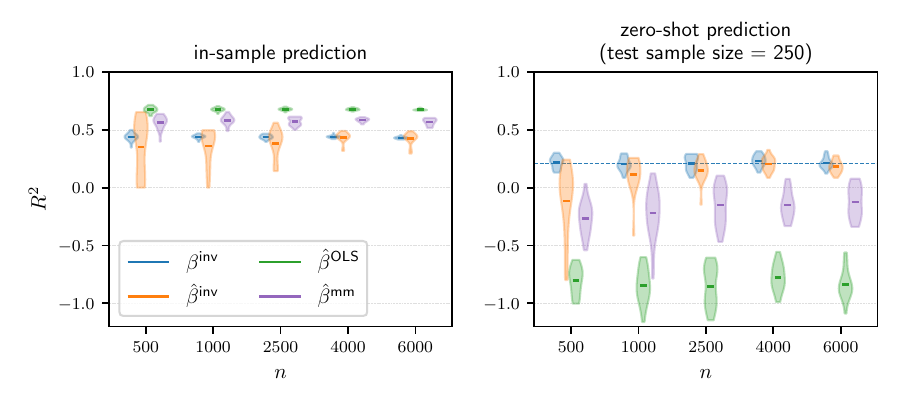}
    \caption{Normalized explained variance ($R^2$) by $\hat{\beta}^{\operatorname{inv}}$ and comparison with $\beta^{\operatorname{inv}}$, $\hat{\beta}^{\operatorname{mm}}$ and $\hat{\beta}^{\operatorname{OLS}}$: training (historical data, left) and zero-shot generalization (test data, right), for different sizes $n$ of the historical data (see Section~\ref{sec:invariant_exp}). The dashed line indicates the population value of the (normalized) explained variance by $\beta^{\operatorname{inv}}$.}
    \label{fig:sim_inv}
\end{figure}

The main limitation of the \isd{} method lies in the estimation of the invariant and residual subspaces. As outlined in Section~\ref{sec:estimation}, this process consists of two main steps, approximate joint block diagonalization and selection of the invariant blocks, both of which are in practice sensitive to noise. In both steps, we implement our estimator to be as conservative as possible, that is, such that it does not on average overestimate the number of common diagonal blocks or the dimension of $\mathcal{S}^{\operatorname{inv}}$, to avoid including part of the residual subspace into $\hat{\mathcal{S}}^{\operatorname{inv}}$. This behavior is however hard to avoid if the size of the historical dataset is not sufficiently large, therefore requiring large values of $n$ for the \isd{} framework to work effectively, see Figures~\ref{fig:mse_inv} and \ref{fig:sim_inv}.

\subsection{Time adaptation}
\label{sec:simulations_time_adaptation}

In the same setting, we now fix the size of the historical dataset to $n=6000$, which we use to estimate $\hat{\beta}^{\operatorname{inv}}$, and consider a test dataset in which the time-varying coefficients (before the transformation using $U$) undergo two shifts and take values $-0.5$ and $-2$ on two consecutive time windows, each containing $1000$ observations.
We assume that the test data are observed sequentially, and take as adaptation data a rolling window of length $m$ contained in the test data and shifting by one time point at the time. We use these sequential adaptation datasets to estimate the residual parameter $\hat{\delta}^{\operatorname{res}}_t$ and the OLS solution $\hat{\gamma}^{\operatorname{OLS}}_t$. We then compute the squared prediction error of $\hat{\gamma}^{\operatorname{\isd}}_t=\hat{\beta}^{\operatorname{inv}}+\hat{\delta}^{\operatorname{res}}_t$ and $\hat{\gamma}^{\operatorname{OLS}}_t$ on the next data point $X_{t+1}$, i.e., $(X_{t+1}^{\top}(\gamma_{0,t+1}-\hat{\gamma}_t))^2$ and approximate the corresponding MSPE using a Monte-Carlo approximation with $1000$ draws from $X_{t+1}$ (these correspond to the $1000$ sequential observations in each of the two windows of the test data).
We repeat the simulation $20$ times for different sizes of the adaptation window, 
 $m\in\{1.5p, 2p, 5p, 10p\}$, and plot 
the obtained MSPE values for $\hat{\gamma}^{\operatorname{OLS}}_t$ against $\hat{\gamma}^{\operatorname{\isd}}_t$. The result is shown in Figure~\ref{fig:sim_time}, and empirically supports Theorem~\ref{thm:empirical_explained_variance}, in particular that the difference in the MSPEs of the OLS and \isd{} estimators is proportional to the ratio $\frac{\operatorname{dim}(\mathcal{S}^{\operatorname{inv}})}{m}$ and is therefore larger for small values of $m$ and shrinks for increasing size of the adaptation data (additional details on this simulation are provided in Section~\ref{sec:simulations_time_adaptation_appendix} in the Appendix).
\begin{figure}[!bp]
    \centering
    \includegraphics[width=\textwidth]{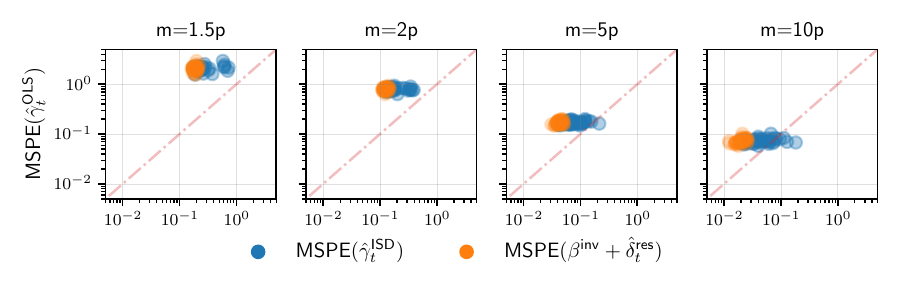}
    \caption{MSPE comparison: $\hat{\gamma}^{\operatorname{\isd}}_t$ (blue dots) vs.\ $\hat{\gamma}^{\operatorname{OLS}}_t$ for $p=10$ and various adaptation window lengths $m$ (see Section~\ref{sec:simulations_time_adaptation}). The \isd{} estimator achieves lower MSPE than the OLS for smaller sizes $m$ of the adaptation window, while the two estimators become comparable for increasing $m$. The orange dots show the MSPE of $\beta^{\operatorname{inv}}\text{(oracle)} + \hat{\delta}^{\operatorname{res}}_t$ vs.\ $\hat{\gamma}^{\operatorname{OLS}}_t$: if the subspace decomposition is known, then the \isd{} always achieves lower MSPE than the OLS.
    }
    \label{fig:sim_time}
\end{figure}
The \isd{} framework is particularly helpful in scenarios in which the size of the available adaptation window is small (two first plots from the left in Figure~\ref{fig:sim_time}). 
Indeed, from Theorem~\ref{thm:empirical_explained_variance} it also follows that the larger the dimension of the invariant subspace the greater the advantage in using the \isd{} framework for prediction rather than a naive OLS approach.
A further benefit of the \isd{} estimator is that it allows us to estimate $\hat{\delta}^{\operatorname{res}}_t$ for small lengths $m$ of the adaptation window where $\operatorname{dim}(\mathcal{S}^{\operatorname{res}})<m<p$ and OLS is not feasible.

We run a similar experiment to show (Figure~\ref{fig:sim_time_regret}) the average cumulative explained variance on the adaptation data over $20$ runs, both by estimators computed only on the historical data and estimators that use the adaptation data. For visualization purposes, we now consider the time-varying coefficients (before the transformation using $U$) equal to $(0.5-t \sin^2 (it/n+i))/n$, where $i\in \{2, ..., 8\}$ is the coefficient index, in the historical data, and constantly equal to $-0.3$, $-0.65$ and $-1$ in three consecutive time windows of size $150$ on the time points after the historical data. We estimate $\hat{\delta}^{\operatorname{res}}_t$ and $\hat{\gamma}^{\operatorname{OLS}}_t$ on a rolling adaptation window of size $m=3p$. The plot in Figure~\ref{fig:sim_time_regret} shows that, on average, the \isd{} framework, by exploiting invariance properties in the observed data, allows us to accurately explain the variance of the response by using small windows for time adaptation, significantly improving on the OLS solution in the same time windows.
\begin{figure}[!t]
    \centering
    \includegraphics[width=.8\textwidth]{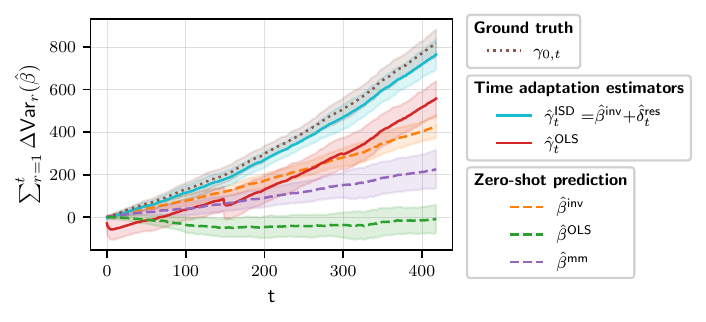}
    \caption{Average cumulative explained variance on the adaptation data and one standard deviation intervals (over $20$ runs) by the true time-varying parameter $\gamma_{0,t}$ and various estimators (see Section~\ref{sec:simulations_time_adaptation}). When few data points are available for adaptation at time $t$, the explained variance of the \isd{} estimator is significantly higher than that of the rolling window OLS, and improves on invariance-based estimators such as the invariant component $\hat{\beta}^{\operatorname{inv}}$ or the maximin $\hat{\beta}^{\operatorname{mm}}$. Due to a shift of $\gamma_{0,t}$ in the residual subspace, the OLS computed on historical data can perform worse than the zero function. 
    }
    \label{fig:sim_time_regret}
\end{figure}

\subsection{Real data example}
\label{sec:light_tunnel}
We now present a toy example that applies \isd{} to real data. The data used for this experiment are collected using a controlled physical system developed by~\citet{gamella2025causal}. The system, shown in Figure~\ref{fig:light_tunnel}, consists of a light tunnel with a light source $X_{\operatorname{RGB}}$ 
whose emitted light
passes through two polarizers with relative angle $\theta$ between them and is captured by a sensor placed at the end of the tunnel, behind the polarizers. At the end of the tunnel there are two additional LED light sources, $X_{L_{31}}$ and $X_{L_{32}}$, whose emitted light is unaffected by the polarizers. The sensor $Y := \tilde{I}_3$ measures the overall infrared light at the end of the tunnel, which is affected by the intensity of the RGB source and by the two LEDs. As described by~\citet{gamella2025causal}, the effect of $X_{\operatorname{RGB}}$ on $Y$ is linear and depends on $\theta$, more precisely, it holds that $Y\propto \operatorname{cos}^2(\theta)X_{\operatorname{RGB}}$
The dependence of $Y$ on the two LEDs is instead independent of $\theta$. 
We take $Y$ as our response, consider the covariates vector $[X_{\operatorname{RGB}}, X_{L_{31}}, X_{L_{32}}]^{\top}\in\mathbb{R}^3$
and assume that the angle $\theta$ is unknown. 
Since we control the three light sources independently, and we expect the dependence of the infrared light $Y$ on the two LEDs to remain the same across time, we hope to detect a nontrivial invariant subspace related to the two LED covariates.
\begin{figure}[t]
    \centering
    \includegraphics[width=.8\linewidth]{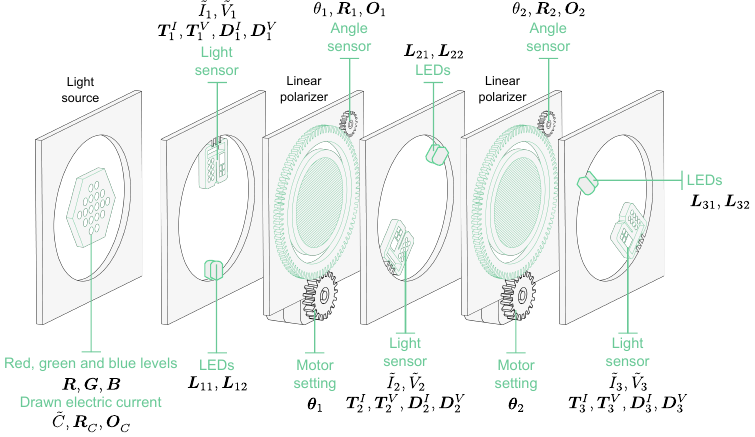}
    \caption{Illustration of the light tunnel, see Section~\ref{sec:light_tunnel}. The figure is taken from \citet{gamella2025causal} (published under a Creative Commons Attribution 4.0 International License (\url{https://creativecommons.org/licenses/by/4.0/})). The variables of interest in our experiment are the $RGB$ values of the light source, the LEDs intensities $L_{31}$ and $L_{32}$, and the measurement of the light sensor at the end of the tunnel $\tilde{I}_3$. %\jonas{[why is this not tilde?]}.}
    }
    \label{fig:light_tunnel}
\end{figure}

The 
available dataset contains $8000$ observations, collected under changing values of the angle $\theta$. The historical dataset contains the first $7000$ observations, and the test dataset the remaining $1000$. Figure~\ref{fig:real_data} shows the dependence of the response on the three covariates, as well as the values of the response and of $\operatorname{cos}^2(\theta)$ through time. 
\begin{figure}[t]
    \centering
    \includegraphics[width=\linewidth]{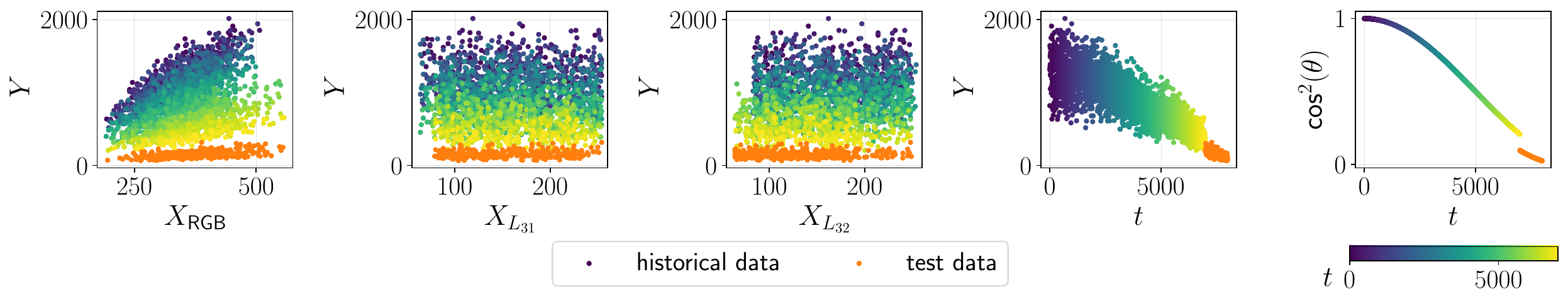}
    \includegraphics[width=\linewidth]{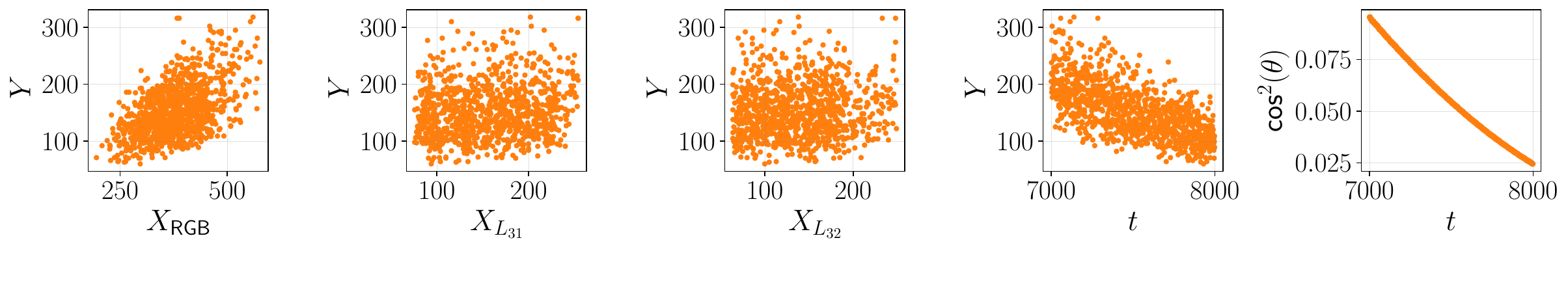}
    \caption{
    Dataset of the experiment discussed in Section~\ref{sec:light_tunnel}.
    The top four figures from the left show the dependence of the infrared measurement $Y$ on the covariates and on time $t$ (encoded by the colormap) for historical data.
    The angle of the polarizers changes over time (top right) and thus has an influence on the linear relationship between $Y$ and $X_{\operatorname{RGB}}$ (top left).
    For this experiment we assume that the angle $\theta$ is unknown.
    The second row shows the same quantities during test time, where the polarization angle is much smaller (bottom right). 
    The dependence between $Y$ and the two LEDs is small but significant (testing a reduced linear model without either $L_{31}$ or $L_{32}$ against the full model results in $p$-values smaller than $10^{-4}$, both for historical and test data).
    }
    \label{fig:real_data}
\end{figure}

We apply \isd{} on the historical data to find an invariant component $\hat{\beta}^{\operatorname{inv}}$. For comparison, we also compute the maximin $\hat{\beta}^{\operatorname{mm}}$\citep{buhlmann2015magging} and the OLS solution $\hat{\beta}^{\operatorname{OLS}}$ on historical data. We then use these estimated parameters for zero-shot prediction on test data. Table~\ref{tab:zero_shot} shows the $R^2$ coefficient, defined as in Section~\ref{sec:invariant_exp} as the fraction of explained variance $R^2 = \frac{\sum_{t=1}^n (\widehat{\operatorname{Var}}(Y_t)-\widehat{\operatorname{Var}}(Y_t-X_t^{\top}\hat{\beta}))}{\sum_{t=1}^n \widehat{\operatorname{Var}}(Y_t)}$, on historical and test data.
\begin{table}[t]
    \centering
    \begin{tabular}{c|c|c}
        $R^2$ & historical data & test data (zero-shot prediction) \\
        \hline
        $\hat{\beta}^{\operatorname{inv}}$   & 0.019 & 0.062  \\
        $\hat{\beta}^{\operatorname{OLS}}$  & 0.558 &  -10.703 \\
        $\hat{\beta}^{\operatorname{mm}}$ & 0.477 & -2.194
    \end{tabular}
    \caption{Normalized explained variance ($R^2$) by $\hat{\beta}^{\operatorname{inv}}$ and comparison with $\hat{\beta}^{\operatorname{OLS}}$ and $\hat{\beta}^{\operatorname{mm}}$: training (historical data) and zero-shot generalization (test data); see Section~\ref{sec:light_tunnel}.}
    \label{tab:zero_shot}
\end{table}
The invariant component $\hat{\beta}^{\operatorname{inv}}$ is the only estimator that achieves positive explained variance on test data. This is because $\hat{\beta}^{\operatorname{inv}}$ only captures the parts of the variance that can be transferred to the test data, as can be seen from the lower explained variance compared to $\hat{\beta}^{\operatorname{OLS}}$ on the historical data.
The estimated invariant subspace $\hat{\mathcal{S}}^{\operatorname{inv}}=\operatorname{span}\{[-0.13910168, -0.95836843, -0.24936055]^{\top}\}$ has dimension $1$ and shows in particular that most of the invariant information is encoded in the two LEDs, with highest weight given to $X_{L_{31}}$. This is expected since the LEDs intensities are not affected by the changing angle between the two polarizers. 
Also the relatively small $R^2$ is expected: since the RGB light source is stronger than the two LEDs at the end of the tunnel, most of the variance in $Y$ is explained by the non-invariant component $X_{\operatorname{RGB}}$ (see Figure~\ref{fig:real_data}).
However, it is not the case that the whole subspace spanned by the two covariates corresponding to the LEDs is invariant, as we would have assumed from the knowledge of the physical system. An explanation is that due to the data collection process there is nonzero observed correlation between $X_{\operatorname{RGB}}$ and $X_{L_{32}}$, but not between $X_{\operatorname{RGB}}$ and $X_{L_{31}}$ (on the historical data, we have $\operatorname{corr(X_{\operatorname{RGB}}, X_{L_{31}})}=0.004$ with p-value $0.723$ and $\operatorname{corr(X_{\operatorname{RGB}}, X_{L_{32}})}=-0.107$ with p-value smaller than $10^{-15}$). 

We also run the adaptation step considering as adaptation data a rolling window of size $m=8$ shifting through the test data. We show in Figure~\ref{fig:cum_xv_rd} the cumulative explained variance obtained by \isd{}, by the OLS solution computed on the same adaptation data and by the OLS solution and the invariant component computed on historical data. The plot shows that $\hat{\gamma}^{\isd{}}$ achieves the highest explained variance,  with a small improvement on the rolling window OLS $\hat{\gamma}^{OLS}$. 
Indeed, in this particular example the size of the invariant subspace is small ($\operatorname{dim}(\mathcal{S}^{\operatorname{inv}})=1$) and by Theorem~\ref{thm:empirical_explained_variance}, 
we expect only a small improvement.
\begin{figure}[t]
    \centering
    \includegraphics[width=0.65\linewidth]{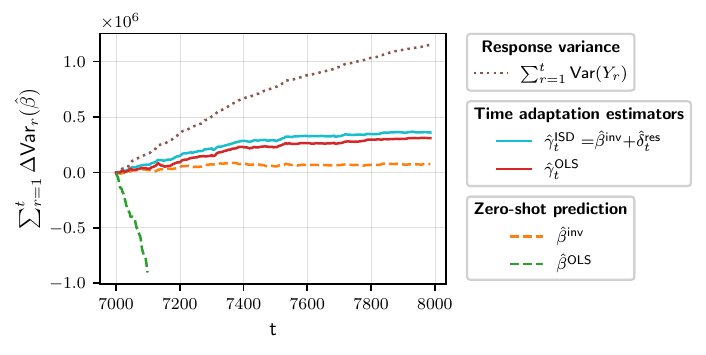}
    \caption{
    Cumulative explained variance on the adaptation data by the time adaptation estimators $\hat{\gamma}^{\operatorname{\isd}}$ and $\hat{\gamma}^{\operatorname{OLS}}$ and by the zero-shot predictors $\hat{\beta}^{\operatorname{inv}}$ and $\hat{\beta}^{\operatorname{OLS}}$; see Section~\ref{sec:light_tunnel}.
    }
    \label{fig:cum_xv_rd}
\end{figure}
% }

\section{Summary}
We propose Invariant Subspace Decomposition (\isd),
a framework for invariance-based time adaptation. Our method relies on the orthogonal decomposition of the parameter space into an invariant subspace $\mathcal{S}^{\operatorname{inv}}$ and a residual subspace $\mathcal{S}^{\operatorname{res}}$, such that the maximizer of the explained variance over $\mathcal{S}^{\operatorname{inv}}$ is time-invariant. The estimation of the invariant component $\beta^{\operatorname{inv}}$ on a large historical dataset and the reduced dimensionality of $\mathcal{S}^{\operatorname{res}}$ with respect to the original parameter space $\mathbb{R}^p$ allow the \isd{} estimator to improve on the prediction accuracy of existing estimation techniques. We provide finite sample guarantees for the proposed estimation method and additionally support the validity of our theoretical results through simulated experiments and one real world data experiment.

Future developments of this work may investigate the presented problem in the case of nonlinear time-varying models, and study how to incorporate the \isd{} framework in specific applied settings such as contextual bandits.

\acks{We thank Juan Gamella for generating the data for the presented real world example, and the three anonymous reviewers for the valuable comments.
NP and ML are supported by a research grant (0069071) from Novo Nordisk Fonden.}

\appendix

\bibliography{refs}

\section{Supporting examples and remarks}
\subsection{Example of non-uniqueness of an irreducible orthogonal partition}
    % \begin{example}
    \label{ex:non_unique_orthogonal_partition}
    Assume that for all $t\in[n]$ the covariance matrix $\Sigma_t$ of $X_t$ takes one of the following two values (and each value is taken at least once in $[n]$)
        \begin{align*}
            \Sigma^1 \coloneqq \begin{bmatrix}
                2 & 0 & 0\\
                0 & 2 & 0 \\
                0 & 0 & 1
            \end{bmatrix},
            \Sigma^2 \coloneqq \begin{bmatrix}
                3 & 0 & 0\\
                0 & 3 & 0 \\
                0 & 0 & 2
            \end{bmatrix}.
        \end{align*}
    Define for all $j\in\{1,2,3\}$ the linear spaces $\mathcal{S}_j=\langle e_j\rangle$, where $e_j$ is the $j$-th vector of the canonical basis. Then since $\Sigma^1$ and $\Sigma^2$ are (block) diagonal the partition $\mathcal{S}_1, \mathcal{S}_2, \mathcal{S}_3$ is an irreducible orthogonal and $(X_t)_{t\in[n]}$-decorrelating partition.
    Consider now the orthonormal matrix
    \begin{align*}
        U \coloneqq \begin{bmatrix}
                1/\sqrt{2} & 1/\sqrt{2} & 0\\
                -1/\sqrt{2} & 1/\sqrt{2} & 0 \\
                0 & 0 & 1
            \end{bmatrix}.
    \end{align*}
    It holds that 
    \begin{align*}
        U^{\top}\Sigma^1 U & = \Sigma^1 \\
         U^{\top}\Sigma^2 U & = \Sigma^2.
    \end{align*}
    Therefore, the spaces 
    \begin{align*}
        \tilde{\mathcal{S}}_1 =\langle \begin{bmatrix}
            1/\sqrt{2} \\ -1/\sqrt{2} \\0
        \end{bmatrix}\rangle, \quad\tilde{\mathcal{S}}_2 =\langle \begin{bmatrix}
            1/\sqrt{2} \\ 1/\sqrt{2} \\0
        \end{bmatrix}\rangle,\quad\tilde{\mathcal{S}}_3 =\langle \begin{bmatrix}
            0 \\ 0 \\ 1
        \end{bmatrix}\rangle
    \end{align*}
    also form an irreducible orthogonal and $(X_t)_{t\in[n]}$-decorrelating partition 
   (this follows, for example, from Proposition~\ref{prop:irreducible_orthogonal_partition})
    but $\{\mathcal{S}_1, \mathcal{S}_2, \mathcal{S}_3\} 
    \neq 
    \{\tilde{\mathcal{S}}_1, \tilde{\mathcal{S}}_2, \tilde{\mathcal{S}}_3\}$.

    Then, if we assume for example that $\gamma_{0,t}=\begin{bmatrix}
        1, & t, & 1
    \end{bmatrix}^{\top}$, given the first partition we obtain $\mathcal{S}^{\operatorname{inv}}=\mathcal{S}_1\oplus\mathcal{S}_3$, whereas given the second partition it holds that 
    $\Pi_{\tilde{\mathcal{S}}_1}\gamma_{0,t} =\begin{bmatrix}
        \frac{1-t}{2}, & \frac{t-1}{2}, & 0
    \end{bmatrix}^{\top}$ 
    and 
    $\Pi_{\tilde{\mathcal{S}}_2}\gamma_{0,t} =\begin{bmatrix}
        \frac{1+t}{2}, & \frac{1+t}{2}, & 0
    \end{bmatrix}^{\top}$ 
    and therefore $\tilde{\mathcal{S}}^{\operatorname{inv}}= \tilde{\mathcal{S}}_3$, leading to Assumption~\ref{ass:uniqueness_time_inv_sp} not being satisfied.
    % \end{example}

    \subsection{Example of quickly varying $\gamma_{0,t}$ and zero-shot generalization}
\label{sec:fast_gamma_changes}
We repeat the same simulation described in Section~\ref{sec:invariant_exp} but now consider non-smooth variations of $\gamma_{0,t}$. More specifically, the only difference from the simulation described in Section~\ref{sec:invariant_exp} is that we let the $3$ time-varying entries of $\gamma_{0,t}$ (prior to its rotation by $U$) vary quickly with $t$.
\begin{figure}[t]
    \centering
    \includegraphics[width=0.5\textwidth]{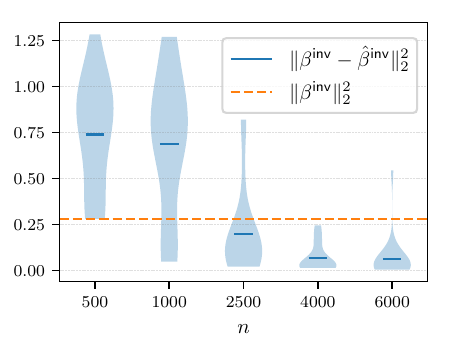}\\
    \caption{MSE of $\hat{\beta}^{\operatorname{inv}}$ for increasing size of the historical data $n$. For larger values of $n$, the estimation of the invariant subspace decomposition becomes more precise and leads to smaller errors in the estimated invariant component $\hat{\beta}^{\operatorname{inv}}$. 
    }
    \label{fig:mse_inv_randwalk}
\end{figure}
\begin{figure}[btp]
    \centering
    \includegraphics[width=\textwidth]{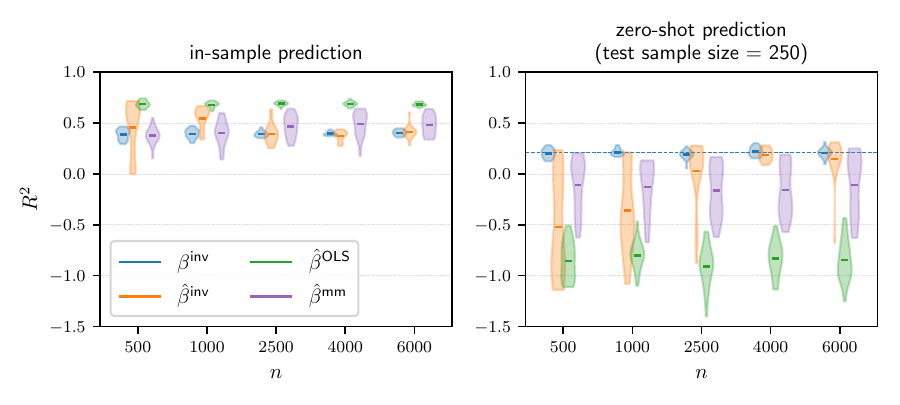}
    \caption{Normalized explained variance ($R^2$) by $\hat{\beta}^{\operatorname{inv}}$ and comparison with $\beta^{\operatorname{inv}}$, $\hat{\beta}^{\operatorname{mm}}$ and $\hat{\beta}^{\operatorname{OLS}}$: training (historical data, left) and zero-shot generalization (test data, right), for different sizes $n$ of the historical data. The dashed line indicates the population value of the (normalized) explained variance by $\beta^{\operatorname{inv}}$. In the historical data, $\gamma_{0,t}$ is quickly varying.}
    \label{fig:sim_inv_randwalk}
\end{figure}
More specifically, at each time point each one of the $3$ entries is sampled uniformly in an interval of width $1$ centered around a value which changes $20$ times in the observed time horizon $n$. These centers are randomly sampled in $[0, 1.2]$. 
These choices of the sampling intervals ensure that the experiments are comparable for different sizes $n$ of the historical data and that there is a shift in $\gamma_{0,t}$ outside of the observed support in the test data (which is generated as in Section~\ref{sec:invariant_exp}). This experiment supports Remark~\ref{rem:time_changes}, showing that we do not need assumptions on the type of changes in $\gamma_{0,t}$ in the historical data. 

In particular, Figure~\ref{fig:mse_inv_randwalk} shows that the MSE of $\hat{\beta}^{\operatorname{inv}}$ converges to zero for increasing values of $n$, that is, we are able to estimate the invariant component even when $\gamma_{0,t}$ is quickly varying in the historical data. 

Moreover, Figure~\ref{fig:sim_inv_randwalk} shows the results in terms of the $R^2$ coefficient. In particular, for $n$ large enough, the $R^2$ coefficient of the estimated invariant component $\hat{\beta}^{\operatorname{inv}}$ remains positive even for values of the test data that lie outside of the observed support.

\subsection{Methods and computational complexity of joint block diagonalization}
\label{rem:ajbd}
In the context of joint block diagonalization, we can differentiate between methods that solve the exact problem (JBD), i.e., are such that the transformed matrices have exactly zero off-block diagonal entries, and approximate methods (AJBD), which assume the presence of noise and aim to minimize the off-block diagonal entries, without necessarily setting them to zero. 

JBD is in general an easier problem, and algorithms that solve it have been shown to achieve
polynomial complexity 
\citep[see, e.g.,][]{murota2010numerical, tichavsky2012computation}. Many of these methods, e.g., the one presented by \citet{murota2010numerical}, are based on eigenvalue decompositions. Alternatively, as shown by~\citet{tichavsky2012computation}, some algorithms that solve the problem of approximate joint diagonalization (AJD)
of a set of matrices, such as \texttt{uwedge} developed by \citet{tichavsky2008fast}, can also be used for JBD. More in detail, a solution to AJD is a matrix that maximally jointly diagonalizes a set of matrices by minimizing the average value of the off-diagonal entries; if the matrices in the set cannot be exactly jointly diagonalized, the transformed matrices will have some non-zero off-diagonal elements. Adding an appropriate permutation of the columns of the joint diagonalizer allows to reorganize the non-zero off-diagonal elements into blocks, leading to jointly block diagonal matrices: in Section $4.3$ of their work \citet{gutch2012uniqueness} argue that if the matrices to be block diagonalized are symmetric, then the solution found in this way is also an optimal JBD solution. 

However, in general, methods for JBD cannot be directly applied to solve AJBD.
 Algorithms that solve AJBD directly---based on the iterative optimization of a cost function via matrix rotations---have been developed, for example, by \citet{tichavsky2012algorithms} and \citet{fevotte2007pivot}, but require the number of diagonal blocks to be known in advance. Alternatively and similarly to how JBD can be solved by AJD methods, one can also, with some slight modifications, use AJD methods to solve AJBD. More specifically, some heuristics need to be used to determine the size of the blocks: these can consist, for example, in setting a threshold for the non-zero off-block-diagonal elements in the transformed matrices. 

In the (orthogonal) settings considered here, we have found the last approach to work effectively. 
More specifically,
in Section~\ref{sec:approx_jbd_estimation}, we have denoted by $V$ the AJD solution for the set of estimated covariance matrices 
%if the matrices
$\{\hat{\Sigma}_k\}_{k=1}^K$.
Similarly to how \citet{tichavsky2012algorithms} suggest to determine a permutation of the AJD result, we proceed in the following way. To discriminate the non-zero off-block-diagonal entries in these matrices, we start by computing the following auxiliary matrix using $V$ 
 \begin{equation*}
     \Sigma \coloneqq \operatorname{max}_{k\in\{1,\dots,K\}} |V^{\top}\hat{\Sigma}_kV|,
 \end{equation*}
 where the maximum is taken element-wise. 
The matrix $\Sigma$ captures in its off-diagonal entries the residual correlation among the components identified by AJD, for all the $K$ jointly diagonalized matrices. 
For all thresholds $\tau\in\mathbb{R}$, we let $P(\tau)\in\mathbb{R}^{p\times p}$ denote one of the permutation matrices satisfying that $P(\tau)^{\top}\Sigma P(\tau)$ is a block diagonal matrix if all entries smaller than $\tau$ are considered zero. We then define the optimal threshold $\tau^*$ by
\begin{equation*}
    \tau^* \in \argmin_{\tau} \frac{1}{K}\sum_{k=1}^K \operatorname{OBD}_{\tau}(|(VP(\tau))^{\top}\hat{\Sigma}_k(VP(\tau))|) + \nu \frac{p_{bd}(\tau)}{p^2}
\end{equation*}
where $\operatorname{OBD}_{\tau}(\cdot)$ denotes the average value of off-block-diagonal entries (determined by the threshold $\tau$) of a matrix, $p_{bd}(\tau)$ is the total number of entries in the blocks induced by $\tau$ and $\nu\in\mathbb{R}$ is a regularization parameter. The penalization term $\frac{p_{bd}(\tau)}{p^2}$ is introduced to avoid always selecting a zero threshold, and the regularization parameter is set to $\nu = \frac{1}{K}\sum_{k=1}^{K}\lambda_{\min}(\hat{\Sigma}_k)$ where $\lambda_{\min}(\cdot)$ denotes the minimum eigenvalue. The optimal permutation is then $P^* \coloneqq P(\tau^*)$ and the estimated irreducible joint block diagonalizer is $\hat{U} = V P^* $.

\subsection{Threshold selection for opt-invariant subspaces}
    \label{rem:constant_threshold_selection}
    In the simulations, we select the threshold $\lambda$ in \eqref{eq:opt-invariant_test} by cross-validation. More in detail, we define the grid of possible thresholds $\lambda$ by
    \begin{equation*}
        \Lambda\coloneqq\left\{0, \tfrac{1}{K}\textstyle\sum_{k=1}^K |\hat{c}_k^1|,\ldots, \tfrac{1}{K}\textstyle\sum_{k=1}^K |\hat{c}_k^{q^{\hat{U}}_{\max}}|\right\},
    \end{equation*}
    where, for all $j\in\{1,\dots, q^{\hat{U}}_{\max}\}$ and $k\in\{1,\dots,K\}$, $\hat{c}_k^j \coloneqq \widehat{\operatorname{Corr}}(\mathbf{Y}_k - \mathbf{X}_k(\Pi_{\hat{\mathcal{S}}_j}\hat{\gamma}), \mathbf{X}_k(\Pi_{\hat{\mathcal{S}}_j}\hat{\gamma}))$.
     We then split the historical data into $L=10$ disjoint blocks of observations, and for all $j\in\{1,\dots,J\}$ denote by $\mathbf{X}_{\ell}$ and $\mathbf{Y}_{\ell}$ the observations in the $\ell$-th block and by $\mathbf{X}_{-\ell}$ and $\mathbf{Y}_{-\ell}$ the remaining historical data. 
    For all possible thresholds $\lambda\in\Lambda$ 
    we proceed in the following way. For all folds $\ell\in\{1,\dots,L\}$, we compute an estimate for the invariant component as in \eqref{eq:time-invariant_subspace_effect_estimator} using $\mathbf{X}_{-\ell}$ and $\mathbf{Y}_{-\ell}$, which we denote by $\hat{\beta}^{\operatorname{inv},-\ell}(\lambda)$. Inside the left-out $\ell$-th block of observations, we then consider a rolling window of length $d=2p$ and the observation at $t^*$ immediately following the rolling window: we compute the residual parameter $\hat{\delta}^{\operatorname{res}}_{t^*}(\lambda)$ as in \eqref{eq:residual_effect_estimator} using the $d$ observations in the rolling window, and evaluate the empirical explained variance by $\hat{\beta}^{\operatorname{inv},-\ell}(\lambda)+\hat{\delta}^{\operatorname{res}}_{t^*}(\lambda)$ on the observation at $t^*$, i.e.,
    \begin{equation*}
    \widehat{\operatorname{\Delta Var}}_{t^*}(\lambda)\coloneqq
         (Y_{t^*})^2- (Y_{t^*} - X_{t^*}^{\top}(\hat{\beta}^{\operatorname{inv},-\ell}(\lambda)+\hat{\delta}^{\operatorname{res}}_{t^*}(\lambda)))^2.
    \end{equation*}
    We repeat this computation for all possible $t^*\in\mathcal{I}_{\ell}$, where $\mathcal{I}_{\ell}$ denotes the time points in the $\ell$-th block of observations excluding the first $d$ observations, and define $\overline{\operatorname{\Delta Var}}_\ell^{\lambda}\coloneqq \frac{1}{|\mathcal{I}_{\ell}|}\sum_{t^*\in\mathcal{I}_{\ell}}\widehat{\operatorname{\Delta Var}}_{t^*}(\lambda)$. For all $\lambda\in\Lambda$, we denote the average explained variance over the $L$ folds by $\overline{\operatorname{\Delta Var}}^{\lambda}\coloneqq \frac{1}{L} \sum_{\ell=1}^L \overline{\operatorname{\Delta Var}}_{\ell}^{\lambda}$ and the standard error (across the $L$ folds) of such explained variance as $\operatorname{se}(\overline{\operatorname{\Delta Var}}^{\lambda})\coloneqq \frac{1}{L}\sqrt{\sum_{\ell=1}^L (\overline{\operatorname{\Delta Var}}_{\ell}^{\lambda}-\overline{\operatorname{\Delta Var}}^{\lambda})^2}$. Moreover, let $\lambda^{\max}\coloneqq \argmax_{\lambda\in\Lambda} \overline{\operatorname{\Delta Var}}^{\lambda}$. 
Then, we choose the optimal threshold as
    \begin{equation*}
        \lambda^* \coloneqq \min \left\{\lambda\in\Lambda  \,\Big\vert\, \overline{\operatorname{\Delta Var}}^{\lambda} > \overline{\operatorname{\Delta Var}}^{\lambda^{\max}} - t_{\operatorname{se}}\operatorname{se}(\overline{\operatorname{\Delta Var}}^{\lambda^{\max}})\right\},
    \end{equation*}
    which is the most conservative (lowest) threshold such that the corresponding explained variance is within $t_{\operatorname{se}}$ (in our simulations, we choose $t_{\operatorname{se}}=1$) standard errors (computed across the folds) of the maximal explained variance. 

\subsection{Further simulation details: MSPE comparison}
\label{sec:simulations_time_adaptation_appendix}
In Section~\ref{sec:simulations_time_adaptation} we  present a simulated experiment in which we compare the \isd{} estimator and the OLS estimator on the time adaptation task. Figure~\ref{fig:sim_time} shows that the difference in the MSPE for $\hat{\gamma}^{\operatorname{OLS}}_t$ and $\hat{\gamma}^{\operatorname{\isd}}_t$ is positive and decreases for increasing values of $m$. To further support the statement of Theorem~\ref{thm:empirical_explained_variance}, we show in Figure~\ref{fig:mspe_diff_fit} the value of such difference against $\sigma_{\operatorname{ad}}^2\frac{\operatorname{dim}(\mathcal{S}^{\operatorname{inv}})}{m}$, when computing $\hat{\gamma}^{\operatorname{\isd}}_{t}$ both with the estimated and oracle invariant component. The figure shows that the difference in the MSPEs indeed satisfies the bound stated in Theorem~\ref{thm:empirical_explained_variance}, i.e., it is always greater than $\sigma_{\operatorname{ad}}^2\frac{\operatorname{dim}(\mathcal{S}^{\operatorname{inv}})}{m}$. Moreover, it shows that for small values of $m$ the gain in using the \isd{} estimator over the OLS is even higher than what the theoretical bound suggests, indicating that it is not sharp for small $m$. 
\begin{figure}[!ht]
    \centering
    \includegraphics[width=0.46\textwidth]{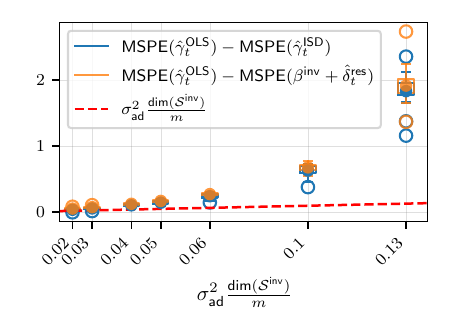}
    \caption{Difference in the MSPE of $\hat{\gamma}^{\operatorname{OLS}}_t$ and $\hat{\gamma}^{\operatorname{\isd}}_t$ (and an oracle version of $\hat{\gamma}^{\operatorname{\isd}}_t$ based on the true $\beta^{\operatorname{inv}}$) with respect to $\sigma_{\operatorname{ad}}^2\frac{\operatorname{dim}(\mathcal{S}^{\operatorname{inv}})}{m}$ for different values of $m$. The computed difference is larger than $\sigma_{\operatorname{ad}}^2\frac{\operatorname{dim}(\mathcal{S}^{\operatorname{inv}})}{m}$ for all values of $m$ (in the oracle case), empirically confirming the lower bound obtained in Theorem~\ref{thm:empirical_explained_variance}. The filled dots in the boxplots show the mean over $20$ runs (while the empty dots represent the outliers).}
    \label{fig:mspe_diff_fit}
\end{figure}
We further show in Figure~\ref{fig:mspe_isd_bound} the MSPE of $\hat{\gamma}^{\operatorname{\isd}}_t$ (again computed both with the estimated and oracle invariant component) against $\sigma_{\operatorname{ad}}^2\frac{\operatorname{dim}(\mathcal{S}^{\operatorname{res}})}{m}$, obtaining in this case an empirical confirmation of the first bound presented in Theorem~\ref{thm:empirical_explained_variance}.
\begin{figure}[t]
    \centering
    \includegraphics[width=0.46\textwidth]{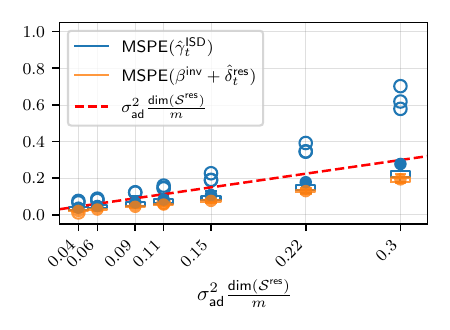}
\caption{MSPE of $\hat{\gamma}^{\operatorname{\isd}}_t$ (and an oracle version of $\hat{\gamma}^{\operatorname{\isd}}_t$ based on the true $\beta^{\operatorname{inv}}$) with respect to $\sigma_{\operatorname{ad}}^2\frac{\operatorname{dim}(\mathcal{S}^{\operatorname{res}})}{m}$ for different values of $m$. The computed MSPE is smaller than $\sigma_{\operatorname{ad}}^2\frac{\operatorname{dim}(\mathcal{S}^{\operatorname{res}})}{m}$ for all values of $m$ (in the oracle case), empirically confirming the upper bound obtained in Theorem~\ref{thm:empirical_explained_variance}. The filled dots in the boxplots show the mean over $20$ runs (while the empty dots represent the outliers).}
    \label{fig:mspe_isd_bound}
\end{figure}

\FloatBarrier

\section{ISD estimation algorithm}
\label{sec:isd_algorithm_pseudocode}
We provide here the pseudocode summarizing the ISD procedure described in Section~\ref{sec:estimation}. 
The algorithm includes the estimation of the intercept, that is, it considers for all $t\in\mathbb{N}$ the model
\begin{equation}
\label{eq:model_def_intercept}
    Y_t = \gamma_{0,t}^0 + X_t^{\top}\gamma_{0,t}+\epsilon_t
\end{equation}
satisfying the assumptions of Setting~\ref{set:definition} but
with the addition of $\gamma_{0,t}^0\in\mathbb{R}$. 
\begin{algorithm}[!ht]
    \caption{\isd{}: estimation} \label{alg:estimator}
    \begin{algorithmic}[1]
    \Require observations $(X_t, Y_t)_{t\in[n]\cup\mathcal{I}_{\operatorname{ad}}}$, $X_{t^*}$, number of windows $K$, $\lambda\in[0, 1]$
    \Ensure $\hat{\beta}^{\operatorname{inv}}$, $\hat{\delta}^{\operatorname{res}}_{t^*}, \hat{\gamma}_{t^*},\hat{\gamma}_{t^*}^0$
    \State $(\mathbf{X}_k)_{k\in[K]}\gets ([X_{\frac{(k-1)n}{K}}, \dots, X_{\frac{kn}{K}}]^{\top})_{k\in[K]}$
    \State $(\mathbf{Y}_k)_{k\in[K]}\gets ([Y_{\frac{(k-1)n}{K}}, \dots, Y_{\frac{kn}{K}}]^{\top})_{k\in[K]}$
    \State $(\hat{\Sigma}_k)_{k\in[K]} \gets \{\widehat{\operatorname{Var}}(\mathbf{X}_k)\}_{k\in[K]}$ 
    \State $\left\{\begin{bmatrix}
        \hat{\gamma}^{0}_k\\
        \hat{\gamma}_{k}
    \end{bmatrix}\right\}_{k\in[K]}\gets \left\{\operatorname{OLS}(\mathbf{Y}_k, \begin{bmatrix}
        \mathbf{1}_{\frac{n}{K}} & \mathbf{X}_k
    \end{bmatrix}
        )\right\}_{k\in[K]}$
    \State $\begin{bmatrix}
        \hat{\gamma}^{0}\\
        \hat{\gamma}
    \end{bmatrix}\gets \frac{1}{K}\sum_{k=1}^K \begin{bmatrix}
        \hat{\gamma}^{0}_k\\
        \hat{\gamma}_{k}
    \end{bmatrix}$
    \State $\hat{U},\{\hat{S}_j\}_{j=1}^q \gets \operatorname{approxIrreducibleJointBlockDiagonalizer}((\hat{\Sigma}_t)_{t\in[n]})$ \Comment{{\color{gray}see Sec.~\ref{sec:approx_jbd_estimation}}}
    \State $\rhd$ {\color{gray}Find the opt-invariant subspaces to estimate $\mathcal{S}^{\operatorname{inv}}, \mathcal{S}^{\operatorname{res}}$:}
    \State $\hat{S}^{\operatorname{inv}}, \hat{S}^{\operatorname{res}} \gets \emptyset$
    \For{$j=1,\dots,q$} 
        \State $\Pi_{\hat{\mathcal{S}}_j} \gets \hat{U}^{S_j}(\hat{U}^{S_j})^{\top}$
        \State $(\hat{c}_k^j)_{k\in[K]} \gets (\widehat{\operatorname{Corr}}(\mathbf{Y}_k - \mathbf{X}_k(\Pi_{\hat{\mathcal{S}}_j}\hat{\gamma}), \mathbf{X}_k(\Pi_{\hat{\mathcal{S}}_j}\hat{\gamma})))_{k\in[K]}$
        \If{$\frac{1}{K}\sum_{k=1}^K \left| \hat{c}_k^j \right| \le \lambda$ } 
        $\hat{S}^{\operatorname{inv}}\gets \hat{S}^{\operatorname{inv}}\cup \hat{S}_j$ \Comment{{\color{gray}see Eq.~\eqref{eq:opt-invariant_test}}}
        \Else  $\quad \hat{S}^{\operatorname{res}}\gets \hat{S}^{\operatorname{res}}\cup \hat{S}_j$
        \EndIf
    \EndFor
    \State $\rhd$ {\color{gray}Invariant component estimation:}
    \State $\mathbf{X}\gets [X_1\dots X_n]^{\top}$, $\mathbf{Y}\gets [Y_1\dots Y_n]^{\top}$
    \State $\hat{\beta}^{\operatorname{inv}} \gets \hat{U}^{\operatorname{inv}} \widehat{\operatorname{Var}}(
    \mathbf{X}\hat{U}^{\operatorname{inv}}) ^{-1} \widehat{\operatorname{Cov}}(\mathbf{X}\hat{U}^{\operatorname{inv}},\mathbf{Y})$ \Comment{{\color{gray}see Eq.~\eqref{eq:invariant_eff_estimator}}}
    \State $\rhd$ {\color{gray}Adaptation step:}
    \State $\mathbf{X}^{\operatorname{ad}}\gets [(X_t)_{t\in\mathcal{I}^{\operatorname{ad}}}]^{\top}$, $\mathbf{Y}^{\operatorname{ad}}\gets [(Y_t)_{t\in\mathcal{I}^{\operatorname{ad}}}]^{\top}$ 
    \State $ \hat{\delta}^{\operatorname{res}}_{t^*} 
      \gets \hat{U}^{\operatorname{res}}
      \widehat{\operatorname{Var}}(
      %(\hat{U}^{\operatorname{res}})^{\top}(\mathbf{X}^{\operatorname{ad}})^{\top}
      \mathbf{X}^{\operatorname{ad}}\hat{U}^{\operatorname{res}})^{-1} \widehat{\operatorname{Cov}}(\mathbf{X}^{\operatorname{ad}}\hat{U}^{\operatorname{res}},\mathbf{Y}^{\operatorname{ad}}-\mathbf{X}^{\operatorname{ad}}\hat{\beta}^{\operatorname{inv}})$  \Comment{{\color{gray}see Eq.~\eqref{eq:residual_effect_estimator}}}
      \State $\rhd$ {\color{gray}Intercept estimation:}
    \If{$\{\hat{\gamma}^0_k\}_{k\in[K]}$ approximately constant}
        $\hat{\gamma}_{t^*}^0\gets \hat{\gamma}^0$
    \Else $\quad \hat{\gamma}_{t^*}^0\gets \hat{\mathbb{E}}[\mathbf{Y}^{\operatorname{ad}}-\mathbf{X}^{\operatorname{ad}}\hat{\beta}^{\operatorname{inv}}]-\hat{\mathbb{E}}[\mathbf{X}^{\operatorname{ad}}]\hat{\delta}^{\operatorname{res}}_{t^*}$
    \EndIf
    \State $\hat{\gamma}_{t^*}\gets \hat{\beta}^{\operatorname{inv}} + \hat{\delta}^{\operatorname{res}}_{t^*} $ \Comment{{\color{gray}see Eq.~\eqref{eq:true_parameter_empirical_estimator}}}
    \end{algorithmic}
\end{algorithm}
In the estimation of the intercept, the algorithm distinguishes between two cases: (a) the intercept remains approximately constant in $[n]\cup\mathcal{I}_{\operatorname{ad}}$ and (b) the intercept changes with time but is assumed to be approximately constant in $\mathcal{I}_{\operatorname{ad}}$. 
In case (a), the computation of $\hat{\gamma}_{t^*}^0$ in line 21 of Algorithm $2$ is done by averaging the estimated values of the (approximately constant) intercept on historical data. Alternatively, in case (a) we could estimate the intercept simultaneously to the invariant component $\beta^{\operatorname{inv}}$: the whole vector (including the intercept) can be computed
by taking the OLS solution of regressing $\mathbf{Y}$ on $\begin{bmatrix}
    \mathbf{1}_n & \mathbf{X}\hat{U}^{\operatorname{inv}}
\end{bmatrix}$ and premultiplying it by $\begin{bmatrix}
    \mathbf{e}_{1} & \hat{U}^{\operatorname{inv}}
\end{bmatrix}$ (where $\mathbf{e}_1$ is the $p$-dimensional vector with the first entry equal to $1$ and the remaining equal to zero) in place of lines 16 and 21 of Algorithm~\ref{alg:estimator} (see Remark~\ref{rem:intercept}).
In case (b), the intercept can instead be estimated simultaneously to the residual component $\hat{\delta}^{\operatorname{res}}_t$: the computation  in line 22 of the algorithm is equivalent to taking the first component of the OLS solution of regressing $\mathbf{Y}^{\operatorname{ad}}-\mathbf{X}^{\operatorname{ad}}\hat{\beta}^{\operatorname{inv}}$ on $\begin{bmatrix}
        \mathbf{1}_m & \mathbf{X}^{\operatorname{ad}}\hat{U}^{\operatorname{res}}
    \end{bmatrix}$, premultiplied by $\begin{bmatrix}
    \mathbf{e}_{1} & \hat{U}^{\operatorname{res}}
\end{bmatrix}$. 
\begin{remark}
\label{rem:intercept}
    Recall that the population OLS solution for the linear model \eqref{eq:model_def_intercept} at time $t$ can be found by adding a $1$ to the vector $X_t$ and solving 
    \begin{equation*}
        \argmin_{\gamma^0, \gamma}\mathbb{E}\left[Y_t - [1\; X_t^{\top}]\begin{bmatrix}
        \gamma^0\\ \gamma
    \end{bmatrix}\right]^2.
    \end{equation*}
The problem has closed form solution 
    \begin{equation*}
        \begin{bmatrix}
            \gamma^0 \\
            \gamma
        \end{bmatrix} = \begin{bmatrix}
            1 & \mathbb{E}[X_t]^{\top} \\
            \mathbb{E}[X_t] & \mathbb{E}[X_tX_t^{\top}]
        \end{bmatrix}^{-1}\begin{bmatrix}
            \mathbb{E}[Y_t]\\
            \mathbb{E}[X_tY_t]
        \end{bmatrix} = \begin{bmatrix}
            \mathbb{E}[Y_t] -  \mathbb{E}[X_t]^{\top} \operatorname{Var}(X_t)^{-1}\operatorname{Cov}(X_t, Y_t) \\
            \operatorname{Var}(X_t)^{-1}\operatorname{Cov}(X_t, Y_t)
        \end{bmatrix}.
    \end{equation*}
\end{remark}

\section{Extension to non-orthogonal subspaces}
\label{sec:non_orthogonal_subspaces}
In Section~\ref{sec:time_inv_sub} we have defined an orthogonal and $(X_t)_{t\in[n]}$-decorrelating partition as a collection $\{\mathcal{S}_j\}_{j\in\{1,\dots,q\}}$ of pairwise orthogonal linear subspaces of $\mathbb{R}^p$ satisfying \eqref{eq:orthogonal_partition}. Finding the orthogonal subspaces that form the partition means in particular finding a rotation of the original $X$-space such that each subspace is spanned by a subset of the rotated axes, and the coordinates of the projected predictors onto one subspace are uncorrelated with the ones in the remaining subspaces. 
In this section, we show that orthogonality of the subspaces in the partition is not strictly required to obtain a separation of the true time-varying parameter of the form~\eqref{eq:separation_orth_partition1}, 
that is, more general invertible linear transformations can be considered besides rotations.
In particular, we briefly present results similar to the ones obtained throughout Section~\ref{sec:time_inv_sep} but where the subspaces in the partition are not necessarily orthogonal.
To do so, we define a collection of (not necessarily orthogonal) linear subspaces $\mathcal{S}_1,\ldots,\mathcal{S}_q\subseteq\mathbb{R}^p$ with $\bigoplus_{j=1}^q \mathcal{S}_j=\mathbb{R}^p$ and satisfying \eqref{eq:orthogonal_partition} a \emph{$(X_t)_{t\in [n]}$-decorrelating partition (of cardinality $q$)}, and further call it \emph{irreducible} if it is of maximal cardinality. 
A $(X_t)_{t\in[n]}$-decorrelating partition can still be identified through a joint block diagonalization of the covariance matrices $(\Sigma_t)_{t\in[n]}$ as described in Section~\ref{sec:orthogonal_joint_block_diagonalization} but with an adjustment. More specifically, instead of assuming that the joint diagonalizer $U$ is an orthogonal matrix, we only assume it is invertible and for all $j\in\{1,\dots,q\}$, the columns of $U^{S_j}$ are orthonormal vectors.
A version of Proposition~\ref{prop:irreducible_orthogonal_partition} in which the resulting partition is not necessarily orthogonal follows with the same proof.
Moreover, similarly to the orthogonal case, the uniqueness of an irreducible $(X_t)_{t\in[n]}$-decorrelating partition is implied by the uniqueness of an irreducible non-orthogonal joint block diagonalizer for $(\Sigma_t)_{t\in[n]}$; explicit conditions under which such uniqueness holds can be found for example in the work by \citet{nion2011tensor}. In the results presented in the remainder of this section we adopt the same notation introduced in Section~\ref{sec:orthogonal_joint_block_diagonalization}, and we additionally define the matrix $W\coloneqq U^{-\top}$. 

A $(X_t)_{t\in[n]}$-decorrelating partition of cardinality $q$ allows us to decompose the true time-varying parameter into the sum of $q$ components. 
To obtain such a decomposition via non-orthogonal subspaces, oblique projections need to be considered in place of orthogonal ones. Oblique projections are defined \citep[see, e.g.,][]{schott2016matrix} for two subspaces $\mathcal{S}_1,\mathcal{S}_2\subseteq\mathbb{R}^p$ such that $\mathcal{S}_1\bigoplus \mathcal{S}_2 = \mathbb{R}^p$ and a vector $x\in\mathbb{R}^p$ as the vectors $x_1\in\mathcal{S}_1$ and $x_2\in\mathcal{S}_2$ such that $x=x_1+x_2$: $x_1$ is called the projection of $x$ onto $\mathcal{S}_1$ along $\mathcal{S}_2$, and $x_2$ the projection of $x$ onto $\mathcal{S}_2$ along $\mathcal{S}_1$. For a $(X_t)_{t\in[n]}$-decorrelating partition $\{\mathcal{S}_j\}_{j=1}^q$, we denote by $P_{\mathcal{S}_j\mid\mathcal{S}_{-j}}$ the oblique projection matrix onto $\mathcal{S}_j$ along $\bigoplus_{i\in\{1,\dots,q\}\setminus\{j\}} \mathcal{S}_i$: this can be expressed in terms of a (non-orthogonal) joint block diagonalizer $U$ corresponding to the partition as $ P_{\mathcal{S}_j\mid\mathcal{S}_{-j}} = U^{S_j}(W^{S_j})^{\top}$. 
Orthogonal and $(X_t)_{t\in[n]}$-decorrelating partitions $\{\mathcal{S}_j\}_{j=1}^q$ are a special case of $(X_t)_{t\in[n]}$-decorrelating partitions. In particular, if the subspaces are pairwise orthogonal, it holds for all $j\in\{1,\dots,q\}$ that $P_{\mathcal{S}_j\mid\mathcal{S}^{-j}} = \Pi_{\mathcal{S}_j}$.

By definition of oblique projections, for all $t\in[n]$, we can express $\gamma_{0,t}$ as 
\begin{equation*}
    \gamma_{0,t} = \sum_{j=1}^q P_{\mathcal{S}_j\mid\mathcal{S}_{-j}}\gamma_{0,t}.
\end{equation*}
Similarly to how we define opt-invariance on $[n]$ for orthogonal subspaces in Section~\ref{sec:time_inv_sub}, we say that a subspace $\mathcal{S}_j$ in a $(X_t)_{t\in[n]}$-decorrelating partition is \emph{proj-invariant} on $[n]$ if it satisfies for all $t,s\in[n]$ that
\begin{equation*}
    P_{\mathcal{S}_j\mid\mathcal{S}_{-j}}\gamma_{0,t} = P_{\mathcal{S}_j\mid\mathcal{S}_{-j}}\gamma_{0,s}.
\end{equation*}
By Lemma~\ref{lem:time_var_parameter_separation} it follows that for orthogonal partitions proj-invariance is equivalent to opt-invariance.
For an irreducible $(X_t)_{t\in[n]}$-decorrelating partition, we now define the invariant subspace $ \mathcal{S}^{\operatorname{inv}}$ and residual subspace  $\mathcal{S}^{\operatorname{res}}$ as
\begin{equation*}
    \mathcal{S}^{\operatorname{inv}} \coloneqq\textstyle\bigoplus_{\substack{j\in\{1,\dots,q\}:\\ \mathcal{S}_j \text{ proj-invariant on }[n]}}\mathcal{S}_j
    \quad\text{and}\quad
    \mathcal{S}^{\operatorname{res}}\coloneqq \textstyle\bigoplus_{\substack{j\in\{1,\dots,q\}:\\ \mathcal{S}_j\, \text{not proj-invariant on }[n]}}\mathcal{S}_j.
\end{equation*}
\sloppy It follows directly by the definition of partitions that $\{\mathcal{S}^{\operatorname{inv}}, \mathcal{S}^{\operatorname{res}}\}$ is a $(X_t)_{t\in[n]}$-decorrelating partition. Moreover, $\mathcal{S}^{\operatorname{inv}}$ is proj-invariant on $[n]$ since $P_{\mathcal{S}^{\operatorname{inv}}\mid\mathcal{S}^{\operatorname{res}}}\gamma_{0,t} = \sum_{\substack{j\in\{1,\dots,q\}:\\\mathcal{S}_j \text{ proj-invariant on }[n]}} P_{\mathcal{S}_j\mid\mathcal{S}_{-j}}\gamma_{0,t}$.
We finally define the invariant and residual components by
\begin{equation*}
    \beta^{\operatorname{inv}} \coloneqq  P_{\mathcal{S}^{\operatorname{inv}}\mid\mathcal{S}^{\operatorname{res}}}\bar{\gamma}_{0} \quad\text{and}\quad \delta^{\operatorname{res}}_t \coloneqq P_{\mathcal{S}^{\operatorname{res}}\mid\mathcal{S}^{\operatorname{inv}}}\gamma_{0,t}.
\end{equation*}
We show in the following proposition that the expressions \eqref{eq:invariant_component_pop_estimator} and \eqref{eq:adaptation_parameter_estimator} used to construct our estimators for the invariant and residual component in the case of orthogonal partitions, remain valid in the case of non-orthogonal partitions. 

\begin{proposition}
\label{prop:non_orthogonal_subspaces}
    Let $\{\mathcal{S}_j\}_{j=1}^q$ be a $(X_t)_{t\in[n]}$-decorrelating partition and let $U$ be a joint block diagonalizer corresponding to that partition. 
    Then, the following results hold.
    \begin{enumerate}[(i)]
        \item For all $t\in[n]$ and for all $j\in\{1,\dots,q\}$ 
                \begin{align}
                     P_{\mathcal{S}_j\mid\mathcal{S}_{-j}}\gamma_{0,t}
                    & =  U^{S_j}\operatorname{Var}((U^{S_j})^{\top}X_t)^{-1} \operatorname{Cov}((U^{S_j})^{\top}X_t, Y_t). \label{eq:non_orthogonal_decomposition_opt}
                \end{align}
            \item $\beta^{\operatorname{inv}}$ is time-invariant over $[n]$ and 
                    \begin{equation*}
                       \beta^{\operatorname{inv}} = U^{\operatorname{inv}} ((U^{\operatorname{inv}})^{\top}\overline{\operatorname{Var}}(X)U^{\operatorname{inv}})^{-1}(U^{\operatorname{inv}})^{\top}\overline{\operatorname{Cov}}(X, Y).
                    \end{equation*}
            \item \begin{equation*}
                       \delta^{\operatorname{res}}_t = U^{\operatorname{res}} ((U^{\operatorname{res}})^{\top}\operatorname{Var}(X_t)U^{\operatorname{res}})^{-1}(U^{\operatorname{res}})^{\top}{\operatorname{Cov}}(X_t, Y_t-X_t^{\top}\beta^{\operatorname{inv}}).
                    \end{equation*}
    \end{enumerate}
\end{proposition}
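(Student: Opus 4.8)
The plan is to reduce all three identities to two elementary facts about the (non-orthogonal) joint block diagonalizer $U$. First, writing $W = U^{-\top}$ as in the statement, the identity $W^{\top}U = I_p$ yields the dual-basis relations $(W^{S_i})^{\top}U^{S_j} = \delta_{ij}I_{|S_j|}$, from which $P_{\mathcal{S}_j\mid\mathcal{S}_{-j}} = U^{S_j}(W^{S_j})^{\top}$ is idempotent with range $\mathcal{S}_j$ and $\sum_{j}P_{\mathcal{S}_j\mid\mathcal{S}_{-j}} = I_p$. Second, since $U$ block-diagonalizes the covariances, $(U^{S_i})^{\top}\Sigma_t U^{S_j} = 0$ whenever $i\neq j$; because grouping columns preserves block-diagonality, the same holds for the coarse two-block partition, i.e.\ $(U^{\operatorname{inv}})^{\top}\Sigma_t U^{\operatorname{res}} = 0$ for all $t$. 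Throughout I will also use the model identity $\operatorname{Cov}(X_t, Y_t) = \Sigma_t\gamma_{0,t}$, which follows from $\mathbb{E}[\epsilon_t\mid X_t]=0$, and the fact that each $(U^{S_j})^{\top}\Sigma_t U^{S_j}$ is invertible because $\Sigma_t\succ 0$ and $U^{S_j}$ has orthonormal (hence full-rank) columns.

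For part $(i)$, I would expand $\gamma_{0,t} = \sum_{i}U^{S_i}(W^{S_i})^{\top}\gamma_{0,t}$ using $\sum_i P_{\mathcal{S}_i\mid\mathcal{S}_{-i}} = I_p$ and substitute into the right-hand side of \eqref{eq:non_orthogonal_decomposition_opt}. Since $\operatorname{Var}((U^{S_j})^{\top}X_t) = (U^{S_j})^{\top}\Sigma_t U^{S_j}$ and $\operatorname{Cov}((U^{S_j})^{\top}X_t, Y_t) = (U^{S_j})^{\top}\Sigma_t\gamma_{0,t}$, block-diagonality kills every cross term $(U^{S_j})^{\top}\Sigma_t U^{S_i}$ with $i\neq j$, leaving $(U^{S_j})^{\top}\Sigma_t U^{S_j}(W^{S_j})^{\top}\gamma_{0,t}$; cancelling this factor against its inverse gives exactly $U^{S_j}(W^{S_j})^{\top}\gamma_{0,t} = P_{\mathcal{S}_j\mid\mathcal{S}_{-j}}\gamma_{0,t}$.

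For part $(ii)$, the displayed formula is the two-block analogue of $(i)$ averaged over $t$: setting $b := (W^{\operatorname{inv}})^{\top}\gamma_{0,t}$ and using $(U^{\operatorname{inv}})^{\top}\Sigma_t U^{\operatorname{res}} = 0$ gives $(U^{\operatorname{inv}})^{\top}\Sigma_t\gamma_{0,t} = (U^{\operatorname{inv}})^{\top}\Sigma_t U^{\operatorname{inv}}b$. Here proj-invariance of $\mathcal{S}^{\operatorname{inv}}$ is the decisive ingredient: it makes $b$ constant in $t$, so averaging and cancelling $(U^{\operatorname{inv}})^{\top}\overline{\operatorname{Var}}(X)U^{\operatorname{inv}}$ against its inverse returns $U^{\operatorname{inv}}b = P_{\mathcal{S}^{\operatorname{inv}}\mid\mathcal{S}^{\operatorname{res}}}\bar{\gamma}_0 = \beta^{\operatorname{inv}}$. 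For time-invariance (Definition~\ref{def:time_invariance}) I would reduce $\operatorname{Cov}(Y_t - X_t^{\top}\beta^{\operatorname{inv}}, X_t^{\top}\beta^{\operatorname{inv}}) = 0$ to $(\gamma_{0,t} - \beta^{\operatorname{inv}})^{\top}\Sigma_t\beta^{\operatorname{inv}} = 0$ and then observe that, by proj-invariance, $\gamma_{0,t} - \beta^{\operatorname{inv}} = P_{\mathcal{S}^{\operatorname{res}}\mid\mathcal{S}^{\operatorname{inv}}}\gamma_{0,t}\in\mathcal{S}^{\operatorname{res}}$ while $\beta^{\operatorname{inv}}\in\mathcal{S}^{\operatorname{inv}}$, so the quadratic form vanishes by the cross-block relation $(U^{\operatorname{res}})^{\top}\Sigma_t U^{\operatorname{inv}} = 0$.

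Part $(iii)$ is the most direct: $\operatorname{Cov}(X_t, Y_t - X_t^{\top}\beta^{\operatorname{inv}}) = \Sigma_t(\gamma_{0,t} - \beta^{\operatorname{inv}}) = \Sigma_t U^{\operatorname{res}}(W^{\operatorname{res}})^{\top}\gamma_{0,t}$, and plugging this into the claimed formula and cancelling $(U^{\operatorname{res}})^{\top}\Sigma_t U^{\operatorname{res}}$ against its inverse yields $U^{\operatorname{res}}(W^{\operatorname{res}})^{\top}\gamma_{0,t} = \delta^{\operatorname{res}}_t$. The only real obstacle is bookkeeping: because the subspaces are no longer orthogonal, $W\neq U$ and one must consistently keep $U$ on the range side and the dual basis $W$ on the coefficient side of every oblique projection, invoking $W^{\top}U = I_p$ rather than orthonormality of $U$ as a whole. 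Once this dual relation and the block-diagonal cancellations are in place, each identity collapses to an inverse-times-itself cancellation, and proj-invariance enters only in part $(ii)$ to make the invariant coefficients $t$-independent.
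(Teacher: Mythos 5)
Your proof is correct, and it rests on the same two pillars as the paper's own argument: the dual-basis relations coming from $W=U^{-\top}$ (so that $(W^{S_i})^{\top}U^{S_j}=\delta_{ij}I$ and $\sum_j U^{S_j}(W^{S_j})^{\top}=I_p$) and the vanishing of the cross-blocks of $\Sigma_t$ under $U$. The differences are organizational but worth noting. In $(i)$ the paper starts from $P_{\mathcal{S}_j\mid\mathcal{S}_{-j}}\gamma_{0,t}=U^{S_j}(W^{S_j})^{\top}\Sigma_t^{-1}\operatorname{Cov}(X_t,Y_t)$, inserts $WU^{\top}=I_p$, and needs the block-diagonal structure of $W^{\top}\Sigma_t^{-1}W$, i.e., of the \emph{inverse}; you instead verify the identity from the right-hand side using only the defining block structure of $U^{\top}\Sigma_t U$ together with $\operatorname{Cov}(X_t,Y_t)=\Sigma_t\gamma_{0,t}$, so $\Sigma_t^{-1}$ never appears --- slightly more elementary bookkeeping. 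In $(ii)$ the paper argues that the per-time OLS coefficient of $Y_t$ on $(U^{\operatorname{inv}})^{\top}X_t$ is constant by proj-invariance and hence coincides with the pooled OLS minimizer; you reach the same formula by averaging the normal equations $(U^{\operatorname{inv}})^{\top}\Sigma_t\gamma_{0,t}=(U^{\operatorname{inv}})^{\top}\Sigma_t U^{\operatorname{inv}}b$ over $t$ and inverting, a purely algebraic version of the same step (just state explicitly that constancy of $U^{\operatorname{inv}}b_t$ plus injectivity of $U^{\operatorname{inv}}$ gives constancy of $b_t$). Finally, you explicitly verify the time-invariance claim in $(ii)$ by reducing it to $(\gamma_{0,t}-\beta^{\operatorname{inv}})^{\top}\Sigma_t\beta^{\operatorname{inv}}=0$ and using $\gamma_{0,t}-\beta^{\operatorname{inv}}\in\mathcal{S}^{\operatorname{res}}$, $\beta^{\operatorname{inv}}\in\mathcal{S}^{\operatorname{inv}}$ and $(U^{\operatorname{res}})^{\top}\Sigma_t U^{\operatorname{inv}}=0$; the paper's proof of this proposition asserts that property but never actually proves it (it tacitly mirrors the orthogonal-case argument in Proposition~\ref{prop:invariance}), so your argument fills in a step the paper leaves implicit.
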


Proposition~\ref{prop:non_orthogonal_subspaces} implies in particular that, apart from the joint block diagonalization differences, estimating $\beta^{\operatorname{inv}}$ and $\delta^{\operatorname{res}}_t$ in the non-orthogonal case can be done as described in Section~\ref{sec:estimation}. Moreover, it  holds that the parameter $\beta^{\operatorname{inv}}$ defined for non-orthogonal $(X_t)_{t\in[n]}$-decorrelating partitions is still a time-invariant parameter. Under a generalization assumption analogous to Assumption~\ref{ass:generalization}, $\beta^{\operatorname{inv}}$ has positive explained variance at all time points $t\in\mathbb{N}$ and can be used to at least partially predict $\gamma_{0,t}$ (we have not added this result explicitly).
In addition, also in the non-orthogonal case the estimation of the residual component only requires to estimate a reduced number of parameters, that is, $\operatorname{dim}(\mathcal{S}^{\operatorname{res}})$ .

In the case of non-orthogonal partitions, however, we cannot directly interpret \isd{} as separating the true time-varying parameter into two separate optimizations of the explained variance over $\mathcal{S}^{\operatorname{inv}}$ and $\mathcal{S}^{\operatorname{res}}$. 

\begin{example}[non-orthogonal irreducible partition]
\label{ex:non_orthogonal_partition}
    Let $X_t\in\mathbb{R}^3$ with covariance matrix $\Sigma_t$ that for all $t\in[n]$ takes one of the following values
    \begin{equation*}
         \begin{bmatrix}
            1 & -0.5 & 0\\
            -0.5 & 1 & 0 \\
            0 & 0 & 1
        \end{bmatrix}  \quad \text{or} \quad
         \begin{bmatrix}
            4 & -2 & 0\\
            -2 & 7/2 & 0\\
            0 & 0 & 1
        \end{bmatrix}.
    \end{equation*}
These matrices are in block diagonal form and in particular the $2\times2$ submatrices forming the first diagonal block do not commute. This implies that an irreducible (orthogonal) joint block diagonalizer for these matrices is the three-dimensional identity matrix $I_3$, and the diagonal blocks cannot be further reduced into smaller blocks using an orthogonal transformation. It also implies that an irreducible orthogonal and $(X_t)_{t\in[n]}$-decorrelating partition is given by $\{\mathcal{S}_j\}_{j=1}^2$ with $\mathcal{S}_1\coloneqq \langle e_1, e_2\rangle$ and $\mathcal{S}_2=\langle e_3 \rangle$, where $e_j$ is the $j$-th vector of the canonical basis for $\mathbb{R}^3$. 

There exists, however, a non-orthogonal joint diagonalizer for these matrices, i.e., a non-orthogonal matrix $U$ such that $\tilde{\Sigma}_t=U^{\top}\Sigma_t U$ is diagonal. It is given by 
\begin{equation*}
    U = \begin{bmatrix}
        1/\sqrt{5} & 1 & 0 \\
        2/\sqrt{5} & 0 & 0\\
        0 & 0 & 1
    \end{bmatrix}.
\end{equation*}
$U$ induces an irreducible $(X_t)_{t\in[n]}$-decorrelating partition by
\begin{equation*}
    \bar{\mathcal{S}}_1 \coloneqq \langle \begin{bmatrix}
        1/\sqrt{5}  \\
        2/\sqrt{5}  \\
        0
    \end{bmatrix}\rangle = \langle U^{S_1}\rangle, \quad  \bar{\mathcal{S}}_2 \coloneqq \langle \begin{bmatrix}
        1  \\
        0 \\
        0 \\
    \end{bmatrix}\rangle = \langle U^{S_2}\rangle, \quad
    \bar{\mathcal{S}}_3 \coloneqq \langle \begin{bmatrix}
        0  \\
        0 \\
        1 \\
    \end{bmatrix}\rangle = \langle U^{S_3}\rangle.
\end{equation*}
Let $\mathcal{S}^{\operatorname{inv}}$ and $\bar{\mathcal{S}}^{\operatorname{inv}}$ be the invariant subspaces associated with the irreducible orthogonal partition $\{\mathcal{S}_j\}_{j=1}^3$ and the irreducible partition $\{\Bar{\mathcal{S}}_j\}_{j=1}^3$, respectively. 
As any irreducible orthogonal and $(X_t)_{t\in[n]}$-decorrelating partition is also a $(X_t)_{t\in[n]}$-decorrelating partition, it in general holds that 
$$\operatorname{dim}(\mathcal{S}^{\operatorname{inv}})\leq \operatorname{dim}(\bar{\mathcal{S}}^{\operatorname{inv}}).$$
Moreover, as in the explicit example above, the inequality can be strict.
\end{example}

\section{Auxiliary results}

\begin{lemma}
    \label{lem:orthogonal_inverse}
    Let $B\in\mathbb{R}^{m\times m}$ be a symmetric invertible matrix, $U\in\mathbb{R}^{m\times m}$ an orthogonal block diagonalizer of $B$ and $S_1,\ldots,S_q\subseteq\{1,\dots,m\}$ disjoint subsets satisfying
    \begin{equation*}
        U^{\top}B U=
        \begin{bmatrix}
            (U^{S_1})^{\top}B U^{S_1} & \mathbf{0} &\cdots&\mathbf{0}\\
            \mathbf{0}& \ddots & \ddots & \vdots \\
            \vdots & \ddots & \ddots & \mathbf{0}\\
            \mathbf{0} &\cdots & \mathbf{0} & (U^{S_q})^{\top}B U^{S_q}
        \end{bmatrix}.
    \end{equation*}
    Then it holds for all $j\in\{1,\ldots,q\}$ that
    \begin{equation*}
        (\Pi_{\mathcal{S}_j} B \Pi_{\mathcal{S}_j})^{\dagger}=\Pi_{\mathcal{S}_j} B^{-1} \Pi_{\mathcal{S}_j},
    \end{equation*}
    where $\Pi_{\mathcal{S}_j}\coloneqq U^{S_j}(U^{S_j})^{\top}$.
\end{lemma}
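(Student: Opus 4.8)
The plan is to verify directly that the candidate matrix $N\coloneqq\Pi_{\mathcal{S}_j}B^{-1}\Pi_{\mathcal{S}_j}$ satisfies the four defining Moore--Penrose conditions for $M\coloneqq\Pi_{\mathcal{S}_j}B\Pi_{\mathcal{S}_j}$, and then to invoke uniqueness of the pseudoinverse. To keep the computations transparent I would abbreviate $V\coloneqq U^{S_j}$, so that $\Pi_{\mathcal{S}_j}=VV^{\top}$, and record at the outset that, because $U$ is orthogonal, its columns are orthonormal and in particular $V^{\top}V=I_{|S_j|}$. Writing $B_j\coloneqq V^{\top}BV=(U^{S_j})^{\top}BU^{S_j}$ for the $j$-th diagonal block, the block-diagonal hypothesis gives $M=VB_jV^{\top}$.

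The one non-routine ingredient is the analogous expression for $N$, namely that $V^{\top}B^{-1}V=B_j^{-1}$. This follows from the block structure: since $U$ is orthogonal, $U^{\top}B^{-1}U=(U^{\top}BU)^{-1}$, and the inverse of a block-diagonal matrix is block-diagonal with the blocks inverted. Hence the $(j,j)$ block of $U^{\top}B^{-1}U$ equals $B_j^{-1}$, which is exactly $V^{\top}B^{-1}V$; invertibility of each $B_j$ is guaranteed because $B$, and therefore $U^{\top}BU$, is invertible. Consequently $N=VB_j^{-1}V^{\top}$.

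With $M=VB_jV^{\top}$ and $N=VB_j^{-1}V^{\top}$ in hand, the verification is a short computation using only $V^{\top}V=I$ and $B_jB_j^{-1}=B_j^{-1}B_j=I$: one finds $MN=NM=VV^{\top}=\Pi_{\mathcal{S}_j}$, which is symmetric and therefore settles the two symmetry conditions at once, while $MNM=\Pi_{\mathcal{S}_j}M=M$ and $NMN=\Pi_{\mathcal{S}_j}N=N$ settle the remaining two. Uniqueness of the Moore--Penrose pseudoinverse then yields $M^{\dagger}=N$, as claimed. The hard part, conceptually, is the block-inversion step: it is tempting but false in general that a diagonal block of $B^{-1}$ equals the inverse of the corresponding block of $B$, and this identity holds here precisely because the off-diagonal blocks of $U^{\top}BU$ vanish and $U$ is orthogonal; everything else reduces to cancellation via $V^{\top}V=I$.
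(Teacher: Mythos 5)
Your proposal is correct and follows essentially the same route as the paper: both verify the four Moore--Penrose conditions for the candidate $\Pi_{\mathcal{S}_j}B^{-1}\Pi_{\mathcal{S}_j}$ and invoke uniqueness, with the same key ingredient that orthogonality of $U$ gives $U^{\top}B^{-1}U=(U^{\top}BU)^{-1}$, so the block-diagonal structure yields $(U^{S_j})^{\top}B^{-1}U^{S_j}=\bigl((U^{S_j})^{\top}BU^{S_j}\bigr)^{-1}$. Your packaging via $M=VB_jV^{\top}$, $N=VB_j^{-1}V^{\top}$ and the single observation $MN=NM=\Pi_{\mathcal{S}_j}$ is a slightly tidier presentation of the same computation (and handles the two conditions the paper dismisses ``by symmetry'' more explicitly), but it is not a different argument.
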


\begin{proof}
The pseudo-inverse $A^{\dagger}$ of a matrix $A$ is defined as the unique matrix satisfying: (i) $AA^{\dagger}A=A$, (ii) $A^{\dagger}AA^{\dagger}=A^{\dagger}$, (iii) $(AA^{\dagger})^{\top} = AA^{\dagger}$ and  (iv) $(A^{\dagger}A)^{\top} = A^{\dagger}A$.

Fix $j\in\{1\ldots,q\}$ and define $A\coloneqq\Pi_{\mathcal{S}_j}B\Pi_{\mathcal{S}_j}$ and $A^{\dagger}\coloneqq\Pi_{\mathcal{S}_j}B^{-1}\Pi_{\mathcal{S}_j}$. Moreover, define $\widetilde{B}\coloneqq U^{\top}B U$ and for all $k\in\{1,\ldots,q\}$, $\widetilde{B}_k \coloneqq (U^{S_k})^{\top}B U^{S_k}$.
We now verify that conditions (i)-(iv) hold for $A$ and $A^{\dagger}$ and hence $A^{\dagger}$ is indeed the pseudo-inverse. Conditions (ii) and (iv) hold by symmetry of $B$ and $\Pi_S$. For (i) and (iii), first observe that by the properties of orthogonal matrices it holds that $\widetilde{B}^{-1}=U^{\top}B^{-1}U$, and, due to the block diagonal structure of $\widetilde{B}$, 
\begin{equation}
    \label{eq:block_diagonal_inverse_first}
    \widetilde{B}^{-1} = \begin{bmatrix}
    \widetilde{B}_{1}^{-1} & & \\
    & \ddots & \\
    & & \widetilde{B}_{q}^{-1} 
    \end{bmatrix} = \begin{bmatrix}
        (U^{S_1})^{\top} \\ \vdots \\
        (U^{S_q})^{\top}
    \end{bmatrix} B^{-1}
    \begin{bmatrix}
        U^{S_1} & \dots &
        U^{S_q}
    \end{bmatrix}.
\end{equation}
Hence we get that $\widetilde{B}_j^{-1}=(U^{S_j})^{\top}B^{-1}U^{S_j}$. For (i), we now get
\begin{align*}
    \Pi_{\mathcal{S}_j}B\Pi_{\mathcal{S}_j}\Pi_{\mathcal{S}_j}B^{-1}\Pi_{\mathcal{S}_j} \Pi_{\mathcal{S}_j}B\Pi_{\mathcal{S}_j}
    &=  U^{S_j}(U^{S_j})^{\top}B U^{S_j}(U^{S_j})^{\top}B^{-1}U^{S_j}(U^{S_j})^{\top}B U^{S_j}(U^{S_j})^{\top} \\
    &=  U^{S_j} \widetilde{B}_{j} \widetilde{B}_{j}^{-1} \widetilde{B}_{j}(U^{S_j})^{\top} \\
    &=  \Pi_{\mathcal{S}_j}B\Pi_{\mathcal{S}_j}.
    \end{align*}
    Similarly, for (iii) we get
    \begin{align*}
          (\Pi_{\mathcal{S}_j}B\Pi_{\mathcal{S}_j}\Pi_{\mathcal{S}_j}B^{-1}\Pi_{\mathcal{S}_j} )^{\top}
        &=  (U^{S_j}(U^{S_j})^{\top}B U^{S_j}(U^{S_j})^{\top}B^{-1}U^{S_j}(U^{S_j})^{\top})^{\top} \\
        &=  ( U^{S_j} 
        \widetilde{B}_{j}
        \widetilde{B}_{j}^{-1} (U^{S_j})^{\top})^{\top} \\
        &=  (\Pi_{\mathcal{S}_j})^{\top} \\
        &=  \Pi_{\mathcal{S}_j}.
    \end{align*}
This completes the proof of Lemma~\ref{lem:orthogonal_inverse}.

\end{proof}

\begin{lemma}
\label{lem:orthogonal_partition_diagonalizer}
    Let 
    $\mathcal{N}\subseteq \mathbb{N}$ and let
    $\{\mathcal{S}_j\}_{j=1}^q$ be an orthogonal and $(X_t)_{t\in \mathcal{N}}$-decorrelating partition.
    Then, there exists a joint block diagonalizer of $(\Sigma_t)_{t\in \mathcal{N}}$. More precisely, there exists an orthonormal matrix $U\in\mathbb{R}^{p\times p}$ such that for all $t\in\mathcal{N}$ the matrix 
    $\tilde{\Sigma}_t\coloneqq U^{\top}\Sigma_t U$ is block diagonal with $q$ diagonal blocks $\widetilde{\Sigma}_{t,j}=(U^{S_j})^{\top}\Sigma_t U^{S_j}$, $j\in\{1,\dots,q\}$ and of dimension $|S_j|=\operatorname{dim}(\mathcal{S}_j)$, where $S_j\subseteq \{1,\dots,p\}$ indexes a subset of the columns of $U$. Moreover, $\Pi_{\mathcal{S}_j}=U^{S_j}(U^{S_j})^{\top}$.
\end{lemma}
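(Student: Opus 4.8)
The plan is to build $U$ explicitly from the partition and then verify the claimed block-diagonal structure by translating the decorrelating condition \eqref{eq:orthogonal_partition} into a statement about the off-diagonal blocks of $U^{\top}\Sigma_t U$.

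First I would construct $U$. For each $j\in\{1,\dots,q\}$, choose an orthonormal basis of $\mathcal{S}_j$, consisting of $\dim(\mathcal{S}_j)$ vectors. Stacking all these basis vectors as columns of a single matrix $U\in\mathbb{R}^{p\times p}$—grouping those belonging to $\mathcal{S}_j$ and recording their column indices as $S_j\subseteq\{1,\dots,p\}$—produces the blocks $U^{S_j}$. Since the subspaces are pairwise orthogonal and $\bigoplus_{j=1}^q\mathcal{S}_j=\mathbb{R}^p$, the total number of columns is $\sum_j\dim(\mathcal{S}_j)=p$, columns within the same block are orthonormal by construction, and columns across different blocks are orthogonal because the subspaces are orthogonal. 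Hence the $p$ columns form an orthonormal basis of $\mathbb{R}^p$, so $U$ is orthonormal and $|S_j|=\dim(\mathcal{S}_j)$. Because $U^{S_j}$ has orthonormal columns spanning $\mathcal{S}_j$, the matrix $U^{S_j}(U^{S_j})^{\top}$ is exactly the orthogonal projection onto $\mathcal{S}_j$, which gives $\Pi_{\mathcal{S}_j}=U^{S_j}(U^{S_j})^{\top}$ directly.

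Next I would verify the block-diagonal structure. Using symmetry of the projections, the decorrelating condition \eqref{eq:orthogonal_partition} reads, for all $i\neq j$ and all $t\in\mathcal{N}$,
\[
0=\operatorname{Cov}(\Pi_{\mathcal{S}_i}X_t,\Pi_{\mathcal{S}_j}X_t)=\Pi_{\mathcal{S}_i}\Sigma_t\Pi_{\mathcal{S}_j}=U^{S_i}(U^{S_i})^{\top}\Sigma_t U^{S_j}(U^{S_j})^{\top}.
\]
Multiplying this identity on the left by $(U^{S_i})^{\top}$ and on the right by $U^{S_j}$, and using $(U^{S_i})^{\top}U^{S_i}=I_{|S_i|}$ and $(U^{S_j})^{\top}U^{S_j}=I_{|S_j|}$ by orthonormality of the columns within each block, collapses the projection factors and yields $(U^{S_i})^{\top}\Sigma_t U^{S_j}=0$. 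Since the $(i,j)$-block of $\tilde{\Sigma}_t=U^{\top}\Sigma_t U$ is precisely $(U^{S_i})^{\top}\Sigma_t U^{S_j}$, all off-diagonal blocks vanish, so $\tilde{\Sigma}_t$ is block diagonal with diagonal blocks $\tilde{\Sigma}_{t,j}=(U^{S_j})^{\top}\Sigma_t U^{S_j}$, as claimed.

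The argument is essentially bookkeeping once the right objects are assembled; the only step requiring genuine care is the passage from $\Pi_{\mathcal{S}_i}\Sigma_t\Pi_{\mathcal{S}_j}=0$ to $(U^{S_i})^{\top}\Sigma_t U^{S_j}=0$, where orthonormality of the columns within each block is exactly what lets one cancel the surrounding projection factors. No substantive obstacle arises, since the partition already supplies the orthonormal basis and the identity $\operatorname{Cov}(AX_t,BX_t)=A\Sigma_tB^{\top}$ handles the translation into matrix form.
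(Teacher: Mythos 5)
Your proof is correct and follows essentially the same route as the paper: construct $U$ by stacking orthonormal bases of the pairwise orthogonal subspaces, identify $\Pi_{\mathcal{S}_j}=U^{S_j}(U^{S_j})^{\top}$, and translate the decorrelating condition $\Pi_{\mathcal{S}_i}\Sigma_t\Pi_{\mathcal{S}_j}=0$ into the vanishing of the off-diagonal blocks $(U^{S_i})^{\top}\Sigma_t U^{S_j}$. In fact you spell out the cancellation step (left-multiplying by $(U^{S_i})^{\top}$ and right-multiplying by $U^{S_j}$) that the paper leaves implicit, which is a welcome bit of extra rigor.
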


\begin{proof}
    Let $U=(u_1,\ldots,u_p)\in\mathbb{R}^{p\times p}$ be an orthonormal matrix with columns $u_1,\ldots,u_p$ such that for all $j\in\{1,\ldots,q\}$ there exists $S_j\subseteq\{1,\ldots,p\}$ such that $\mathcal{S}_j=\operatorname{span}(\{u_k\mid k\in S_j\})$. Such a matrix $U$ can be constructed by selecting an orthonormal basis for each of the disjoint subspaces $\{\mathcal{S}_j\}_{j=1}^q$. Furthermore, assume that the columns of $U$ are ordered in such a way that for all $i,j\in\{1,\ldots,q\}$ with $i<j$ it holds for all $k\in S_i$ and $\ell\in S_j$ that $k<\ell$.
    As the matrix $U$ is orthogonal, it holds for all $j\in\{1,\ldots,q\}$ that the projection matrix $\Pi_{\mathcal{S}_j}$ can be expressed as $\Pi_{\mathcal{S}_j} = U^{S_j}(U^{S_j})^{\top}$ and hence using the definition of orthogonal partition (see \eqref{eq:orthogonal_partition}) it holds that, for all $t \in \mathcal{N}$,
    \begin{equation*}
        \widetilde{\Sigma}_t\coloneqq U^{\top}\Sigma_t U =
        \begin{bmatrix}
            \widetilde{\Sigma}_{t,1} & \mathbf{0} &\cdots&\mathbf{0}\\
            \mathbf{0}& \ddots & \ddots & \vdots \\
            \vdots & \ddots & \ddots & \mathbf{0}\\
            \mathbf{0} &\cdots & \mathbf{0} & \widetilde{\Sigma}_{t,q}
        \end{bmatrix},
    \end{equation*}
    where for all $j\in\{1,\ldots,q\}$ we defined $\widetilde{\Sigma}_{t,j}\coloneqq (U^{S_j})^{\top}\Sigma_t U^{S_j}$. 
\end{proof}

\begin{lemma}
    \label{lem:pseudo_inv_projection}
    Let $\{\mathcal{S}_j\}_{j=1}^q$ be an orthogonal and $(X_t)_{t\in[n]}$-decorrelating partition. Then it holds for all $t\in[n]$ and, for all $j \in \{1, \ldots, q\}$, that
\begin{equation}
\label{eq:pseudo_inverse_proof}
\operatorname{Var}\left(\Pi_{\mathcal{S}_j}X_t\right)^{\dagger} = \Pi_{\mathcal{S}_j}\operatorname{Var}\left(X_t\right)^{-1}\Pi_{\mathcal{S}_j}.
\end{equation}
\end{lemma}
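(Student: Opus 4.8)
The plan is to reduce the statement to the two preceding auxiliary lemmas, which together do essentially all the work. First I would rewrite the left-hand side using the symmetry of the orthogonal projection: since $\Pi_{\mathcal{S}_j}^{\top}=\Pi_{\mathcal{S}_j}$, we have for every $t\in[n]$ that
\begin{equation*}
    \operatorname{Var}\left(\Pi_{\mathcal{S}_j}X_t\right)=\Pi_{\mathcal{S}_j}\operatorname{Var}(X_t)\Pi_{\mathcal{S}_j}=\Pi_{\mathcal{S}_j}\Sigma_t\Pi_{\mathcal{S}_j}.
\end{equation*}
So the claim~\eqref{eq:pseudo_inverse_proof} is exactly the identity $(\Pi_{\mathcal{S}_j}\Sigma_t\Pi_{\mathcal{S}_j})^{\dagger}=\Pi_{\mathcal{S}_j}\Sigma_t^{-1}\Pi_{\mathcal{S}_j}$, which is the content of Lemma~\ref{lem:orthogonal_inverse} with $B=\Sigma_t$.

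Next I would verify that the hypotheses of Lemma~\ref{lem:orthogonal_inverse} are met for each fixed $t$. The matrix $B=\Sigma_t$ is symmetric and invertible: symmetry is automatic for a covariance matrix, and invertibility follows from the Setting's assumption that $\Sigma_t$ is strictly positive definite. To supply the required orthogonal block diagonalizer $U$, I would invoke Lemma~\ref{lem:orthogonal_partition_diagonalizer} (with $\mathcal{N}=[n]$): because $\{\mathcal{S}_j\}_{j=1}^q$ is an orthogonal and $(X_t)_{t\in[n]}$-decorrelating partition, there exists a single orthonormal $U$ such that, for all $t\in[n]$, $U^{\top}\Sigma_t U$ is block diagonal with blocks indexed by the disjoint sets $S_j$, and moreover $\Pi_{\mathcal{S}_j}=U^{S_j}(U^{S_j})^{\top}$. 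This is precisely the block-diagonal structure and projection identity that Lemma~\ref{lem:orthogonal_inverse} presupposes, so it applies to $B=\Sigma_t$ for each $t$ with the same $U$.

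With both lemmas in place, I would conclude by chaining the identities: for all $t\in[n]$ and all $j\in\{1,\dots,q\}$,
\begin{equation*}
    \operatorname{Var}\left(\Pi_{\mathcal{S}_j}X_t\right)^{\dagger}=(\Pi_{\mathcal{S}_j}\Sigma_t\Pi_{\mathcal{S}_j})^{\dagger}=\Pi_{\mathcal{S}_j}\Sigma_t^{-1}\Pi_{\mathcal{S}_j}=\Pi_{\mathcal{S}_j}\operatorname{Var}(X_t)^{-1}\Pi_{\mathcal{S}_j},
\end{equation*}
which is the assertion. The only genuine step requiring care is checking that the \emph{same} orthonormal $U$ block-diagonalizes every $\Sigma_t$ simultaneously and induces exactly the projections $\Pi_{\mathcal{S}_j}$; this is guaranteed by the decorrelating property of the partition via Lemma~\ref{lem:orthogonal_partition_diagonalizer}, so no real obstacle remains and the proof is essentially a two-line composition of the auxiliary results. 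I would note in passing that the identity holds for each $t$ separately, so fixing $t$ throughout and applying the per-matrix Lemma~\ref{lem:orthogonal_inverse} is sufficient; there is no need to exploit the joint structure beyond knowing that a common $U$ exists.
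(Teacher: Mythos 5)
Your proposal is correct and follows essentially the same route as the paper's proof: obtain a common orthonormal block diagonalizer $U$ with $\Pi_{\mathcal{S}_j}=U^{S_j}(U^{S_j})^{\top}$ from Lemma~\ref{lem:orthogonal_partition_diagonalizer} (with $\mathcal{N}=[n]$), then apply Lemma~\ref{lem:orthogonal_inverse} with $B=\Sigma_t$. Your version is slightly more explicit than the paper's in spelling out the identity $\operatorname{Var}(\Pi_{\mathcal{S}_j}X_t)=\Pi_{\mathcal{S}_j}\Sigma_t\Pi_{\mathcal{S}_j}$ and in verifying the hypotheses of Lemma~\ref{lem:orthogonal_inverse}, but the argument is the same.
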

\begin{proof}
Let $U\in\mathbb{R}^{p\times p}$ be an orthonormal matrix such that, for all $t\in[n]$, $\tilde{\Sigma}_t \coloneqq U^{\top}\Sigma_t U$ is block diagonal with $q$ diagonal blocks $\widetilde{\Sigma}_{t,1}, \dots, \widetilde{\Sigma}_{t,q}$ of dimensions $\operatorname{dim}(\mathcal{S}_1), \dots, \operatorname{dim}(\mathcal{S}_q)$. Such a matrix exists by Lemma~\ref{lem:orthogonal_partition_diagonalizer} and each diagonal block is given by $\widetilde{\Sigma}_{t,j}\coloneqq (U^{S_j})^{\top}\Sigma_t U^{S_j}$ and the projection matrix $\Pi_{\mathcal{S}_j}$ can be expressed as $\Pi_{\mathcal{S}_j} = U^{S_j}(U^{S_j})^{\top}$. The statement then follows from Lemma~\ref{lem:orthogonal_inverse}.
\end{proof}

\paragraph{Proof of Lemma~\ref{lem:time_var_parameter_separation}}
\begin{proof}
    Let $U\in\mathbb{R}^{p\times p}$ be an orthonormal matrix such that, for all $t\in[n]$, $\tilde{\Sigma}_t \coloneqq U^{\top}\Sigma_t U$ is block diagonal with $q$ diagonal blocks $\widetilde{\Sigma}_{t,1}, \dots, \widetilde{\Sigma}_{t,q}$ of dimensions $\operatorname{dim}(\mathcal{S}_1), \dots, \operatorname{dim}(\mathcal{S}_q)$. Such a matrix exists by Lemma~\ref{lem:orthogonal_partition_diagonalizer} and each diagonal block is given by $\widetilde{\Sigma}_{t,j}\coloneqq (U^{S_j})^{\top}\Sigma_t U^{S_j}$ and the projection matrix $\Pi_{\mathcal{S}_j}$ can be expressed as $\Pi_{\mathcal{S}_j} = U^{S_j}(U^{S_j})^{\top}$.
    
    Now, for an arbitrary $t\in[n]$ it holds using the linear model \eqref{eq:model_def} that $\gamma_{0,t}=\Sigma_t^{-1}\operatorname{Cov}(X_t,Y_t)$ and hence we use $U$ to get the following expansion
    \begin{align*}
        \gamma_{0,t} & = \Sigma_t^{-1}\operatorname{Cov}(X_t,Y_t) \\
        & = U \tilde{\Sigma}_{t}^{-1} U^{\top} \operatorname{Cov}(X_t,Y_t) \\
        & = U \begin{bmatrix}
            \tilde{\Sigma}_{t,1}^{-1} & & \\
            & \ddots & \\
            & & \tilde{\Sigma}_{t,q}^{-1} 
        \end{bmatrix} U^{\top} \operatorname{Cov}(X_t,Y_t) \\
        & =U\begin{bmatrix}
             \tilde{\Sigma}_{t,1}^{-1}(U^{S_1})^{\top}\operatorname{Cov}(X_t,Y_t)  \\
            \vdots \\
             \tilde{\Sigma}_{t,q}^{-1}(U^{S_q})^{\top}\operatorname{Cov}(X_t,Y_t) 
        \end{bmatrix}.
        \end{align*}
        By the properties of orthogonal matrices it holds that $\tilde{\Sigma}_{t}^{-1}=U^{\top}\Sigma_t^{-1}U$, and, due to the block diagonal structure of $\tilde{\Sigma}_t$, 
        \begin{equation}
        \label{eq:block_diagonal_inverse}
            \tilde{\Sigma}_{t}^{-1} = \begin{bmatrix}
            \tilde{\Sigma}_{t,1}^{-1} & & \\
            & \ddots & \\
            & & \tilde{\Sigma}_{t,q}^{-1} 
            \end{bmatrix} = \begin{bmatrix}
                (U^{S_1})^{\top} \\ \vdots \\
                (U^{S_q})^{\top}
            \end{bmatrix}\Sigma_t^{-1} \begin{bmatrix}
                U^{S_1} & \dots &
                U^{S_q}
            \end{bmatrix}.
        \end{equation}
        This implies that
        $\tilde{\Sigma}_{t,j}^{-1}=(U^{S_j})^{\top}\Sigma_t^{-1}U^{S_j}$ and therefore
        \begin{align*}
        \gamma_{0,t} & = U  \begin{bmatrix}
            (U^{S_1})^{\top}\Sigma_t^{-1}U^{S_1}(U^{S_1})^{\top}\operatorname{Cov}(X_t,Y_t)  \\
            \vdots \\
            (U^{S_q})^{\top}\Sigma_t^{-1}U^{S_q}(U^{S_q})^{\top}\operatorname{Cov}(X_t,Y_t) 
        \end{bmatrix} \\
         & =  \begin{bmatrix}
            U^{S_1} & \cdots & U^{S_q}
        \end{bmatrix}  \begin{bmatrix}
            (U^{S_1})^{\top}\Sigma_t^{-1}U^{S_1}(U^{S_1})^{\top}\operatorname{Cov}(X_t,Y_t)  \\
            \vdots \\
            (U^{S_q})^{\top}\Sigma_t^{-1}U^{S_q}(U^{S_q})^{\top}\operatorname{Cov}(X_t,Y_t) 
        \end{bmatrix} \\
         & = \sum_{j=1}^q  U^{S_j}(U^{S_j})^{\top}\Sigma_t^{-1}U^{S_j}(U^{S_j})^{\top}\operatorname{Cov}(X_t,Y_t) \\
        & =  \sum_{j=1}^q  \Pi_{\mathcal{S}_j}\Sigma_t^{-1}\Pi_{\mathcal{S}_j}\operatorname{Cov}(X_t,Y_t) \\
        & = \sum_{j=1}^q  \Pi_{\mathcal{S}_j}\Sigma_t^{-1}\Pi_{\mathcal{S}_j}\Pi_{\mathcal{S}_j}\operatorname{Cov}(X_t,Y_t) \\
         & = \sum_{j=1}^q \operatorname{Var}(\Pi_{\mathcal{S}_j}X_t)^{\dagger} \operatorname{Cov}(\Pi_{\mathcal{S}_j}X_t,Y_t).
    \end{align*}
    In the second to last equality, we have used that $\Pi_{\mathcal{S}_j}$ is a projection matrix and thus idempotent. In the last equality we have used that $\operatorname{Var}(\Pi_{\mathcal{S}_j}X_t)^{\dagger}=\Pi_{\mathcal{S}_j}\Sigma_t^{-1}\Pi_{\mathcal{S}_j}$, by Lemma~\ref{lem:pseudo_inv_projection}. 
    It now suffices to show that $\Pi_{\mathcal{S}_j}\gamma_{0,t}= \operatorname{Var}(\Pi_{\mathcal{S}_j}X_t)^{\dagger}\operatorname{Cov}(\Pi_{\mathcal{S}_j}X_t, Y_t )$, which follows from the following computation
    \begin{align*}
    \Pi_{\mathcal{S}_j}\gamma_{0,t} 
    & = \Pi_{\mathcal{S}_j} \sum_{i=1}^q \Pi_{\mathcal{S}_i}\operatorname{Var}(X_t)^{-1}\Pi_{\mathcal{S}_i}\operatorname{Cov}(\Pi_{\mathcal{S}_i}X_t,Y_t) \\
    & = \Pi_{\mathcal{S}_j}\operatorname{Var}(X_t)^{-1}\Pi_{\mathcal{S}_j}\operatorname{Cov}(\Pi_{\mathcal{S}_j}X_t,Y_t) \\
    & = \operatorname{Var}(\Pi_{\mathcal{S}_j}X_t)^{\dagger}\operatorname{Cov}(\Pi_{\mathcal{S}_j}X_t, Y_t ),    
\end{align*}
where the first equality follows from the first part of this proof and and the third equality follows from Lemma~\ref{lem:pseudo_inv_projection}.
\end{proof}

\paragraph{Proof of Lemma~\ref{lem:subspace_maximization}}
\begin{proof}
      For all $j\in\{1,\dots,q\}$, for all $\beta\in\mathcal{S}_j$ and for all $t\in\mathcal{N}$ it holds that
    \begin{align}
        \operatorname{\Delta Var}_t(\beta) & = 2\operatorname{Cov}(Y_t, X_t)\beta - \beta^{\top}\operatorname{Var}(X_t)\beta \nonumber \\
        & =  2\operatorname{Cov}(Y_t, (\textstyle\sum_{k=1}^q\Pi_{\mathcal{S}_k}X_t))\Pi_{\mathcal{S}_j}\beta - \beta^{\top}\Pi_{\mathcal{S}_j}\operatorname{Var}(\textstyle\sum_{k=1}^q\Pi_{\mathcal{S}_k}X_t)\Pi_{\mathcal{S}_j}\beta  \nonumber\\
        & = 2\operatorname{Cov}(Y_t, (\Pi_{\mathcal{S}_j}X_t))\Pi_{\mathcal{S}_j}\beta - \beta^{\top}\Pi_{\mathcal{S}_j}\operatorname{Var}(\Pi_{\mathcal{S}_j}X_t)\Pi_{\mathcal{S}_j}\beta \nonumber\\
        & = 2\operatorname{Cov}(Y_t, (\Pi_{\mathcal{S}_j}X_t))\beta - \beta^{\top}\operatorname{Var}(\Pi_{\mathcal{S}_j}X_t)\beta,\label{eq:without_sum_expansion}
    \end{align}
    where the first equality follows by \eqref{eq:explained_var} and the third equality follows from the definition of an orthogonal partition. It follows that
    \begin{align*}
        \nabla( \operatorname{\Delta Var}_t)(\beta) = 2\operatorname{Cov}(\Pi_{\mathcal{S}_j}X_t, Y_t) - 2\operatorname{Var}(\Pi_{\mathcal{S}_j}X_t)\beta,
    \end{align*}
    where $\nabla$ denotes the gradient.
    The equation $\nabla (\operatorname{\Delta Var}_t)(\beta)=0$ has a unique solution in $\mathcal{S}_j$ given by $\operatorname{Var}(\Pi_{\mathcal{S}_j}X_t)^{\dagger}\operatorname{Cov}(\Pi_{\mathcal{S}_j}X_t, Y_t )$. To see this, observe that all other solutions in $\mathbb{R}^p$ are given, for an arbitrary vector $w\in\mathbb{R}^p$, by
    \begin{equation}
        \operatorname{Var}(\Pi_{\mathcal{S}_j}X_t)^{\dagger}\operatorname{Cov}(\Pi_{\mathcal{S}_j}X_t, Y_t ) + (I_p-\operatorname{Var}(\Pi_{\mathcal{S}_j}X_t)^{\dagger}\operatorname{Var}(\Pi_{\mathcal{S}_j}X_t))w
        \label{eq:pseudo_inv_regression_solutions}
    \end{equation} 
    where $I_p$ denotes the identity matrix. 
    Let $U\in\mathbb{R}^{p\times p}$ and $(\tilde{\Sigma}_t)_{t\in\mathcal{N}}$ be defined for the orthogonal and $(X_t)_{t\in\mathcal{N}}$-decorrelating partition as in Lemma~\ref{lem:orthogonal_partition_diagonalizer}.
    We now observe that
    \begin{align}
    \label{eq:pseudoinverse_identity}
        \operatorname{Var}(\Pi_{\mathcal{S}_j}X_t)^{\dagger}\operatorname{Var}(\Pi_{\mathcal{S}_j}X_t) & = \Pi_{\mathcal{S}_j}\Sigma_t^{-1}\Pi_{\mathcal{S}_j}\Pi_{\mathcal{S}_j}\Sigma_t\Pi_{\mathcal{S}_j}  \\
        & = U^{S_j}(U^{S_j})^{\top}\Sigma_t^{-1} U^{S_j}(U^{S_j})^{\top}\Sigma_t U^{S_j}(U^{S_j})^{\top} \notag\\
        & = U^{S_j} \tilde{\Sigma}_{t,j}^{-1} \tilde{\Sigma}_{t,j} (U^{S_j})^{\top} \notag\\
        & = \Pi_{\mathcal{S}_j}. \notag
    \end{align}
    where the first equality follows from Lemma~\ref{lem:orthogonal_inverse}, 
    since $U$ jointly block diagonalizes the matrices $(\Sigma_t)_{t\in\mathcal{N}}$ by Lemma~\ref{lem:orthogonal_partition_diagonalizer}.
    We can therefore rewrite \eqref{eq:pseudo_inv_regression_solutions} as
    \begin{equation*}
        \operatorname{Var}(\Pi_{\mathcal{S}_j}X_t)^{\dagger}\operatorname{Cov}(\Pi_{\mathcal{S}_j}X_t, Y_t ) + (I_p-\Pi_{\mathcal{S}_j})w.
    \end{equation*}
    For all $w \in \mathcal{S}_j$, this expression equals $\operatorname{Var}(\Pi_{\mathcal{S}_j}X_t)^{\dagger}\operatorname{Cov}(\Pi_{\mathcal{S}_j}X_t, Y_t )$. For all $w \not \in \mathcal{S}_j$, it is not in $\mathcal{S}_j$.
    This concludes the proof of Lemma~\ref{lem:subspace_maximization}.
\end{proof}

\begin{lemma}
    \label{lem:inv_res_partition}
    $\{\mathcal{S}^{\operatorname{inv}}, \mathcal{S}^{\operatorname{res}}\}$ is an orthogonal and $(X_t)_{t\in[n]}$-decorrelating partition.
\end{lemma}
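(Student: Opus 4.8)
The plan is to verify the three defining requirements of an orthogonal and $(X_t)_{t\in[n]}$-decorrelating partition for the two-element collection $\{\mathcal{S}^{\operatorname{inv}}, \mathcal{S}^{\operatorname{res}}\}$: namely, that the two subspaces are orthogonal, that they span $\mathbb{R}^p$ as a direct sum, and that they satisfy the decorrelation identity \eqref{eq:orthogonal_partition}, which for a partition of cardinality two reduces to the single condition $\operatorname{Cov}(\Pi_{\mathcal{S}^{\operatorname{inv}}}X_t, \Pi_{\mathcal{S}^{\operatorname{res}}}X_t)=0$ for all $t\in[n]$. Throughout, I fix the irreducible orthogonal and $(X_t)_{t\in[n]}$-decorrelating partition $\{\mathcal{S}_j\}_{j=1}^q$ used to define $\mathcal{S}^{\operatorname{inv}}$ and $\mathcal{S}^{\operatorname{res}}$ in \eqref{eq:invariant_residual_spaces}, and write $J^{\operatorname{inv}}\coloneqq\{j: \mathcal{S}_j \text{ opt-invariant on } [n]\}$ and $J^{\operatorname{res}}\coloneqq\{1,\dots,q\}\setminus J^{\operatorname{inv}}$, so that $\mathcal{S}^{\operatorname{inv}}=\bigoplus_{j\in J^{\operatorname{inv}}}\mathcal{S}_j$ and $\mathcal{S}^{\operatorname{res}}=\bigoplus_{j\in J^{\operatorname{res}}}\mathcal{S}_j$.

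The first two requirements are immediate from the fact that $\{\mathcal{S}_j\}_{j=1}^q$ is already an orthogonal partition of $\mathbb{R}^p$. Since the $\mathcal{S}_j$ are pairwise orthogonal, every subspace indexed by $J^{\operatorname{inv}}$ is orthogonal to every subspace indexed by $J^{\operatorname{res}}$, so $\mathcal{S}^{\operatorname{inv}}$ and $\mathcal{S}^{\operatorname{res}}$ are orthogonal; and since $J^{\operatorname{inv}}$ and $J^{\operatorname{res}}$ partition $\{1,\dots,q\}$, we get $\mathcal{S}^{\operatorname{inv}}\oplus\mathcal{S}^{\operatorname{res}}=\bigoplus_{j=1}^q\mathcal{S}_j=\mathbb{R}^p$. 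Because the orthogonal projection onto a direct sum of pairwise orthogonal subspaces equals the sum of the individual projections, I would record the identities $\Pi_{\mathcal{S}^{\operatorname{inv}}}=\sum_{j\in J^{\operatorname{inv}}}\Pi_{\mathcal{S}_j}$ and $\Pi_{\mathcal{S}^{\operatorname{res}}}=\sum_{j\in J^{\operatorname{res}}}\Pi_{\mathcal{S}_j}$, which can also be read off directly from the block-diagonalizer representation $\Pi_{\mathcal{S}_j}=U^{S_j}(U^{S_j})^{\top}$ of Lemma~\ref{lem:orthogonal_partition_diagonalizer}.

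For the decorrelation condition, I would expand the cross-covariance using bilinearity and the symmetry of the projection matrices, writing for each $t\in[n]$
\begin{equation*}
\operatorname{Cov}(\Pi_{\mathcal{S}^{\operatorname{inv}}}X_t, \Pi_{\mathcal{S}^{\operatorname{res}}}X_t)
= \Pi_{\mathcal{S}^{\operatorname{inv}}}\Sigma_t\Pi_{\mathcal{S}^{\operatorname{res}}}
= \sum_{i\in J^{\operatorname{inv}}}\sum_{j\in J^{\operatorname{res}}}\Pi_{\mathcal{S}_i}\Sigma_t\Pi_{\mathcal{S}_j}.
\end{equation*}
Since any $i\in J^{\operatorname{inv}}$ and $j\in J^{\operatorname{res}}$ are necessarily distinct, each summand equals $\operatorname{Cov}(\Pi_{\mathcal{S}_i}X_t, \Pi_{\mathcal{S}_j}X_t)$, which vanishes by the decorrelation property \eqref{eq:orthogonal_partition} of the original orthogonal partition. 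Hence the entire double sum is zero, which establishes the required identity for all $t\in[n]$ and completes the verification. In truth this argument is essentially bookkeeping: the only substantive input is the pairwise decorrelation \eqref{eq:orthogonal_partition} of $\{\mathcal{S}_j\}_{j=1}^q$, and the main point to be careful about is simply justifying that the projection onto an orthogonal direct sum decomposes additively, after which the claim reduces to the cross terms vanishing.
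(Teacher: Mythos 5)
Your proof is correct and follows essentially the same route as the paper's: both deduce orthogonality and the direct-sum property of $\{\mathcal{S}^{\operatorname{inv}}, \mathcal{S}^{\operatorname{res}}\}$ directly from the pairwise orthogonality of the underlying irreducible partition, and both verify the decorrelation condition by expanding $\operatorname{Cov}(\Pi_{\mathcal{S}^{\operatorname{inv}}}X_t, \Pi_{\mathcal{S}^{\operatorname{res}}}X_t)$ bilinearly into cross terms $\operatorname{Cov}(\Pi_{\mathcal{S}_i}X_t, \Pi_{\mathcal{S}_j}X_t)$, each of which vanishes by \eqref{eq:orthogonal_partition}. Your write-up is merely more explicit about the additivity of the projections, a step the paper leaves implicit.
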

\begin{proof}
    Let $\{\mathcal{S}_j\}_{j=1}^q$ be a fixed irreducible orthogonal and $(X_t)_{t\in[n]}$-decorrelating partition according to which $\mathcal{S}^{\operatorname{inv}}$ and $\mathcal{S}^{\operatorname{res}}$ are defined. By orthogonality of the subspaces in the partition and by definition of $\mathcal{S}^{\operatorname{inv}}$ and $\mathcal{S}^{\operatorname{res}}$, $\mathcal{S}^{\operatorname{res}}=(\mathcal{S}^{\operatorname{inv}})^{\perp}$. Moreover, by definition of orthogonal and $(X_t)_{t\in[n]}$-decorrelating partition, it holds that 
    \begin{align*}
        \operatorname{Cov}(\Pi_{\mathcal{S}^{\operatorname{inv}}}X_t, \Pi_{\mathcal{S}^{\operatorname{res}}}X_t) =  \operatorname{Cov}(\sum_{\substack{j\in\{1,\dots,q\}:\\ \mathcal{S}_j \text{ opt-invariant on } [n]}}\Pi_{\mathcal{S}_j}X_t, \sum_{\substack{j\in\{1,\dots,q\}:\\ \mathcal{S}_j \text{ not opt-invariant on } [n]}}\Pi_{\mathcal{S}_j}X_t) = 0.
    \end{align*}
\end{proof}

\begin{lemma}
\label{lem:invariant_subspace}
$\mathcal{S}^{\operatorname{inv}}$ is opt-invariant on $[n]$.
\end{lemma}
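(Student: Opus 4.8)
The goal is to show that the maximizer $\argmax_{\beta\in\mathcal{S}^{\operatorname{inv}}}\operatorname{\Delta Var}_t(\beta)$ does not depend on $t\in[n]$. The plan is to reduce the single optimization over $\mathcal{S}^{\operatorname{inv}}$ to the already-analyzed optimizations over the constituent subspaces $\mathcal{S}_j$ and then invoke their opt-invariance. Fix the irreducible orthogonal and $(X_t)_{t\in[n]}$-decorrelating partition $\{\mathcal{S}_j\}_{j=1}^q$ used to define $\mathcal{S}^{\operatorname{inv}}$ in \eqref{eq:invariant_residual_spaces}, and set $J\coloneqq\{j\in\{1,\dots,q\}\mid\mathcal{S}_j\text{ opt-invariant on }[n]\}$, so that $\mathcal{S}^{\operatorname{inv}}=\bigoplus_{j\in J}\mathcal{S}_j$. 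First I would identify the maximizer over $\mathcal{S}^{\operatorname{inv}}$ in closed form: by Lemma~\ref{lem:inv_res_partition}, $\{\mathcal{S}^{\operatorname{inv}},\mathcal{S}^{\operatorname{res}}\}$ is itself an orthogonal and $(X_t)_{t\in[n]}$-decorrelating partition (of cardinality two), so applying Lemma~\ref{lem:subspace_maximization} to this partition yields $\argmax_{\beta\in\mathcal{S}^{\operatorname{inv}}}\operatorname{\Delta Var}_t(\beta)=\operatorname{Var}(\Pi_{\mathcal{S}^{\operatorname{inv}}}X_t)^{\dagger}\operatorname{Cov}(\Pi_{\mathcal{S}^{\operatorname{inv}}}X_t,Y_t)$, which equals $\Pi_{\mathcal{S}^{\operatorname{inv}}}\gamma_{0,t}$ by Lemma~\ref{lem:time_var_parameter_separation}.

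Next I would split $\Pi_{\mathcal{S}^{\operatorname{inv}}}\gamma_{0,t}$ across the constituent subspaces. Since the $\{\mathcal{S}_j\}_{j\in J}$ are pairwise orthogonal with direct sum $\mathcal{S}^{\operatorname{inv}}$, the orthogonal projection onto their sum is the sum of the individual projections, i.e.\ $\Pi_{\mathcal{S}^{\operatorname{inv}}}=\sum_{j\in J}\Pi_{\mathcal{S}_j}$, whence $\Pi_{\mathcal{S}^{\operatorname{inv}}}\gamma_{0,t}=\sum_{j\in J}\Pi_{\mathcal{S}_j}\gamma_{0,t}$. By Lemmas~\ref{lem:time_var_parameter_separation} and \ref{lem:subspace_maximization}, each summand satisfies $\Pi_{\mathcal{S}_j}\gamma_{0,t}=\argmax_{\beta\in\mathcal{S}_j}\operatorname{\Delta Var}_t(\beta)$, and because every $j\in J$ indexes an opt-invariant subspace, each such $\argmax$ is by definition \eqref{eq:time_invariant_irr_subspaces} constant in $t\in[n]$. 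A finite sum of $t$-constant terms is $t$-constant, so $\argmax_{\beta\in\mathcal{S}^{\operatorname{inv}}}\operatorname{\Delta Var}_t(\beta)=\sum_{j\in J}\Pi_{\mathcal{S}_j}\gamma_{0,t}$ agrees for all $t,s\in[n]$, which is precisely opt-invariance of $\mathcal{S}^{\operatorname{inv}}$.

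I do not anticipate a serious obstacle, since the preceding lemmas carry the analytic content; the only points requiring care are the two structural facts used above, namely that $\{\mathcal{S}^{\operatorname{inv}},\mathcal{S}^{\operatorname{res}}\}$ genuinely qualifies as an orthogonal decorrelating partition (so that Lemma~\ref{lem:subspace_maximization} applies to it, provided by Lemma~\ref{lem:inv_res_partition}) and that the projection onto a direct sum of pairwise orthogonal subspaces splits as the sum of projections. An alternative route that avoids the appeal to Lemma~\ref{lem:inv_res_partition} is to show directly, using the decorrelation property \eqref{eq:orthogonal_partition} exactly as in the derivation of \eqref{eq:without_sum_expansion}, that for $\beta=\sum_{j\in J}\Pi_{\mathcal{S}_j}\beta\in\mathcal{S}^{\operatorname{inv}}$ the objective separates as $\operatorname{\Delta Var}_t(\beta)=\sum_{j\in J}\operatorname{\Delta Var}_t(\Pi_{\mathcal{S}_j}\beta)$; the maximization over $\mathcal{S}^{\operatorname{inv}}$ then decouples into independent maximizations over each $\mathcal{S}_j$, and opt-invariance of $\mathcal{S}^{\operatorname{inv}}$ again follows from that of the individual $\mathcal{S}_j$.
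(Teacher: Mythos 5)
Your proof is correct and follows essentially the same route as the paper: both apply Lemma~\ref{lem:subspace_maximization} to the partition $\{\mathcal{S}^{\operatorname{inv}},\mathcal{S}^{\operatorname{res}}\}$ (justified by Lemma~\ref{lem:inv_res_partition}), decompose the resulting maximizer into the per-subspace maximizers over the opt-invariant $\mathcal{S}_j$, and conclude from the definition of opt-invariance of each $\mathcal{S}_j$. Your intermediate step via $\Pi_{\mathcal{S}^{\operatorname{inv}}}\gamma_{0,t}=\sum_{j\in J}\Pi_{\mathcal{S}_j}\gamma_{0,t}$ is only a cosmetic variant of the paper's middle equality, which instead notes that the opt-invariant subspaces form a decorrelating partition for $(\Pi_{\mathcal{S}^{\operatorname{inv}}}X_t)_{t\in[n]}$.
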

\begin{proof}
That $\mathcal{S}^{\operatorname{inv}}$ is opt-invariant on $[n]$ can be seen from the following computations. It holds for all $t,s\in[n]$ that 
\begin{align*}
    \argmax_{\beta\in\mathcal{S}^{\operatorname{inv}}} \operatorname{\Delta Var}_t(\beta) & = \operatorname{Var}(\Pi_{\mathcal{S}^{\operatorname{inv}}}X_t)^{\dagger}\operatorname{Cov}(\Pi_{\mathcal{S}^{\operatorname{inv}}}X_t, Y_t) \\
    & = \sum_{\substack{j\in\{1,\dots,q\}:\\\mathcal{S}_j \text{ opt-invariant on }[n]}} \operatorname{Var}(\Pi_{\mathcal{S}_j}X_t)^{\dagger}\operatorname{Cov}(\Pi_{\mathcal{S}_j}X_t, Y_t) \\
    & = \sum_{\substack{j\in\{1,\dots,q\}:\\\mathcal{S}_j \text{ opt-invariant on }[n]}} \argmax_{\beta\in\mathcal{S}_j} \operatorname{\Delta Var}_t(\beta) \\
    & = \sum_{\substack{j\in\{1,\dots,q\}:\\\mathcal{S}_j \text{ opt-invariant on }[n]}} \argmax_{\beta\in\mathcal{S}_j} \operatorname{\Delta Var}_s(\beta) \\
    & = \argmax_{\beta\in\mathcal{S}^{\operatorname{inv}}} \operatorname{\Delta Var}_s(\beta).
\end{align*}
The first equality holds by Lemma~\ref{lem:subspace_maximization}, since $\{\mathcal{S}^{\operatorname{inv}}, \mathcal{S}^{\operatorname{res}}\}$ is indeed an orthogonal and $(X_t)_{t\in[n]}$-decorrelating partition, see Lemma~\ref{lem:inv_res_partition}. The second equality follows from the definition of $\mathcal{S}^{\operatorname{inv}}$ and can be proved by Lemma~\ref{lem:time_var_parameter_separation} 
and observing that the set of subspaces $\{\mathcal{S}_j\mid j\in\{1,\dots,q\}: \mathcal{S}_j \text{ opt-invariant on }[n]\}$ is an orthogonal and $(\Pi_{\mathcal{S}^{\operatorname{inv}}}X_t)_{t\in[n]}$-decorrelating partition.
The third equality holds by Lemma~\ref{lem:subspace_maximization} and the fourth by definition of an opt-invariant subspace on $[n]$. 
\end{proof}

\begin{lemma}
\label{lem:jbd_uniqueness}
    Let $\{A_t\}_{t=1}^n$ be a set of $n$ symmetric strictly positive definite matrices. If there exists a matrix $A\in\{A_t\}_{t=1}^n$ that has all distinct eigenvalues, then any two irreducible joint block diagonalizers  $U, \tilde{U}$ for the set $\{A_t\}_{t=1}^n$ are equal up to block permutations and block-wise isometric transformations. 
\end{lemma}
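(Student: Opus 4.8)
The plan is to pass to the eigenbasis of the distinguished matrix $A$ and exploit that, because $A$ has distinct eigenvalues, every common invariant subspace is forced to be a coordinate subspace in this basis. First I would record the elementary fact that an orthogonal joint block diagonalizer $U$ of $\{A_t\}_{t=1}^n$, with column blocks $S_1,\dots,S_q$ spanning $\mathcal{S}_j=\operatorname{span}\{u^k\mid k\in S_j\}$, satisfies $A_t\mathcal{S}_j\subseteq\mathcal{S}_j$ for every $t$ and every $j$. This is immediate from block-diagonality of $U^{\top}A_t U$ together with symmetry of $A_t$: the off-block-diagonal blocks vanish, so $A_t=\sum_j\Pi_{\mathcal{S}_j}A_t\Pi_{\mathcal{S}_j}$, which means $A_t$ preserves each $\mathcal{S}_j$. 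In particular each $\mathcal{S}_j$ is invariant under $A$ itself.

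The next step is the key structural observation. Writing $A=VDV^{\top}$ with $V=[v_1\cdots v_p]$ orthonormal and $D$ diagonal with pairwise distinct entries $\lambda_1,\dots,\lambda_p$, any subspace $W$ invariant under $A$ is a span of a subset of $\{v_1,\dots,v_p\}$: expanding $w=\sum_i c_iv_i\in W$ and applying a Vandermonde argument to $\{A^kw\}_{k\ge 0}$ (using distinctness of the $\lambda_i$) shows $v_i\in W$ whenever $c_i\neq 0$. Applying this to each $\mathcal{S}_j$, and using that the $\mathcal{S}_j$ are pairwise orthogonal with $\sum_j\operatorname{dim}(\mathcal{S}_j)=p$, I obtain a genuine set partition $\{T_j\}$ of $\{1,\dots,p\}$ with $\mathcal{S}_j=\operatorname{span}\{v_k\mid k\in T_j\}$ and $U^{S_j}=V^{T_j}O_j$ for some orthogonal $O_j$. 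Equivalently, all matrices $B_t\coloneqq V^{\top}A_t V$ are block diagonal with respect to $\{T_j\}$.

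I would then pin down which partition an irreducible diagonalizer produces. Define the graph $G$ on $\{1,\dots,p\}$ with an edge $\{i,j\}$ whenever $(B_t)_{ij}\neq 0$ for some $t$. A partition makes all $B_t$ block diagonal precisely when no edge of $G$ crosses its blocks, i.e.\ precisely when it is coarser than or equal to the connected-components partition $\Pi_G$ of $G$. Hence, among admissible partitions, the one with the maximal number of blocks is exactly $\Pi_G$, so an irreducible joint block diagonalizer must induce $\{T_j\}=\Pi_G$; crucially, $\Pi_G$ is uniquely determined by $\{A_t\}$ (through the fixed eigenbasis $V$ of $A$). Consequently both $U$ and $\tilde U$ induce the very same unordered collection of subspaces $\{\mathcal{S}_j\}$. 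Finishing is then routine: for each block of $U$ there is a block of $\tilde U$ spanning the same $\mathcal{S}_j$, and two orthonormal bases of one subspace differ by an orthogonal matrix, so $\tilde U$ equals $U$ up to a permutation of the blocks and a block-wise orthogonal (isometric) transformation, as claimed.

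The main obstacle is the key structural observation above: establishing that distinct eigenvalues of $A$ force every common invariant subspace to be spanned by eigenvectors of $A$, which is what collapses the a priori large freedom in choosing $U$ down to coordinate partitions. Once this is in place, the uniqueness of the finest admissible partition via connected components is the clean combinatorial core, and the remaining basis-change bookkeeping is mechanical. A minor point to handle carefully is verifying that the $\mathcal{S}_j$ indeed yield a set partition of the eigenvector indices—disjointness follows from orthogonality and coverage from $\sum_j\operatorname{dim}(\mathcal{S}_j)=p$—so that the reduction to partitions of $\{1,\dots,p\}$ is legitimate.
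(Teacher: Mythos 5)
Your proof is correct, but it takes a genuinely different route from the paper. The paper's proof is essentially citation-based: it invokes Theorem~1 of \citet{murota2010numerical} (existence and uniqueness of the finest common block structure, which in general has the form $\bigoplus_j (I_{\bar m_j}\otimes A_{t,j})$) together with their Proposition~1 (each eigenspace of a matrix in the set is contained in one of the uniquely determined subspaces $\mathcal{U}_j$, with matching multiplicities), and then uses the distinct-eigenvalue hypothesis only to force all multiplicities $\bar m_j$ to equal $1$, after which Murota's uniqueness statement delivers the conclusion. You instead prove everything from scratch: (a) block diagonality of $U^{\top}A_tU$ plus orthogonality of the $\mathcal{S}_j$ gives $A_t=\sum_j \Pi_{\mathcal{S}_j}A_t\Pi_{\mathcal{S}_j}$, so each $\mathcal{S}_j$ is invariant under every $A_t$; (b) the Vandermonde argument classifies all invariant subspaces of the distinguished matrix $A$ as spans of subsets of its eigenvectors, which collapses every admissible block structure to a partition of the eigenvector indices; (c) the finest admissible partition is the connected-components partition of the graph built from the nonzero entries of $V^{\top}A_tV$, and this partition is unique, so any two irreducible diagonalizers induce the same subspaces and hence agree up to block permutation and block-wise orthogonal transformations. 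The checks you flag (that the index sets form a genuine partition; that maximizing the block count among partitions coarser than $\Pi_G$ is achieved only by $\Pi_G$) are exactly the points that need care, and your arguments for them are sound. What each approach buys: the paper's proof is short and inherits generality from Murota's machinery (which handles the multiplicity structure absent the distinct-eigenvalue assumption), at the cost of resting on external results stated in a different formalism; your proof is self-contained, elementary, and makes transparent precisely where the distinct-eigenvalue hypothesis enters, at the cost of only covering the special case the lemma actually needs.
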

\begin{proof}
We start by observing that if a matrix is symmetric and has all distinct eigenvalues, then its eigenvectors are orthogonal to each other and unique up to scaling. We define $Q\in\mathbb{R}^{p\times p}$ as the orthonormal matrix whose columns are eigenvectors for $A$: $Q$ is then uniquely defined up to permutations of its columns. 

We now exploit the results by \citet{murota2010numerical} used in the construction of an irreducible orthogonal joint block diagonalizer for a set of symmetric matrices $\{A_t\}_{t=1}^n$ (not necessarily containing a matrix with all distinct eigenvalues). 
In the following, we translate all the useful results by~\citet{murota2010numerical} in our notation introduced for joint block diagonalizers in Section~\ref{sec:orthogonal_joint_block_diagonalization}.
\citet{murota2010numerical} show in Theorem~$1$ that there exists an irreducible orthogonal joint block diagonalizer $U$ such that, for all $t\in[n]$, $U^{\top}A_tU=\bigoplus_{j=1}^q(I_{\bar{m}_j}\otimes A_{t,j})$, where $q,\bar{m}_j\in\mathbb{N}$, $0<q, \bar{m}_j\le p$ are such that $q^U_{\max}=\sum_{j=1}^q\bar{m}_j$, and $A_{t,j}$ are square matrices (common diagonal blocks). Here $\oplus$ denotes the direct sum operator for matrices and $\otimes$ the Kronecker product. 
They further propose to partition the columns of the matrix $U$ into $q$ subsets, each denoted by $U^{S_j}$, with $j\in\{1,\dots,q\}$  indexing the diagonal blocks and $S_j\subseteq\{1,\dots,p\}$ denoting the subset of indexes corresponding to the selected columns in $U^{S_j}$, such that, for all $t\in[n]$, $(U^{S_j})^{\top}A_tU^{S_j}=I_{\bar{m}_j}\otimes A_{t,j}$. They then argue that, as a consequence of Theorem~$1$, the spaces spanned by such subsets of columns, i.e., 
\begin{equation*}
    \mathcal{U}_j\coloneqq \operatorname{span}\{u^k\mid k\in S_j\}
\end{equation*}are uniquely defined.  
We therefore only need to prove that, if at least one matrix in the set $\{A_t\}_{t=1}^n$
has all distinct eigenvalues, then for all $j\in\{1,\dots,q\}$, $\Bar{m}_j=1$. Such condition implies that the diagonal blocks indexed by $S_j$, $j\in\{1,\dots,q\}$, are irreducible, and $q^U_{\max}=q$.

The uniqueness of the spaces $\mathcal{U}_j$ then implies the uniqueness of the irreducible joint block diagonalizer $U$ up to block permutations and block-wise isometric transformations.
To show this result, we now use \citet[Proposition~1]{murota2010numerical}. 
In particular, if the set $\{A_t\}_{t=1}^n$ contains at least one matrix $A$ with all distinct eigenvalues then the assumptions of Proposition~$1$ are satisfied. Let $\{\mathcal{Q}_1,\dots, \mathcal{Q}_p\}$ be the set of eigenspaces of $A$ and for all $i\in\{1,\dots,p\}$ let $m_i\coloneqq\operatorname{dim}(\mathcal{Q}_i)=1$.
The proposition then implies that for all $i\in\{1,\dots,p\}$ there exists $j\in\{1,\dots,q\}$ such that $\mathcal{Q}_i\subseteq\mathcal{U}_j$. Moreover, for all $i$ such that $\mathcal{Q}_i\subseteq\mathcal{U}_j$ it holds that $m_i=\Bar{m}_j$. As a consequence, we obtain that, since the eigenvalues of $A$ are distinct, $m_1=\dots=m_p=1$ and therefore for all $j\in\{1,\dots,q\}$, $\bar{m}_j=1$. This concludes the proof for Lemma~\ref{lem:jbd_uniqueness}.
\end{proof}

\begin{lemma}
\label{lem:opt_invariant_projection}
    Let $\{\mathcal{S}_j\}_{j=1}^q$ be an irreducible orthogonal and $(X_t)_{t\in[n]}$-decorrelating partition. Then, for all $j\in\{1,\ldots,q\}$ it holds that
\begin{equation*}
\argmax_{\beta\in\mathcal{S}_j}\overline{\operatorname{\Delta Var}}(\beta)
    =\left(\frac{1}{n}\sum_{t=1}^n\operatorname{Var}(\Pi_{\mathcal{S}_j}X_t)\right)^{\dagger}\frac{1}{n}\sum_{t=1}^n\operatorname{Cov}(\Pi_{\mathcal{S}_j}X_t, Y_t ).
\end{equation*}
In addition,
    for all $j\in\{1,\dots,q\}$ such that $\mathcal{S}_j$ is opt-invariant on $[n]$, it holds for all $t\in[n]$ that 
    \begin{equation*}
        \argmax_{\beta\in\mathcal{S}_j}\overline{\operatorname{\Delta Var}}(\beta) = \Pi_{\mathcal{S}_j}\gamma_{0,t}=\Pi_{\mathcal{S}_j}\bar{\gamma}_{0}.
    \end{equation*}
    Moreover, $\Pi_{\mathcal{S}_j}\bar{\gamma}_{0}$ is time-invariant on $[n]$.
    
\end{lemma}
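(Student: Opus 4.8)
The plan is to prove the three claims in order, reducing each to the machinery already established for single time points. Throughout, fix $j\in\{1,\dots,q\}$ and abbreviate $A_t\coloneqq\operatorname{Var}(\Pi_{\mathcal{S}_j}X_t)=\Pi_{\mathcal{S}_j}\Sigma_t\Pi_{\mathcal{S}_j}$ and $b_t\coloneqq\operatorname{Cov}(\Pi_{\mathcal{S}_j}X_t,Y_t)=\Pi_{\mathcal{S}_j}\operatorname{Cov}(X_t,Y_t)$, together with the averages $\bar A_j\coloneqq\frac1n\sum_{t=1}^n A_t$ and $\bar b_j\coloneqq\frac1n\sum_{t=1}^n b_t$. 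To obtain the first display, I would average the pointwise expansion \eqref{eq:without_sum_expansion} from the proof of Lemma~\ref{lem:subspace_maximization} over $t\in[n]$, which gives for $\beta\in\mathcal{S}_j$ that $\overline{\operatorname{\Delta Var}}(\beta)=2\bar b_j^{\top}\beta-\beta^{\top}\bar A_j\beta$, a concave quadratic whose stationarity condition on $\mathcal{S}_j$ is $\bar A_j\beta=\bar b_j$. Exactly as in Lemma~\ref{lem:subspace_maximization}, the general solution in $\mathbb{R}^p$ is $\bar A_j^{\dagger}\bar b_j+(I_p-\bar A_j^{\dagger}\bar A_j)w$ for arbitrary $w$, so the asserted formula follows once I establish that $\bar A_j^{\dagger}\bar A_j=\Pi_{\mathcal{S}_j}$, which singles out the unique solution lying in $\mathcal{S}_j$.

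The main obstacle, and the only genuinely new ingredient beyond the earlier lemmas, is this pseudoinverse identity for the \emph{averaged} matrix. The key observation is that an orthogonal block diagonalizer $U$ of the family $(\Sigma_t)_{t\in[n]}$, which exists by Lemma~\ref{lem:orthogonal_partition_diagonalizer}, also block diagonalizes $\bar\Sigma\coloneqq\frac1n\sum_{t=1}^n\Sigma_t$, since a common block structure is preserved under averaging; moreover $\bar\Sigma$ is symmetric and strictly positive definite because each $\Sigma_t$ is. I can therefore apply Lemma~\ref{lem:orthogonal_inverse} to $\bar\Sigma$ in place of an individual covariance matrix, obtaining $\bar A_j^{\dagger}=(\Pi_{\mathcal{S}_j}\bar\Sigma\Pi_{\mathcal{S}_j})^{\dagger}=\Pi_{\mathcal{S}_j}\bar\Sigma^{-1}\Pi_{\mathcal{S}_j}$. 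The identity $\bar A_j^{\dagger}\bar A_j=\Pi_{\mathcal{S}_j}$ then follows by the same block-diagonal computation as in \eqref{eq:pseudoinverse_identity}; since $\bar A_j$ is symmetric, this also gives $\bar A_j\bar A_j^{\dagger}=\Pi_{\mathcal{S}_j}$, which I will reuse below.

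For the second claim, assume $\mathcal{S}_j$ is opt-invariant on $[n]$. By Lemmas~\ref{lem:subspace_maximization} and~\ref{lem:time_var_parameter_separation} we have $A_t^{\dagger}b_t=\Pi_{\mathcal{S}_j}\gamma_{0,t}$, and opt-invariance forces this to be constant in $t$; call the common value $c\in\mathcal{S}_j$. Applying $A_t$ and using $A_tA_t^{\dagger}=\Pi_{\mathcal{S}_j}$ (the single-time analogue of the identity above, from \eqref{eq:pseudoinverse_identity}) together with $\Pi_{\mathcal{S}_j}b_t=b_t$ yields $b_t=A_tc$, hence $\bar b_j=\bar A_j c$. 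Substituting into the first claim gives $\argmax_{\beta\in\mathcal{S}_j}\overline{\operatorname{\Delta Var}}(\beta)=\bar A_j^{\dagger}\bar A_j c=\Pi_{\mathcal{S}_j}c=c=\Pi_{\mathcal{S}_j}\gamma_{0,t}$, and averaging the constant $\Pi_{\mathcal{S}_j}\gamma_{0,t}$ over $t$ identifies $c$ with $\Pi_{\mathcal{S}_j}\bar\gamma_0$.

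Finally, for time-invariance I would verify the constraint \eqref{eq:invariant_constraint} directly at $\beta=c=\Pi_{\mathcal{S}_j}\bar\gamma_0$. Using $\beta\in\mathcal{S}_j$ (so that $\beta=\Pi_{\mathcal{S}_j}\beta$) one computes $\operatorname{Cov}(Y_t-X_t^{\top}\beta,X_t^{\top}\beta)=\beta^{\top}\operatorname{Cov}(X_t,Y_t)-\beta^{\top}\Sigma_t\beta=\beta^{\top}b_t-\beta^{\top}A_t\beta$, and since $b_t=A_tc=A_t\beta$ this equals $\beta^{\top}A_t\beta-\beta^{\top}A_t\beta=0$ for every $t\in[n]$. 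Everything after the averaged pseudoinverse identity is routine bookkeeping with the projection and block-diagonal relations already recorded in the auxiliary lemmas.
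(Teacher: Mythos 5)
Your proof is correct, and for the first (and main) claim it is essentially the paper's own argument: average the quadratic expansion \eqref{eq:without_sum_expansion} over $t$, solve the stationarity equation via the pseudoinverse, and apply Lemma~\ref{lem:orthogonal_inverse} to the averaged covariance matrix (which the common block diagonalizer $U$ from Lemma~\ref{lem:orthogonal_partition_diagonalizer} also block diagonalizes, since averaging preserves the common block structure) to obtain $\bar A_j^{\dagger}\bar A_j=\Pi_{\mathcal{S}_j}$ and hence uniqueness of the maximizer in $\mathcal{S}_j$. For the last two claims you take a mildly different, more algebraic route: the paper argues at the level of argmaxes (opt-invariance makes the per-time maximizers over $\mathcal{S}_j$ coincide, so the common maximizer also maximizes the average and equals $\Pi_{\mathcal{S}_j}\gamma_{0,t}$ by Lemmas~\ref{lem:subspace_maximization} and~\ref{lem:time_var_parameter_separation}), and then verifies time-invariance by expanding $\gamma_{0,t}=\sum_i\Pi_{\mathcal{S}_i}\gamma_{0,t}$ and invoking the decorrelating property \eqref{eq:orthogonal_partition}; you instead convert opt-invariance into the linear relation $b_t=A_tc$ and substitute it both into the formula from the first claim and into the constraint \eqref{eq:invariant_constraint}. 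The two routes rest on equivalent ingredients, since $b_t-A_tc=\Pi_{\mathcal{S}_j}\Sigma_t(I_p-\Pi_{\mathcal{S}_j})\gamma_{0,t}$, which vanishes precisely by the decorrelation property the paper invokes; yours is somewhat more self-contained once the first claim is in place, while the paper's makes the role of the decorrelating partition more transparent. I see no gap.
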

\begin{proof}
Let $U\in\mathbb{R}^{p\times p}$ and $(\tilde{\Sigma}_t)_{t\in[n]}$ be defined for the orthogonal and $(X_t)_{t\in[n]}$-decorrelating partition as in Lemma~\ref{lem:orthogonal_partition_diagonalizer}.
For all $j\in\{1,\dots, q\}$ 
and for all $\beta\in\mathcal{S}_j$ it follows from \eqref{eq:without_sum_expansion} that
\begin{equation*}
    \overline{\operatorname{\Delta Var}}(\beta) =  \frac{2}{n}\sum_{t=1}^n\operatorname{Cov}(Y_t, \Pi_{\mathcal{S}_j}X_t)\beta - \beta^{\top}\frac{1}{n}\sum_{t=1}^n\operatorname{Var}(\Pi_{\mathcal{S}_j}X_t)\beta 
\end{equation*}
which has gradient
\begin{equation*}
    \nabla ( \overline{\operatorname{\Delta Var}})(\beta) =  \frac{2}{n}\sum_{t=1}^n\operatorname{Cov}(\Pi_{\mathcal{S}_j}X_t, Y_t) - \frac{2}{n}\sum_{t=1}^n\operatorname{Var}(\Pi_{\mathcal{S}_j}X_t)\beta.
\end{equation*}
The equation $ \nabla (\overline{\operatorname{\Delta Var}})(\beta) = 0$ has solution equal to
\begin{equation*}
    \beta^* = \left(\frac{1}{n}\sum_{t=1}^n\operatorname{Var}(\Pi_{\mathcal{S}_j}X_t)\right)^{\dagger}\frac{1}{n}\sum_{t=1}^n\operatorname{Cov}(\Pi_{\mathcal{S}_j}X_t, Y_t ).
\end{equation*}
By Lemma~\ref{lem:orthogonal_inverse}, it holds that $\left(\frac{1}{n}\sum_{t=1}^n\operatorname{Var}(\Pi_{\mathcal{S}_j}X_t)\right)^{\dagger} = \Pi_{\mathcal{S}_j}\left(\frac{1}{n}\sum_{t=1}^n\operatorname{Var}(X_t)\right)^{-1}\Pi_{\mathcal{S}_j}$ and therefore $\beta^*\in\mathcal{S}_j$. In order to apply Lemma~\ref{lem:orthogonal_inverse}, we in particular use that $U$ is such 
that, for all $t\in [n]$, $\tilde{\Sigma}_t = U^{\top}\Sigma_tU$ is block diagonal with diagonal blocks given, for all $j\in\{1,\dots,q\}$, by $\tilde{\Sigma}_{t,j}=(U^{S_j})^{\top}\Sigma_tU^{S_j}$. This implies that $U^{\top}\left(\sum_{t=1}^n\Sigma_t\right)U$ is also block diagonal with diagonal blocks $(U^{S_j})^{\top}\left(\sum_{t=1}^n\Sigma_t\right)U^{S_j}$. Moreover, its inverse is block diagonal and, by the properties of orthogonal matrices, its diagonal blocks are $(U^{S_j})^{\top}\left(\sum_{t=1}^n\Sigma_t\right)^{-1}U^{S_j}$.
It now remains to show that this is also the only solution in $\mathcal{S}_j$. All other solutions in $\mathbb{R}^p$ are given, for an arbitrary vector $w\in\mathbb{R}^p$, by 
\begin{align*}
   &\beta^* + \left(I_p-\left(\frac{1}{n}\sum_{t=1}^n\operatorname{Var}(\Pi_{\mathcal{S}_j}X_t)\right)^{\dagger}\left(\frac{1}{n}\sum_{t=1}^n\operatorname{Var}(\Pi_{\mathcal{S}_j}X_t)\right)\right)w  \\
   = & \beta^* + \left(I_p-\Pi_{\mathcal{S}_j}\right)w.
\end{align*}
The equality follows from the following computation
\begin{align*}
    &\quad\left(\frac{1}{n}\sum_{t=1}^n\operatorname{Var}(\Pi_{\mathcal{S}_j}X_t)\right)^{\dagger}\left(\frac{1}{n}\sum_{t=1}^n\operatorname{Var}(\Pi_{\mathcal{S}_j}X_t)\right)\\ & = \Pi_{\mathcal{S}_j}\left(\sum_{t=1}^n\operatorname{Var}(X_t)\right)^{-1}\Pi_{\mathcal{S}_j} \Pi_{\mathcal{S}_j}\left(\sum_{t=1}^n\operatorname{Var}(X_t)\right)\Pi_{\mathcal{S}_j} \\
    & = U^{S_j}(U^{S_j})^{\top}\left(\sum_{t=1}^n\Sigma_t\right)^{-1}U^{S_j}(U^{S_j})^{\top}\left(\sum_{t=1}^n\Sigma_t\right)U^{S_j}(U^{S_j})^{\top} \\
    & = U^{S_j}\left(\sum_{t=1}^n\tilde{\Sigma}_t\right)^{-1}\left(\sum_{t=1}^n\tilde{\Sigma}_t\right)(U^{S_j})^{\top} \\
    & = \Pi_{\mathcal{S}_j}.
\end{align*}
The first equality follows again from Lemma~\ref{lem:orthogonal_inverse}.
For all $w \in \mathcal{S}_j$, $\beta^* + (I_p-\Pi_{\mathcal{S}_j})w$ equals $\beta^*$. For all $w \not \in \mathcal{S}_j$, $\beta^* + (I_p-\Pi_{\mathcal{S}_j})w$
is not in $\mathcal{S}_j$.

    For all $\mathcal{S}_j$ opt-invariant on $[n]$ and for all $t\in[n]$ it holds that
    \begin{align*}
        & \quad \argmax_{\beta\in\mathcal{S}_j}\overline{\operatorname{\Delta Var}}(\beta)\\
        & = \argmax_{\beta\in\mathcal{S}_j}\frac{1}{n}\sum_{s=1}^n\operatorname{\Delta Var}_s(\beta)\\
        & = \argmax_{\beta\in\mathcal{S}_j}\operatorname{\Delta Var}_{t}(\beta)\\
        & = \operatorname{Var}(\Pi_{\mathcal{S}_j}X_{t})^{\dagger}\operatorname{Cov}(\Pi_{\mathcal{S}_j}X_{t}, Y_{t})\\
        & = \Pi_{\mathcal{S}_j}\gamma_{0,t},
    \end{align*}
    where we used the definition of opt-invariance on $[n]$ for the second equality, Lemma~\ref{lem:subspace_maximization} for the third equality and Lemma~\ref{lem:time_var_parameter_separation} for the  fourth equality. Since the result holds for all $t\in[n]$, it also follows that 
    \begin{equation*}
        \Pi_{\mathcal{S}_j}\gamma_{0,t}=\Pi_{\mathcal{S}_j}\bar{\gamma}_0.
    \end{equation*}
 We now need to prove for all $t\in[n]$ that $\operatorname{Cov}(Y_t-X_t^{\top}\Pi_{\mathcal{S}_j}\bar{\gamma}_{0}, X_t^{\top}\Pi_{\mathcal{S}_j}\bar{\gamma}_{0})=0$. To see this, fix $t\in[n]$. Then
    \begin{align*}
        \operatorname{Cov}(Y_t-X_t^{\top}\Pi_{\mathcal{S}_j}\bar{\gamma}_{0}, X_t^{\top}\Pi_{\mathcal{S}_j}\bar{\gamma}_{0}) & = \operatorname{Cov}(X_t^{\top}(\gamma_{0,t}-\Pi_{\mathcal{S}_j}\gamma_{0,t}), X_t^{\top}\Pi_{\mathcal{S}_j}\gamma_{0,t}) \\
        & = \operatorname{Cov}(X_t^{\top}(\sum_{i=1}^q\Pi_{\mathcal{S}_i}\gamma_{0,t}-\Pi_{\mathcal{S}_j}\gamma_{0,t}), X_t^{\top}\Pi_{\mathcal{S}_j}\gamma_{0,t}) \\
        & = \operatorname{Cov}(X_t^{\top}\sum_{i\in\{1,\dots,q\}:i\neq j}\Pi_{\mathcal{S}_i}\gamma_{0,t}, X_t^{\top}\Pi_{\mathcal{S}_j}\gamma_{0,t}) \\
        & = \sum_{i\in\{1,\dots,q\}:i\neq j} \gamma_{0,t}^{\top} \operatorname{Cov}(\Pi_{\mathcal{S}_i}X_t, \Pi_{\mathcal{S}_j}X_t)\gamma_{0,t} \\
        & = 0.
    \end{align*}
    The last equality follows from the definition of an orthogonal and $(X_t)_{t\in[n]}$-decorrelating partition.
    
\end{proof}

\begin{lemma} 
\label{lem:invariant_component_pop_estimator}
Let $\{\mathcal{S}_j\}_{j=1}^q$ be an irreducible orthogonal and $(X_t)_{t\in[n]}$-decorrelating partition and let $\mathcal{S}^{\operatorname{inv}}$ be the corresponding invariant subspace. Moreover, 
let $\overline{\operatorname{Var}}(X)\coloneqq\frac{1}{n}\sum_{t=1}^n\operatorname{Var}(X_t)$ and $\overline{\operatorname{Cov}}(X, Y)\coloneqq\frac{1}{n}\sum_{t=1}^n\operatorname{Cov}(X_t, Y_t)$. Finally, let $U^{\operatorname{inv}}$ be the submatrix of an arbitrary irreducible joint block diagonalizer $U$ corresponding to the irreducible orthogonal and $(X_t)_{t\in[n]}$-decorrelating partition whose columns span $\mathcal{S}^{\operatorname{inv}}$. Then,
    \begin{align*}
        \beta^{\operatorname{inv}} 
        & = U^{\operatorname{inv}} ((U^{\operatorname{inv}})^{\top}\overline{\operatorname{Var}}(X)U^{\operatorname{inv}})^{-1}(U^{\operatorname{inv}})^{\top}\overline{\operatorname{Cov}}(X, Y).
    \end{align*}
\end{lemma}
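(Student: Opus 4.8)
The plan is to start from the definition $\beta^{\operatorname{inv}} = \argmax_{\beta\in\mathcal{S}^{\operatorname{inv}}}\overline{\operatorname{\Delta Var}}(\beta)$ (Definition~\ref{def:time_invariant_subspace_effect}), rewrite it as a projected pooled-OLS expression involving the Moore--Penrose pseudoinverse, and then convert that pseudoinverse into the stated closed form built from $U^{\operatorname{inv}}$. First I would invoke Lemma~\ref{lem:inv_res_partition}, which guarantees that $\{\mathcal{S}^{\operatorname{inv}}, \mathcal{S}^{\operatorname{res}}\}$ is itself an orthogonal and $(X_t)_{t\in[n]}$-decorrelating partition. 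The computation establishing the first display of Lemma~\ref{lem:opt_invariant_projection} uses only the orthogonal-decorrelating property (through the gradient of $\overline{\operatorname{\Delta Var}}$ and the idempotence identity $\operatorname{Var}(\Pi_{\mathcal{S}}X_t)^{\dagger}\operatorname{Var}(\Pi_{\mathcal{S}}X_t)=\Pi_{\mathcal{S}}$), so the same argument applies with $\mathcal{S}$ taken to be the subspace $\mathcal{S}^{\operatorname{inv}}$ of this two-element partition. Using $\operatorname{Var}(\Pi_{\mathcal{S}^{\operatorname{inv}}}X_t)=\Pi_{\mathcal{S}^{\operatorname{inv}}}\Sigma_t\Pi_{\mathcal{S}^{\operatorname{inv}}}$, $\operatorname{Cov}(\Pi_{\mathcal{S}^{\operatorname{inv}}}X_t,Y_t)=\Pi_{\mathcal{S}^{\operatorname{inv}}}\operatorname{Cov}(X_t,Y_t)$, and linearity of the averages, this yields
\[
\beta^{\operatorname{inv}} = \left(\Pi_{\mathcal{S}^{\operatorname{inv}}}\overline{\operatorname{Var}}(X)\Pi_{\mathcal{S}^{\operatorname{inv}}}\right)^{\dagger}\Pi_{\mathcal{S}^{\operatorname{inv}}}\overline{\operatorname{Cov}}(X,Y).
\]

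Next I would simplify the pseudoinverse. Writing $B\coloneqq\overline{\operatorname{Var}}(X)$, I note that since $U$ jointly block diagonalizes every $\Sigma_t$, it also block diagonalizes the average $B=\frac1n\sum_t\Sigma_t$; grouping the columns of $U$ into the two index sets $S^{\operatorname{inv}}$ and $S^{\operatorname{res}}$ then exhibits $U$ as an orthogonal block diagonalizer of $B$ in the sense of Lemma~\ref{lem:orthogonal_inverse}, with $\Pi_{\mathcal{S}^{\operatorname{inv}}}=U^{\operatorname{inv}}(U^{\operatorname{inv}})^{\top}$ (Proposition~\ref{prop:irreducible_orthogonal_partition}). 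Lemma~\ref{lem:orthogonal_inverse} gives $(\Pi_{\mathcal{S}^{\operatorname{inv}}}B\Pi_{\mathcal{S}^{\operatorname{inv}}})^{\dagger}=\Pi_{\mathcal{S}^{\operatorname{inv}}}B^{-1}\Pi_{\mathcal{S}^{\operatorname{inv}}}$, and the block-diagonal inverse identity \eqref{eq:block_diagonal_inverse_first} turns $(U^{\operatorname{inv}})^{\top}B^{-1}U^{\operatorname{inv}}$ into $\big((U^{\operatorname{inv}})^{\top}BU^{\operatorname{inv}}\big)^{-1}$, so that
\[
(\Pi_{\mathcal{S}^{\operatorname{inv}}}B\Pi_{\mathcal{S}^{\operatorname{inv}}})^{\dagger} = U^{\operatorname{inv}}\big((U^{\operatorname{inv}})^{\top}BU^{\operatorname{inv}}\big)^{-1}(U^{\operatorname{inv}})^{\top}.
\]

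Finally I would substitute this into the expression for $\beta^{\operatorname{inv}}$ and collapse the remaining projection using $(U^{\operatorname{inv}})^{\top}\Pi_{\mathcal{S}^{\operatorname{inv}}}=(U^{\operatorname{inv}})^{\top}$, which holds because $U^{\operatorname{inv}}$ has orthonormal columns; this gives exactly $\beta^{\operatorname{inv}} = U^{\operatorname{inv}}((U^{\operatorname{inv}})^{\top}\overline{\operatorname{Var}}(X)U^{\operatorname{inv}})^{-1}(U^{\operatorname{inv}})^{\top}\overline{\operatorname{Cov}}(X,Y)$. The step requiring the most care, and the main obstacle, is the pseudoinverse manipulation: one must verify that the \emph{average} $\overline{\operatorname{Var}}(X)$ (not just the individual $\Sigma_t$) retains the block-diagonal structure under $U$, and that the coarsened two-block partition $\{S^{\operatorname{inv}},S^{\operatorname{res}}\}$ still satisfies the hypotheses of Lemma~\ref{lem:orthogonal_inverse}, so that the lemma may be applied to $B$. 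Once this is in place, everything else is routine algebra relying on the symmetry and idempotence of the orthogonal projections.
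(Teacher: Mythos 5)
Your proposal is correct and follows essentially the same route as the paper's own proof: both pass from the definition of $\beta^{\operatorname{inv}}$ to the projected pooled pseudoinverse expression (the paper cites Lemma~\ref{lem:opt_invariant_projection} for this step, which you effectively re-derive for the two-block partition $\{\mathcal{S}^{\operatorname{inv}},\mathcal{S}^{\operatorname{res}}\}$ via Lemma~\ref{lem:inv_res_partition}), then apply Lemma~\ref{lem:orthogonal_inverse} together with the block-diagonal inverse identity to convert $(\Pi_{\mathcal{S}^{\operatorname{inv}}}\overline{\operatorname{Var}}(X)\Pi_{\mathcal{S}^{\operatorname{inv}}})^{\dagger}$ into $U^{\operatorname{inv}}((U^{\operatorname{inv}})^{\top}\overline{\operatorname{Var}}(X)U^{\operatorname{inv}})^{-1}(U^{\operatorname{inv}})^{\top}$. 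The point you flag as delicate—that the averaged covariance $\overline{\operatorname{Var}}(X)$ inherits the block structure and that the coarsened two-block partition satisfies the hypotheses of Lemma~\ref{lem:orthogonal_inverse}—is exactly the justification the paper relies on as well, so no gap remains.
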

\begin{proof}
Expanding the definition of $\beta^{\operatorname{inv}}$, we obtain that
    \begin{align*}
        \beta^{\operatorname{inv}}
        & = \argmax_{\beta\in\mathcal{S}^{\operatorname{inv}}}\overline{\operatorname{\Delta Var}}(\beta)\\
        & = \argmax_{\beta\in\mathcal{S}^{\operatorname{inv}}}\frac{1}{n}\sum_{t=1}^n\operatorname{\Delta Var}_t(\beta)\\
        & = \argmax_{\beta\in\mathcal{S}^{\operatorname{inv}}}\frac{1}{n}\sum_{s=1}^n (2\operatorname{Cov}(Y_t, \Pi_{\mathcal{S}^{\operatorname{inv}}}X_t)\beta - \beta^{\top}\operatorname{Var}(\Pi_{\mathcal{S}^{\operatorname{inv}}}X_t)\beta) \\
        & = \left(\frac{1}{n}\sum_{t=1}^n \operatorname{Var}(\Pi_{\mathcal{S}^{\operatorname{inv}}}X_{t})\right)^{\dagger}\frac{1}{n}\sum_{t=1}^n\operatorname{Cov}(\Pi_{\mathcal{S}^{\operatorname{inv}}}X_{t}, Y_{t})\\
        &= \Pi_{\mathcal{S}^{\operatorname{inv}}}\overline{\operatorname{Var}}(X)^{-1}\Pi_{\mathcal{S}^{\operatorname{inv}}}\overline{\operatorname{Cov}}(X, Y) \\
        & = U^{\operatorname{inv}} ((U^{\operatorname{inv}})^{\top}\overline{\operatorname{Var}}(X)U^{\operatorname{inv}})^{-1}(U^{\operatorname{inv}})^{\top}\overline{\operatorname{Cov}}(X, Y).
    \end{align*} 
    The fourth equality follows from Lemma~\ref{lem:opt_invariant_projection}. The fifth equality follows by Lemma~\ref{lem:orthogonal_inverse} (see the proof of Lemma~\ref{lem:opt_invariant_projection}).
    In the last equality we used that $(U^{\operatorname{inv}})^{\top}\overline{\operatorname{Var}}(X)^{-1}U^{\operatorname{inv}}= ((U^{\operatorname{inv}})^{\top}\overline{\operatorname{Var}}(X)U^{\operatorname{inv}})^{-1}$, which follows from the properties of orthogonal matrices and the block diagonal structure of $U^{\top}\overline{\operatorname{Var}}(X)U$: these imply that  $( U^{\top}\overline{\operatorname{Var}}(X)U)^{-1} = U^{\top}\overline{\operatorname{Var}}(X)^{-1}U $ and
     \begin{align*}
           & \quad \begin{bmatrix}
            ((U^{S_1})^{\top}\overline{\operatorname{Var}}(X)U^{S_1})^{-1} & & \\
            & \ddots & \\
            & & ((U^{S_q})^{\top}\overline{\operatorname{Var}}(X)U^{S_q})^{-1}
            \end{bmatrix} \\ &=  \begin{bmatrix}
            (U^{S_1})^{\top}\overline{\operatorname{Var}}(X)^{-1}U^{S_1} & & \\
            & \ddots & \\
            & & (U^{S_q})^{\top}\overline{\operatorname{Var}}(X)^{-1}U^{S_q}
            \end{bmatrix}.
        \end{align*}
\end{proof}

\section{Proofs}

\subsection{Proof of Theorem~\ref{thm:subspace_partition}}
\label{pf:subspace_partition}

\begin{proof}
Let $\{\mathcal{S}_j\}_{j=1}^q$ be an irreducible orthogonal and $(X_t)_{t\in[n]}$-decorrelating partition and define $\mathcal{S}^{\operatorname{inv}}$ and $\mathcal{S}^{\operatorname{res}}$ as in \eqref{eq:invariant_residual_spaces}. By Assumption~\ref{ass:generalization}, $\{\mathcal{S}^{\operatorname{inv}}, \mathcal{S}^{\operatorname{res}}\}$ form an orthogonal and $(X_t)_{t\in\mathbb{N}}$-decorrelating partition and $\mathcal{S}^{\operatorname{inv}}$ is opt-invariant on $\mathbb{N}$. Therefore, using Lemma~\ref{lem:time_var_parameter_separation} and Lemma~\ref{lem:subspace_maximization}, we get for all $t\in\mathbb{N}$ that
\begin{equation*}
    \gamma_{0,t} = \argmax_{\beta\in\mathcal{S}^{\operatorname{inv}}}\operatorname{\Delta Var}_t(\beta) + \argmax_{\beta\in\mathcal{S}^{\operatorname{res}}}\operatorname{\Delta Var}_t(\beta).
\end{equation*}
Furthermore, $\mathcal{S}^{\operatorname{inv}}$ opt-invariant on $\mathbb{N}$
implies that the first term, $\argmax_{\beta\in\mathcal{S}^{\operatorname{inv}}}\operatorname{\Delta Var}_t(\beta) $, does not depend on $t$ and hence it holds for all $t\in\mathbb{N}$ that
\begin{align*}
    \gamma_{0,t} & =  \argmax_{\beta\in\mathcal{S}^{\operatorname{inv}}}\overline{\operatorname{\Delta Var}}(\beta) + \argmax_{\beta\in\mathcal{S}^{\operatorname{res}}}\operatorname{\Delta Var}_t(\beta).
\end{align*}

Moreover, Assumption~\ref{ass:uniqueness_time_inv_sp} ensures that the invariant and residual subspaces can be uniquely identified from an arbitrary irreducible orthogonal and $(X_t)_{t\in[n]}$-decorrelating partition, and therefore all the above results do not depend on the considered partition. This concludes the proof of Theorem~\ref{thm:subspace_partition}.
\end{proof}

\subsection{Proof of Proposition~\ref{prop:irreducible_orthogonal_partition}}
\begin{proof}
   (i) For all $j\in\{1,\dots,q^U_{\max}\}$, let $U^{S_j}$ denote the submatrix of $U$ formed by the columns indexed by $S_j$. Then, the orthogonal projection matrix onto the subspace $\mathcal{S}_j$ can be expressed as $\Pi_{\mathcal{S}_j}= U^{S_j}(U^{S_j})^{\top}$. It follows for all $t\in[n]$ that 
    \begin{align*}
         \operatorname{Cov}(\Pi_{\mathcal{S}_i}X_t, \Pi_{\mathcal{S}_j}X_t) & = \Pi_{\mathcal{S}_i} \Sigma_t \Pi_{\mathcal{S}_j} \\
         & = U^{S_i}(U^{S_i})^{\top} \Sigma_t U^{S_j}(U^{S_j})^{\top} \\
         & = 0,
    \end{align*}
    where the last equality holds since $(U^{S_i})^{\top} \Sigma_t U^{S_j}$ is the $(i,j)$-th (off-diagonal) block of $\tilde{\Sigma}_t \coloneqq U^{\top} \Sigma_t U$, which is zero due to the block diagonal structure of $\tilde{\Sigma}_t$. Irreducibility of the orthogonal partition follows from the irreducibility of the joint block diagonal decomposition. \\
    
    (ii)  The statement follows from Lemma~\ref{lem:orthogonal_partition_diagonalizer}, taking $\mathcal{N}=[n]$. Irreducibility of the joint block diagonalizer follows from the irreducibility of the orthogonal partition.
\end{proof}

\subsection{Proof of Proposition~\ref{prop:invariance}}

\begin{proof}
    (i) First, observe that, by Lemma~\ref{lem:inv_res_partition},
    $\{\mathcal{S}^{\operatorname{inv}}, \mathcal{S}^{\operatorname{res}}\}$ is an orthogonal and $(X_t)_{t\in[n]}$-decorrelating partition and that, by Lemma~\ref{lem:invariant_subspace}, $\mathcal{S}^{\operatorname{inv}}$ is opt-invariant on $[n]$. Then, by definition of $\beta^{\operatorname{inv}}$ and by Lemma~\ref{lem:opt_invariant_projection} we get for all $t\in[n]$ that
    \begin{align*}
        \beta^{\operatorname{inv}}
         = \argmax_{\beta\in\mathcal{S}^{\operatorname{inv}}}\overline{\operatorname{\Delta Var}}(\beta) = \Pi_{\mathcal{S}^{\operatorname{inv}}}\gamma_{0,t} = \Pi_{\mathcal{S}^{\operatorname{inv}}}\bar{\gamma}_0.
    \end{align*}

  (ii) We need to prove for all $t\in[n]$ that $\operatorname{Cov}(Y_t-X_t^{\top}\beta^{\operatorname{inv}}, X_t^{\top}\beta^{\operatorname{inv}})=0$. To see this, fix $t\in[n]$. Then
    \begin{align*}
        \operatorname{Cov}(Y_t-X_t^{\top}\beta^{\operatorname{inv}}, X_t^{\top}\beta^{\operatorname{inv}})
        &= \operatorname{Cov}(X_t^{\top}(\gamma_{0,t}-\beta^{\operatorname{inv}}), X_t^{\top}\beta^{\operatorname{inv}}) \\
        &= \operatorname{Cov}(X_t^{\top}(\Pi_{\mathcal{S}^{\operatorname{inv}}}\gamma_{0,t}+\Pi_{\mathcal{S}^{\operatorname{res}}}\gamma_{0,t}-\beta^{\operatorname{inv}}), X_t^{\top}(\Pi_{\mathcal{S}^{\operatorname{inv}}}\beta^{\operatorname{inv}}))\\
        &= \operatorname{Cov}(X_t^{\top}\Pi_{\mathcal{S}^{\operatorname{res}}}\gamma_{0,t}, X_t^{\top}\Pi_{\mathcal{S}^{\operatorname{inv}}}\beta^{\operatorname{inv}}) \\
        &= \gamma_{0,t}^{\top}\operatorname{Cov}(\Pi_{\mathcal{S}^{\operatorname{res}}}X_t, \Pi_{\mathcal{S}^{\operatorname{inv}}}X_t)\beta^{\operatorname{inv}} \\
        &= 0,
    \end{align*}
    where the third equality uses $\beta^{\operatorname{inv}}=\Pi_{\mathcal{S}^{\operatorname{inv}}}\gamma_{0,t}$ by Proposition~\ref{prop:invariance}~$(i)$ and the last equality follows from the fact that $\mathcal{S}^{\operatorname{inv}}$, 
    $\mathcal{S}^{\operatorname{res}}$ are an orthogonal and $(X_t)_{t \in [n]}$-decorrelating partition by Lemma~\ref{lem:inv_res_partition}.
 
(iii) Let $\mathcal{B}^n$ denote the set of all time-invariant parameters over $[n]$. By assumption, we have that $\mathcal{B}^n\subseteq \mathcal{S}^{\operatorname{inv}}$. Moreover, by point $(ii)$, we have that $\beta^{\operatorname{inv}}\in\mathcal{B}^n$. Therefore, by definition of $\beta^{\operatorname{inv}}$ we obtain that $\beta^{\operatorname{inv}}=\argmax_{\beta\in\mathcal{B}^n}\overline{\operatorname{\Delta Var}}(\beta)$.

     This completes the proof of Proposition~\ref{prop:invariance}.
\end{proof}

\subsection{Proof of Theorem~\ref{thm:inv_eff_maximin_optimal}}
\begin{proof}
    Under Assumptions~\ref{ass:uniqueness_time_inv_sp} and \ref{ass:generalization} and using the definitions of $\beta^{\operatorname{inv}}$ and $\delta^{\operatorname{res}}_t$, we can write the explained variance of $\beta\in\mathbb{R}^p$ at time $t\in\mathbb{N}$ for the true time varying parameter $\gamma_{0,t}\in\mathbb{R}^p$ as
    \begin{align*}
        \operatorname{\Delta Var}_t(\beta) & = 2\gamma_{0,t}^{\top}\operatorname{Var}(X_t)\beta - \beta^{\top}\operatorname{Var}(X_t)\beta \\
        & =  2(\beta^{\operatorname{inv}}+\delta^{\operatorname{res}}_t)^{\top}\operatorname{Var}((\Pi_{\mathcal{S}^{\operatorname{inv}}}+\Pi_{\mathcal{S}^{\operatorname{res}}})X_t)\beta - \beta^{\top}\operatorname{Var}(X_t)\beta\\
        & = 2(\beta^{\operatorname{inv}})^{\top}\operatorname{Var}(\Pi_{\mathcal{S}^{\operatorname{inv}}}X_t) \Pi_{\mathcal{S}^{\operatorname{inv}}}\beta + 2(\delta^{\operatorname{res}}_t)^{\top}\operatorname{Var}(\Pi_{\mathcal{S}^{\operatorname{res}}}X_t) \Pi_{\mathcal{S}^{\operatorname{res}}}\beta - \beta^{\top}\operatorname{Var}(X_t)\beta.
    \end{align*}
Using this expansion, we get, since $\delta^{\operatorname{res}}_t=\gamma_{0,t}-\beta^{\operatorname{inv}}$, for all $\beta\notin\mathcal{S}^{\operatorname{inv}}$ that 
    \begin{equation*}
        \inf_{\substack{\gamma_{0,t}\in\mathbb{R}^p:\\ \gamma_{0,t}-\beta^{\operatorname{inv}}\in\mathcal{S}^{\operatorname{res}}}} \operatorname{\Delta Var}_t(\beta) = -\infty.
    \end{equation*}
    Therefore, 
    \begin{align*}
    \argmax_{\beta\in\mathbb{R}^p} \inf_{\substack{\gamma_{0,t}\in\mathbb{R}^p:\\ \gamma_{0,t}-\beta^{\operatorname{inv}}\in\mathcal{S}^{\operatorname{res}}}} \operatorname{\Delta Var}_t(\beta) &= \argmax_{\beta\in\mathcal{S}^{\operatorname{inv}}} \inf_{\substack{\gamma_{0,t}\in\mathbb{R}^p:\\ \gamma_{0,t}-\beta^{\operatorname{inv}}\in\mathcal{S}^{\operatorname{res}}}} \operatorname{\Delta Var}_t(\beta)\\
    & = \argmax_{\beta\in\mathcal{S}^{\operatorname{inv}}} \operatorname{\Delta Var}_t(\beta).
\end{align*}
Since by assumption it holds that $\mathcal{S}^{\operatorname{inv}}$ is opt-invariant on $\mathbb{N}$, it further holds that for all $t\in\mathbb{N}$
\begin{equation*}
     \argmax_{\beta\in\mathcal{S}^{\operatorname{inv}}} \operatorname{\Delta Var}_t(\beta) = \argmax_{\beta\in\mathcal{S}^{\operatorname{inv}}} \overline{\operatorname{\Delta Var}(\beta)}.
\end{equation*}
The claim follows from the definition of $\beta^{\operatorname{inv}}$. 
\end{proof}

\subsection{Proof of Theorem~\ref{thm:empirical_explained_variance}}
\begin{proof}
    We assume without loss of generality that the observed predictors $X_t$ have zero mean in $t\in\mathcal{I}^{\operatorname{ad}}\cup t^*$. Alternatively, as mentioned in Section~\ref{sec:subspaces_estimation}, a constant term could be added to $X_t$ to account for the mean. We also observe that $\hat{\gamma}_{t^*}^{\operatorname{OLS}}$ and $\hat{\delta}_{t^*}^{\operatorname{res}}$ are both unbiased estimators for $\gamma_{0,t^*}$ and $\delta^{\operatorname{res}}_{t^*}$, respectively (this can be checked using standard OLS analysis).
    We now start by computing the out-of-sample MSPE  for $\hat{\gamma}^{\operatorname{\isd}}_{t^*}$.
    \begin{align*}
        \operatorname{MSPE}(\hat{\gamma}^{\operatorname{\isd}}_{t^*}) & = \mathbb{E}[(X_{t^*}^{\top}(\hat{\gamma}^{\operatorname{\isd}}_{t^*}-\gamma_{0,t^*}))^2] \\
        & = \mathbb{E}[(X_{t^*}^{\top}(\Pi_{\mathcal{S}^{\operatorname{inv}}}+ \Pi_{\mathcal{S}^{\operatorname{res}}})(\hat{\beta}^{\operatorname{inv}}+ \hat{\delta}^{\operatorname{res}}_{t^*}-\beta^{\operatorname{inv}} - \delta^{\operatorname{res}}_{t^*}))^2] \\
        & = \operatorname{trace}(\mathbb{E}[(\hat{\beta}^{\operatorname{inv}}-\beta^{\operatorname{inv}})(\hat{\beta}^{\operatorname{inv}}-\beta^{\operatorname{inv}})^{\top}]\operatorname{Var}(\Pi_{\mathcal{S}^{\operatorname{inv}}}X_{t^*})) \\
        &\quad + \operatorname{trace}(\mathbb{E}[(\hat{\delta}^{\operatorname{res}}_{t^*}-\delta^{\operatorname{res}}_{t^*})(\hat{\delta}^{\operatorname{res}}_{t^*}-\delta^{\operatorname{res}}_{t^*})^{\top}]\operatorname{Var}(\Pi_{\mathcal{S}^{\operatorname{res}}}X_{t^*}))\\
          & =  \operatorname{trace}(\mathbb{E}[\operatorname{Var}(\hat{\beta}^{\operatorname{inv}}\mid \mathbf{X})] \operatorname{Var}(\Pi_{\mathcal{S}^{\operatorname{inv}}}X_{t^*}))\\
        &\quad + \operatorname{trace}(\mathbb{E}[\operatorname{Var}(\hat{\delta}^{\operatorname{res}}_{t^*}\mid \mathbf{X}^{\operatorname{ad}})] \operatorname{Var}(\Pi_{\mathcal{S}^{\operatorname{res}}}X_{t^*})) \\
        & = \operatorname{trace}(\mathbb{E}[\operatorname{Var}(\hat{\beta}^{\operatorname{inv}}\mid \mathbf{X})] \operatorname{Var}(\Pi_{\mathcal{S}^{\operatorname{inv}}}X_{t^*}))\\
        &\quad + \tfrac{\sigma^2_{\operatorname{ad}}}{m} \operatorname{trace}(\Pi_{\mathcal{S}^{\operatorname{res}}}\mathbb{E}[(\tfrac{1}{m}(\mathbf{X}^{\operatorname{ad}})^{\top}\mathbf{X}^{\operatorname{ad}})^{-1}]\Pi_{\mathcal{S}^{\operatorname{res}}}\operatorname{Var}(\Pi_{\mathcal{S}^{\operatorname{res}}}X_{t^*})),
    \end{align*}
    where we have used that $\operatorname{Var}({\hat{\delta}^{\operatorname{res}}_{t^*}\mid \mathbf{X}^{\operatorname{ad}}}) = \sigma_{\operatorname{ad}}^{2} \Pi_{\mathcal{S}^{\operatorname{res}}}((\mathbf{X}^{\operatorname{ad}})^{\top}\mathbf{X}^{\operatorname{ad}})^{-1}\Pi_{\mathcal{S}^{\operatorname{res}}}$. We further observe that 
    \begin{equation*}
        \operatorname{Var}(\hat{\beta}^{\operatorname{inv}}\mid \mathbf{X}) = \Pi_{\mathcal{S}^{\operatorname{inv}}} (\mathbf{X}^{\top}\mathbf{X})^{-1}\Pi_{\mathcal{S}^{\operatorname{inv}}} \mathbf{X}^{\top}\operatorname{Var}(\mathbf{\epsilon})\mathbf{X}\Pi_{\mathcal{S}^{\operatorname{inv}}} (\mathbf{X}^{\top}\mathbf{X})^{-1}\Pi_{\mathcal{S}^{\operatorname{inv}}},
    \end{equation*}
    where $\operatorname{Var}(\mathbf{\epsilon})$ is a $n\times n$ diagonal matrix whose diagonal elements are the error variances at each observed time steps, $(\operatorname{Var}(\epsilon_t))_{t\in [n]}$. Let $\sigma_{\epsilon,\min}^{2}\coloneqq \min_{t\in[n]}\operatorname{Var}(\epsilon_t)$ and $\sigma_{\epsilon,\max}^{2}\ge \max_{t\in[n]}\operatorname{Var}(\epsilon_t)$. Then, 
    \begin{equation*}
        \sigma_{\epsilon,\min}^{2}\Pi_{\mathcal{S}^{\operatorname{inv}}} (\mathbf{X}^{\top}\mathbf{X})^{-1}\Pi_{\mathcal{S}^{\operatorname{inv}}}\preceq \operatorname{Var}(\hat{\beta}^{\operatorname{inv}}\mid\mathbf{X})\preceq\sigma_{\epsilon,\max}^{2}\Pi_{\mathcal{S}^{\operatorname{inv}}} (\mathbf{X}^{\top}\mathbf{X})^{-1}\Pi_{\mathcal{S}^{\operatorname{inv}}},
    \end{equation*}
    where $\preceq$ denotes the Loewner order, and it follows from $\operatorname{diag}_n(\sigma_{\epsilon,\min}^2)\preceq \operatorname{Var}(\epsilon_t)\preceq \operatorname{diag}_n(\sigma_{\epsilon,\max}^2)$, with $\operatorname{diag}_n(\cdot)$ denoting an $n$-dimensional diagonal matrix with diagonal elements all equal. We have used in particular that for two symmetric matrices $A, B\in\mathbb{R}^{n\times n}$ such that $A\succeq B$ and a matrix $S\in\mathbb{R}^{n\times n}$ it holds that $S^{\top}AS\succeq S^{\top}BS$ \citep[see, for example, Theorem 7.7.2.\ by][]{horn2012matrix}.
    Using this relation and Jensen's inequality, we obtain that the first term in $\operatorname{MSPE}(\hat{\gamma}^{\operatorname{\isd}}_{t^*})$ is lower bounded by
    \begin{align*}
        & \operatorname{trace}(\mathbb{E}[\operatorname{Var}(\hat{\beta}^{\operatorname{inv}}\mid \mathbf{X})] \operatorname{Var}(\Pi_{\mathcal{S}^{\operatorname{inv}}}X_{t^*})) \\
         \ge & \sigma_{\epsilon,\min}^2 \operatorname{trace}(\Pi_{\mathcal{S}^{\operatorname{inv}}}\mathbb{E}[(\mathbf{X}^{\top}\mathbf{X})^{-1}]\Pi_{\mathcal{S}^{\operatorname{inv}}}\operatorname{Var}(\Pi_{\mathcal{S}^{\operatorname{inv}}}X_{t^*})) \\
         \ge & \tfrac{\sigma_{\epsilon,\min}^2}{n} \operatorname{trace}(\Pi_{\mathcal{S}^{\operatorname{inv}}}\mathbf{\Sigma}^{-1}\Pi_{\mathcal{S}^{\operatorname{inv}}}\Sigma_{t^*}\Pi_{\mathcal{S}^{\operatorname{inv}}}),
         \end{align*}
         where $\mathbf{\Sigma}\coloneqq \mathbb{E}[\frac{1}{n}\mathbf{X}^{\top}\mathbf{X}]$. Using now that, by assumption, $\{\mathcal{S}^{\operatorname{inv}}, \mathcal{S}^{\operatorname{res}}\}$ is an orthogonal and $(X_t)_{t\in\mathbb{N}}$-decorrelating partition, let $U\in\mathbb{R}^{p\times p}$ be defined for such a partition as in Lemma~\ref{lem:orthogonal_partition_diagonalizer} and let
          $U^{\operatorname{inv}}\in\mathbb{R}^{p\times\operatorname{dim}(\mathcal{S}^{\operatorname{inv}})}$ be the submatrix of $U$ whose columns form an orthonormal basis for $\mathcal{S}^{\operatorname{inv}}$, such that $\Pi_{\mathcal{S}^{\operatorname{inv}}}=U^{\operatorname{inv}}(U^{\operatorname{inv}})^{\top}$. Moreover, let  $\tilde{\mathbf{\Sigma}}^{\operatorname{inv}}\coloneqq (U^{\operatorname{inv}})^{\top}\mathbf{\Sigma}U^{\operatorname{inv}}$ and $\tilde{\Sigma}_{t^*}^{\operatorname{inv}}\coloneqq (U^{\operatorname{inv}})^{\top}\Sigma_{t^*}U^{\operatorname{inv}}$ denote, as in Lemma~\ref{lem:orthogonal_partition_diagonalizer}, the diagonal block corresponding to $\mathcal{S}^{\operatorname{inv}}$ of the block diagonal matrices $U^{\top}\mathbf{\Sigma}U$ and $U^{\top}\Sigma_{t^*}U$, respectively. By the properties of orthogonal matrices, $(\tilde{\mathbf{\Sigma}}^{\operatorname{inv}})^{-1} =  (U^{\operatorname{inv}})^{\top}\mathbf{\Sigma}^{-1}U^{\operatorname{inv}}$. For all $i\in\{1,\dots,\operatorname{dim}(\mathcal{S}^{\operatorname{inv}})\}$, let $\lambda_i$ denote the $i$-th eigenvalue in decreasing order. Then, we can further express the above lower bound as
         \begin{align*}
         & \tfrac{\sigma_{\epsilon,\min}^2}{n} \operatorname{trace}(\Pi_{\mathcal{S}^{\operatorname{inv}}}\mathbf{\Sigma}^{-1}\Pi_{\mathcal{S}^{\operatorname{inv}}}\Sigma_{t^*}\Pi_{\mathcal{S}^{\operatorname{inv}}}) \\
         = &  \tfrac{\sigma_{\epsilon,\min}^2}{n} \operatorname{trace} (U^{\operatorname{inv}}(U^{\operatorname{inv}})^{\top}\mathbf{\Sigma}^{-1}U^{\operatorname{inv}}(U^{\operatorname{inv}})^{\top}\Sigma_{t^*}U^{\operatorname{inv}}(U^{\operatorname{inv}})^{\top}) \\
         = & \tfrac{\sigma_{\epsilon,\min}^2}{n} \operatorname{trace} (U^{\operatorname{inv}}(\tilde{\mathbf{\Sigma}}^{\operatorname{inv}})^{-1}\tilde{\Sigma}_{t^*}^{\operatorname{inv}}(U^{\operatorname{inv}})^{\top})\\
         \ge &\sigma_{\epsilon,\min}^2\tfrac{\operatorname{dim}(\mathcal{S}^{\operatorname{inv}})}{n} \tfrac{\lambda_{\min}(\tilde{\Sigma}_{t^*}^{\operatorname{inv}})}{\lambda_{\max}(\tilde{\mathbf{\Sigma}}^{\operatorname{inv}})}\\
         = & \sigma_{\epsilon,\min}^2\tfrac{\operatorname{dim}(\mathcal{S}^{\operatorname{inv}})}{n} c_{\operatorname{inv}} ,
    \end{align*}
    where we define $c_{\operatorname{inv}}\coloneqq \lambda_{\min}(\tilde{\Sigma}_{t^*}^{\operatorname{inv}})/\lambda_{\max}(\tilde{\mathbf{\Sigma}}^{\operatorname{inv}})$. 
    The same term in $\operatorname{MSPE}(\hat{\gamma}^{\operatorname{\isd}}_{t^*})$ is upper bounded by
    \begin{align*}
         & \operatorname{trace}(\mathbb{E}[\operatorname{Var}(\hat{\beta}^{\operatorname{inv}}\mid \mathbf{X})] \operatorname{Var}(\Pi_{\mathcal{S}^{\operatorname{inv}}}X_{t^*}))  \\
         \le & \sigma_{\epsilon,\max}^2 \operatorname{trace}(\Pi_{\mathcal{S}^{\operatorname{inv}}}\mathbb{E}[(\mathbf{X}^{\top}\mathbf{X})^{-1}]\Pi_{\mathcal{S}^{\operatorname{inv}}}\operatorname{Var}(\Pi_{\mathcal{S}^{\operatorname{inv}}}X_{t^*})) \\
         \le & \sigma_{\epsilon,\max}^2\tfrac{\operatorname{dim}(\mathcal{S}^{\operatorname{inv}})}{n} C_{\operatorname{inv}},
    \end{align*}
    where $C_{\operatorname{inv}}\coloneqq 1+ \operatorname{trace}(\Pi_{\mathcal{S}^{\operatorname{inv}}}(\mathbb{E}[(\tfrac{1}{n}\mathbf{X}^{\top}\mathbf{X})^{-1}]-\Sigma_{t^*}^{-1})\Pi_{\mathcal{S}^{\operatorname{inv}}}\operatorname{Var}(\Pi_{\mathcal{S}^{\operatorname{inv}}}X_{t^*}))$.
Proceeding in a similar way, we can express the second term in $\operatorname{MSPE}(\hat{\gamma}^{\operatorname{\isd}}_{t^*})$ as 
\begin{align*}
   \tfrac{\sigma^2_{\operatorname{ad}}}{m} \operatorname{trace}(\Pi_{\mathcal{S}^{\operatorname{res}}}\mathbb{E}[(\tfrac{1}{m}(\mathbf{X}^{\operatorname{ad}})^{\top}\mathbf{X}^{\operatorname{ad}})^{-1}]\Pi_{\mathcal{S}^{\operatorname{res}}}\operatorname{Var}(\Pi_{\mathcal{S}^{\operatorname{res}}}X_{t^*})) =   \tfrac{\sigma^2_{\operatorname{ad}}}{m} h_{\operatorname{res}}(m),
\end{align*}
where $h_{\operatorname{res}}(m)\coloneqq \operatorname{trace}(\Pi_{\mathcal{S}^{\operatorname{res}}}\mathbb{E}[(\tfrac{1}{m}(\mathbf{X}^{\operatorname{ad}})^{\top}\mathbf{X}^{\operatorname{ad}})^{-1}]\Pi_{\mathcal{S}^{\operatorname{res}}}\operatorname{Var}(\Pi_{\mathcal{S}^{\operatorname{res}}}X_{t^*}))$. Since we have assumed that for all $t\in\mathcal{I}^{\operatorname{ad}}\cup\{t^*\}$ the distribution of $X_t$ does not change, by Jensen's inequality and by the fact that $\Pi_{\mathcal{S}^{\operatorname{res}}}\Sigma_{t^*}^{-1}\Pi_{\mathcal{S}^{\operatorname{res}}}\Sigma_{t^*}\Pi_{\mathcal{S}^{\operatorname{res}}}=\Pi_{\mathcal{S}^{\operatorname{res}}}$ (see Lemma~\ref{lem:inv_res_partition} and \eqref{eq:pseudoinverse_identity}) we have that $h_{\operatorname{res}}(m)\ge \operatorname{dim}(\mathcal{S}^{\operatorname{res}})$, and $\lim_{m\rightarrow\infty}h_{\operatorname{res}}(m)= \operatorname{dim}(\mathcal{S}^{\operatorname{res}})$. 
Moreover, we can find an upper bound for $h_{\operatorname{res}}(m)$ in the following way. 
\begin{align*}
    h_{\operatorname{res}}(m) = & \operatorname{trace}(\Pi_{\mathcal{S}^{\operatorname{res}}}\mathbb{E}[(\tfrac{1}{m}(\mathbf{X}^{\operatorname{ad}})^{\top}\mathbf{X}^{\operatorname{ad}})^{-1}] \Pi_{\mathcal{S}^{\operatorname{res}}}\operatorname{Var}(\Pi_{\mathcal{S}^{\operatorname{res}}}X_{t^*})) \\%+ \operatorname{dim}(\mathcal{S}^{\operatorname{res}}) \\
    = & \operatorname{trace}(\mathbb{E}[(\tfrac{1}{m}(\mathbf{X}^{\operatorname{ad}})^{\top}\mathbf{X}^{\operatorname{ad}})^{-1}]\operatorname{Var}(\Pi_{\mathcal{S}^{\operatorname{res}}}X_{t^*})) \\
     \le & \operatorname{trace}(\mathbb{E}[(\tfrac{1}{m}(\mathbf{X}^{\operatorname{ad}})^{\top}\mathbf{X}^{\operatorname{ad}})^{-1}])\operatorname{trace}(\operatorname{Var}(\Pi_{\mathcal{S}^{\operatorname{res}}}X_{t^*}))\\
     \le & \|\mathbb{E}[(\tfrac{1}{m}(\mathbf{X}^{\operatorname{ad}})^{\top}\mathbf{X}^{\operatorname{ad}})^{-1}]\|_{\operatorname{op}}\operatorname{trace}(\operatorname{Var}(\Pi_{\mathcal{S}^{\operatorname{res}}}X_{t^*}))\\
     \le & \mathbb{E}[\|(\tfrac{1}{m}(\mathbf{X}^{\operatorname{ad}})^{\top}\mathbf{X}^{\operatorname{ad}})^{-1}\|_{\operatorname{op}}]\operatorname{trace}(\operatorname{Var}(\Pi_{\mathcal{S}^{\operatorname{res}}}X_{t^*}))\\
     \le & c^{-1}\lambda_{\max}(\Sigma_{t^*})\operatorname{dim}(\mathcal{S}^{\operatorname{res}}).
\end{align*}
Summarizing, we obtain that 
\begin{align*}
    \operatorname{MSPE}(\hat{\gamma}^{\operatorname{\isd}}_{t^*}) \ge \sigma_{\epsilon,\min}^2\tfrac{\operatorname{dim}(\mathcal{S}^{\operatorname{inv}})}{n}c_{\operatorname{inv}}  + \sigma^2_{\operatorname{ad}}\tfrac{\operatorname{dim}(\mathcal{S}^{\operatorname{res}})}{m}  
\end{align*}
and 
\begin{equation*}
    \operatorname{MSPE}(\hat{\gamma}^{\operatorname{\isd}}_{t^*}) \le \sigma_{\epsilon,\max}^2\tfrac{\operatorname{dim}(\mathcal{S}^{\operatorname{inv}}) }{n} C_{\operatorname{inv}} + \sigma_{\operatorname{ad}}^2\tfrac{\operatorname{dim}(\mathcal{S}^{\operatorname{res}}) }{m} C_{\operatorname{res}},
\end{equation*}
where $C_{\operatorname{res}}\coloneqq c^{-1}\lambda_{\max}(\Sigma_{t^*})$ is the constant introduced in the theorem statement.

We now compute the MSPE of $\hat{\gamma}_{t^*}^{\operatorname{OLS}}$.
\begin{align*}
         \operatorname{MSPE}(\hat{\gamma}_{t^*}^{\operatorname{OLS}}) & =  \mathbb{E}[(X_{t^*}^{\top}(\hat{\gamma}^{\operatorname{OLS}}_{t^*}-\gamma_{0,t^*}))^2] \\
         & = \operatorname{trace}(\mathbb{E}[(\hat{\gamma}^{\operatorname{OLS}}_{t^*}-\gamma_{0,t^*})(\hat{\gamma}^{\operatorname{OLS}}_{t^*}-\gamma_{0,t^*})^{\top}X_{t^*}X_{t^*}^{\top}]) \\
         & = \operatorname{trace}(\mathbb{E}[(\hat{\gamma}^{\operatorname{OLS}}_{t^*}-\gamma_{0,t^*})(\hat{\gamma}^{\operatorname{OLS}}_{t^*}-\gamma_{0,t^*})^{\top}]\operatorname{Var}(X_{t^*})) \\
         & = \operatorname{trace}(\mathbb{E}[\operatorname{Var}(\hat{\gamma}^{\operatorname{OLS}}_{t^*}\mid \mathbf{X}^{\operatorname{ad}})]\Sigma_{t^*})) \\
         & = \sigma^2_{\operatorname{ad}}\operatorname{trace}(\mathbb{E}[((\mathbf{X}^{\operatorname{ad}})^{\top}\mathbf{X}^{\operatorname{ad}})^{-1}]\Sigma_{t^*})) \\
         & = \tfrac{\sigma^2_{\operatorname{ad}}}{m} \operatorname{trace}(\mathbb{E}[(\tfrac{1}{m}(\mathbf{X}^{\operatorname{ad}})^{\top}\mathbf{X}^{\operatorname{ad}})^{-1}]\Sigma_{t^*})) .
    \end{align*}
    We can further express $\operatorname{MSPE}(\hat{\gamma}_{t^*}^{\operatorname{OLS}})$ as
        \begin{align*}
        & \tfrac{\sigma^2_{\operatorname{ad}}}{m} \operatorname{trace}(\mathbb{E}[(\tfrac{1}{m}(\mathbf{X}^{\operatorname{ad}})^{\top}\mathbf{X}^{\operatorname{ad}})^{-1}]\Sigma_{t^*}) \\
         = &  \tfrac{\sigma^2_{\operatorname{ad}}}{m} \operatorname{trace}((\Pi_{\mathcal{S}^{\operatorname{inv}}}+\Pi_{\mathcal{S}^{\operatorname{res}}})\mathbb{E}[(\tfrac{1}{m}(\mathbf{X}^{\operatorname{ad}})^{\top}\mathbf{X}^{\operatorname{ad}})^{-1}](\Pi_{\mathcal{S}^{\operatorname{inv}}}+\Pi_{\mathcal{S}^{\operatorname{res}}})\operatorname{Var}((\Pi_{\mathcal{S}^{\operatorname{inv}}}+\Pi_{\mathcal{S}^{\operatorname{res}}})X_{t^*})) \\
         = & \tfrac{\sigma^2_{\operatorname{ad}}}{m} \operatorname{trace}(\Pi_{\mathcal{S}^{\operatorname{inv}}}\mathbb{E}[(\tfrac{1}{m}(\mathbf{X}^{\operatorname{ad}})^{\top}\mathbf{X}^{\operatorname{ad}})^{-1}]\Pi_{\mathcal{S}^{\operatorname{inv}}}\operatorname{Var}(\Pi_{\mathcal{S}^{\operatorname{inv}}}X_{t^*})) \\
          & + \tfrac{\sigma^2_{\operatorname{ad}}}{m} \operatorname{trace}(\Pi_{\mathcal{S}^{\operatorname{res}}}\mathbb{E}[(\tfrac{1}{m}(\mathbf{X}^{\operatorname{ad}})^{\top}\mathbf{X}^{\operatorname{ad}})^{-1}]\Pi_{\mathcal{S}^{\operatorname{res}}}\operatorname{Var}(\Pi_{\mathcal{S}^{\operatorname{res}}}X_{t^*})).
        \end{align*}
   In particular, the second term in the above sum also appears in $\operatorname{MSPE}(\hat{\gamma}^{\operatorname{\isd}}_{t^*})$. Taking now the difference between the MSPE of $\hat{\gamma}^{\operatorname{OLS}}_{t^*}$ and $\hat{\gamma}^{\operatorname{\isd}}_{t^*}$, we obtain that 
    \begin{align*}
        \operatorname{MSPE}(\gamma^{\operatorname{OLS}}_{t^*})-\operatorname{MSPE}(\gamma^{\operatorname{\isd}}_{t^*}) & = \tfrac{\sigma^2_{\operatorname{ad}}}{m} \operatorname{trace}(\Pi_{\mathcal{S}^{\operatorname{inv}}}\mathbb{E}[(\tfrac{1}{m}(\mathbf{X}^{\operatorname{ad}})^{\top}\mathbf{X}^{\operatorname{ad}})^{-1}]\Pi_{\mathcal{S}^{\operatorname{inv}}}\operatorname{Var}(\Pi_{\mathcal{S}^{\operatorname{inv}}}X_{t^*}))\\
        & -  \operatorname{trace}(\mathbb{E}[\operatorname{Var}(\hat{\beta}^{\operatorname{inv}}\mid \mathbf{X})] \operatorname{Var}(\Pi_{\mathcal{S}^{\operatorname{inv}}}X_{t^*}))
    \end{align*}
We have already obtained an upper bound for the second term in the difference, namely $\sigma_{\epsilon,\max}^2\frac{ \operatorname{dim}(\mathcal{S}^{\operatorname{inv}}) }{n}C_{\operatorname{inv}}$. For the first term, we can make the same considerations made above for $\tfrac{\sigma^2_{\operatorname{ad}}}{m} \operatorname{trace}(\Pi_{\mathcal{S}^{\operatorname{res}}}\mathbb{E}[(\tfrac{1}{m}(\mathbf{X}^{\operatorname{ad}})^{\top}\mathbf{X}^{\operatorname{ad}})^{-1}]\Pi_{\mathcal{S}^{\operatorname{res}}}\operatorname{Var}(\Pi_{\mathcal{S}^{\operatorname{res}}}X_{t^*}))$ and $h_{\operatorname{res}}(m)$. In particular,
\begin{equation*}
    \tfrac{\sigma^2_{\operatorname{ad}}}{m} \operatorname{trace}(\Pi_{\mathcal{S}^{\operatorname{inv}}}\mathbb{E}[(\tfrac{1}{m}(\mathbf{X}^{\operatorname{ad}})^{\top}\mathbf{X}^{\operatorname{ad}})^{-1}]\Pi_{\mathcal{S}^{\operatorname{inv}}}\operatorname{Var}(\Pi_{\mathcal{S}^{\operatorname{inv}}}X_{t^*})) = \tfrac{\sigma_{\operatorname{ad}}^2}{m}h_{\operatorname{ad,inv}}(m)
\end{equation*}
where
\begin{equation*}
   h_{\operatorname{ad,inv}}(m)\coloneqq \operatorname{trace}(\Pi_{\mathcal{S}^{\operatorname{inv}}}\mathbb{E}[(\tfrac{1}{m}(\mathbf{X}^{\operatorname{ad}})^{\top}\mathbf{X}^{\operatorname{ad}})^{-1}]\Pi_{\mathcal{S}^{\operatorname{inv}}}\operatorname{Var}(\Pi_{\mathcal{S}^{\operatorname{inv}}}X_{t^*}))
\end{equation*}
 satisfies
 $h_{\operatorname{ad,inv}}(m)\ge \operatorname{dim}(\mathcal{S}^{\operatorname{inv}})$ and 
$h_{\operatorname{ad,inv}}(m) \le  c^{-1}\lambda_{\max}(\Sigma_{t^*})\operatorname{dim}(\mathcal{S}^{\operatorname{inv}})$ (as for $h_{\operatorname{res}}(m)$, we observe that $\lim_{m\rightarrow\infty}h_{\operatorname{ad,inv}}(m)=\operatorname{dim}(\mathcal{S}^{\operatorname{inv}})$).
Therefore, we obtain that 
\begin{equation*}
    \operatorname{MSPE}(\gamma^{\operatorname{OLS}}_{t^*})-\operatorname{MSPE}(\gamma^{\operatorname{\isd}}_{t^*}) \ge \sigma_{\operatorname{ad}}^2\tfrac{\operatorname{dim}(\mathcal{S}^{\operatorname{inv}})}{m} - \sigma_{\epsilon,\max}^2\tfrac{ \operatorname{dim}(\mathcal{S}^{\operatorname{inv}}) }{n}C_{\operatorname{inv}}
\end{equation*}
and 
\begin{equation*}
     \operatorname{MSPE}(\gamma^{\operatorname{OLS}}_{t^*})-\operatorname{MSPE}(\gamma^{\operatorname{\isd}}_{t^*}) \le \sigma_{\operatorname{ad}}^2\tfrac{\operatorname{dim}(\mathcal{S}^{\operatorname{inv}})}{m}C_{\operatorname{ad,inv}}
     - \sigma_{\epsilon,\min}^2\tfrac{ \operatorname{dim}(\mathcal{S}^{\operatorname{inv}}) }{n}c_{\operatorname{inv}},
\end{equation*}
where $C_{\operatorname{ad,inv}}\coloneqq c^{-1}\lambda_{\max}(\Sigma_{t^*})$.
In particular, this difference is always positive if $n$ is sufficiently large. Finally, we can observe that the expected explained variance \eqref{eq:expected_explained_variance} for $\hat{\gamma}$, where $\hat{\gamma}=\hat{\gamma}^{\operatorname{\isd}}_{t^*}$ or $\hat{\gamma}=\hat{\gamma}^{\operatorname{OLS}}_{t^*}$, is
    \begin{align*}
        & \mathbb{E}[\operatorname{\Delta Var}_{t^*}(\hat{\gamma})] \\ 
        &\quad= \mathbb{E}[\operatorname{Var}(Y_{t^*})-\operatorname{Var}(Y_{t^*}-X_{t^*}^{\top}\hat{\gamma}\mid\hat{\gamma})] \\
         &\quad = \operatorname{Var}(X_{t^*}^{\top}\gamma_{0,t^*}+\epsilon_{t^*})-\mathbb{E}[\operatorname{Var}(X_{t^*}^{\top}(\gamma_{0,t^*}-\hat{\gamma})+\epsilon_{t^*}\mid\hat{\gamma})] \\
         &\quad=  \gamma_{0,t^*}^{\top}\operatorname{Var}(X_{t^*})\gamma_{0,t^*} + \sigma_{\epsilon^*}^2 - \mathbb{E}[(\gamma_{0,t^*}  -\hat{\gamma})^{\top}\operatorname{Var}(X_{t^*})(\gamma_{0,t^*}-\hat{\gamma})] - \sigma_{\epsilon^*}^2\\
          &\quad=  \gamma_{0,t^*}^{\top}\operatorname{Var}(X_{t^*})\gamma_{0,t^*} - \mathbb{E}[\operatorname{trace}((\gamma_{0,t^*}-\hat{\gamma})(\gamma_{0,t^*}  -\hat{\gamma})^{\top}\operatorname{Var}(X_{t^*}))]\\
          &\quad= \gamma_{0,t^*}^{\top}\Sigma_{t^*}\gamma_{0,t^*} - \operatorname{MSPE}(\hat{\gamma})
    \end{align*}
    and therefore the same inequalities found for the MSPEs difference equivalently hold for $\mathbb{E}[\operatorname{\Delta Var}_{t^*}(\hat{\gamma}^{\operatorname{\isd}}_{t^*})]-\mathbb{E}[\operatorname{\Delta Var}_{t^*}(\hat{\gamma}^{\operatorname{OLS}}_{t^*})] $.
\end{proof}

\subsection{Proof of Proposition~\ref{prop:non_orthogonal_subspaces}}
\begin{proof}
    (i) By definition of a (non-orthogonal) joint block diagonalizer, it holds for all $t\in[n]$ that the matrix $\widetilde{\Sigma}_t\coloneqq U^{\top}\Sigma_t U$ is block diagonal with $q$ diagonal blocks $\widetilde{\Sigma}_{t,j}\coloneqq (U^{S_j})^{\top}\Sigma_t U^{S_j}$, $j\in\{1,\ldots,q\}$. Define now the matrix $W\coloneqq U^{-\top}$, and observe that the following relations hold 
\begin{align*}
    & \Sigma_t = W \widetilde{\Sigma}_t W^{\top}, \quad \Sigma_t^{-1} = U \widetilde{\Sigma}_t^{-1} U^{\top}\quad \text{and}\\
    & \widetilde{\Sigma}_t^{-1} = W^{\top}\Sigma_t^{-1} W \quad \text{block diagonal with blocks } \widetilde{\Sigma}_{t,j}^{-1}\coloneqq (W^{S_j})^{\top}\Sigma_t^{-1} W^{S_j}.
\end{align*}
The last relation is obtained by observing that
\begin{equation*}
            \tilde{\Sigma}_{t}^{-1} = \begin{bmatrix}
            \tilde{\Sigma}_{t,1}^{-1} & & \\
            & \ddots & \\
            & & \tilde{\Sigma}_{t,q}^{-1} 
            \end{bmatrix} = \begin{bmatrix}
                (W^{S_1})^{\top} \\ \vdots \\
                (W^{S_q})^{\top}
            \end{bmatrix}\Sigma_t^{-1} \begin{bmatrix}
                W^{S_1} & \dots &
                W^{S_q}
            \end{bmatrix}.
        \end{equation*}
Moreover, $\tilde{\Sigma}_{t,j}^{-1}= \operatorname{Var}((U^{S_j})^{\top}X_t)^{-1}$. We observe that, because $W^{\top}U=I_p$, it holds for all $i,j\in\{1,\dots,q\}$ that $(W^{S_j})^{\top}U^{S_i}=0$, that is, the space spanned by the columns of $W^{S_j}$ is orthogonal to the space spanned by the columns of $U^{S_i}$. Moreover, the matrix $U^{S_j}(W^{S_j})^{\top}$ is an oblique projection matrix onto $\mathcal{S}_j$ along $\bigoplus_{i\in\{1,\dots,q\}\setminus\{j\}} \mathcal{S}_i$. 
Fix $j\in\{1,\ldots,q\}$. Then, using these relations we obtain that
\begin{align*}
    P_{\mathcal{S}_j\mid \mathcal{S}_{-j}}\gamma_{0,t} & =  U^{S_j}(W^{S_j})^{\top}\Sigma_t^{-1}\operatorname{Cov}(X_t,Y_t) \\
         & =   U^{S_j}(W^{S_j})^{\top}\Sigma_t^{-1}WU^{\top}\operatorname{Cov}(X_t,Y_t) \\
        & = U^{S_j}(W^{S_j})^{\top}\Sigma_t^{-1} \begin{bmatrix}
           W^{S_1} & \dots &  W^{S_q}
        \end{bmatrix} \begin{bmatrix}
           (U^{S_1})^{\top} \\ \vdots \\ (U^{S_q})^{\top}
        \end{bmatrix} \operatorname{Cov}(X_t,Y_t) \\
        & = U^{S_j}(W^{S_j})^{\top}\Sigma_t^{-1} 
           W^{S_j} 
           (U^{S_j})^{\top} \operatorname{Cov}(X_t,Y_t) \\
        & =  U^{S_j}\tilde{\Sigma}_{t,j}^{-1}\operatorname{Cov}((U^{S_j})^{\top}X_t,Y_t) \\
        & =   U^{S_j}  \operatorname{Var}((U^{S_j})^{\top}X_t)^{-1}\operatorname{Cov}((U^{S_j})^{\top}X_t,Y_t).
\end{align*}
The fourth equality follows from the block diagonal structure of $ W^{\top}\Sigma_t^{-1} W$, which implies that, for all $j\neq1$, $(W^{S_1})^{\top}\Sigma_t^{-1} W^{S_j}=0$.

(ii) By definition of $\beta^{\operatorname{inv}}$ and using point (i) of this proposition and that $\{\mathcal{S}^{\operatorname{inv}}, \mathcal{S}^{\operatorname{res}}\}$ form a $(X_t)_{t\in[n]}$-decorrelating partition, we obtain that, for all $t\in[n]$, 
\begin{equation*}
    \beta^{\operatorname{inv}} = U^{\operatorname{inv}} \operatorname{Var}((U^{S^{\operatorname{inv}}})^{\top}X_t)^{-1}(U^{\operatorname{inv}})^{\top}\operatorname{Cov}(X_t, Y_t).
\end{equation*}
We now observe that $\tilde{\beta}^{\operatorname{inv}}\coloneqq \operatorname{Var}((U^{S^{\operatorname{inv}}})^{\top}X_t)^{-1} (U^{\operatorname{inv}})^{\top}\operatorname{Cov}(X_t, Y_t)$ is the OLS solution of regressing $Y_t$ onto $(U^{\operatorname{inv}})^{\top}X_t$ (as we assume all variables have zero mean). Moreover, this quantity is constant by proj-invariance of $\mathcal{S}^{\operatorname{inv}}$. In particular, this means that $\tilde{\beta}^{\operatorname{inv}} = \argmin_{\beta\in\mathbb{R}^{\operatorname{dim}(\mathcal{S}^{\operatorname{inv}})}}\mathbb{E}[\frac{1}{n}\sum_{t=1}^n (Y_t - X_t^{\top}U^{\operatorname{inv}}\beta)^2]$, which implies that 
\begin{equation*}
    \tilde{\beta}^{\operatorname{inv}} = ((U^{\operatorname{inv}})^{\top}\overline{\operatorname{Var}}(X)U^{\operatorname{inv}})^{-1}(U^{\operatorname{inv}})^{\top}\overline{\operatorname{Cov}}(X, Y).
\end{equation*}
Therefore, $ \beta^{\operatorname{inv}} = U^{\operatorname{inv}} ((U^{\operatorname{inv}})^{\top}\overline{\operatorname{Var}}(X)U^{\operatorname{inv}})^{-1}(U^{\operatorname{inv}})^{\top}\overline{\operatorname{Cov}}(X, Y)$.

(iii) The claim follows from the definition of $\delta^{\operatorname{res}}_t$, from the fact that $\{\mathcal{S}^{\operatorname{inv}}, \mathcal{S}^{\operatorname{res}}\}$ forms a $(X_t)_{t\in[n]}$-decorrelating partition and from point (i) of this proposition.
\end{proof}

\end{document}